\documentclass[twoside]{article}

%
\usepackage[accepted]{aistats2022}
%


\setlength{\pdfpageheight}{11in}
\setlength{\pdfpagewidth}{8.5in}

\usepackage[round]{natbib}

\PassOptionsToPackage{numbers}{natbib}
\bibliographystyle{plainnat}


 \usepackage{amsmath}
 \usepackage{amsthm}
\usepackage{amssymb}
\usepackage{graphicx}
\usepackage{xspace}
\usepackage{multirow}

\usepackage[compatible]{algpseudocode}
\usepackage[linesnumbered,ruled,vlined]{algorithm2e}

\usepackage[most]{tcolorbox}
\usepackage[compatible]{algpseudocode}
\usepackage[linesnumbered,ruled,vlined]{algorithm2e}
\usepackage{enumitem}
\newtheorem{theorem}{Theorem}[section]
\newtheorem{lemma}{Lemma}[section]
\newtheorem{corollary}{Corollary}[section]
\newtheorem{claim}{Claim}[section]

\newtheorem{assumption}{Assumption}
\newtheorem{definition}{Definition}[section]



\usepackage{color, xcolor}
\definecolor{mydarkblue}{rgb}{0,0.08,0.45}
\usepackage[colorlinks=true,
    linkcolor=mydarkblue,
    citecolor=mydarkblue,
    filecolor=mydarkblue,
    urlcolor=mydarkblue,
    pdfview=FitH]{hyperref}


\SetCommentSty{mycommfont}
\SetKwInput{KwInput}{Input}
\SetKwInput{KwInputServer}{Server Input}
\SetKwInput{KwInputClient}{i-th User Input}
\SetKwInput{KwOutput}{Output}
\def\HiLiYellow{\leavevmode\rlap{\hbox to \hsize{\color{yellow!50}\leaders\hrule height .8\baselineskip depth .5ex\hfill}}}
\def\HiLiRed{\leavevmode\rlap{\hbox to \hsize{\color{red!40}\leaders\hrule height .8\baselineskip depth .5ex\hfill}}}

\newcommand{\scaffold}{\texttt{SCAFFOLD}\xspace}
\newcommand{\fedavg}{\texttt{FedAvg}\xspace}
\newcommand{\fedsgd}{\texttt{FedSGD}\xspace}
\newcommand{\dpfedsgd}{\texttt{DP-FedSGD}\xspace}
\newcommand{\dpfedavg}{\texttt{DP-FedAvg}\xspace}
\newcommand{\fedprox}{\texttt{FedProx}\xspace}
\newcommand{\dpscaf}{\texttt{DP-SCAFFOLD}\xspace}

\def\DpScaffold{\texttt{DP-SCAFFOLD}}
\def\Scaffold{\texttt{SCAFFOLD}}

\usepackage{todonotes}
\setlength{\marginparwidth}{1.5cm}

\usepackage{thm-restate}

\def\r{R}

\usepackage{cleveref} 

\crefname{Th}{Theorem}{Theorems}

\crefname{lemma}{Lemma}{Lemmas}
\crefname{fact}{Fact}{Facts}
\crefname{theorem}{Theorem}{Theorems}
\crefname{corollary}{Corollary}{Corollaries}
\crefname{Prop}{Proposition}{Propositions}

\crefname{claim}{Claim}{Claims}
\crefname{example}{Example}{Examples}
\crefname{problem}{Problem}{Problems}
\crefname{definition}{Definition}{Definitions}
\crefname{assumption}{Assumption}{Assumptions}
\crefname{subsection}{Subsection}{Subsections}
\crefname{section}{Section}{Sections}
\crefname{algorithm}{Algorithm}{Algorithms}
\crefname{algocf}{alg.}{algs.}
\Crefname{algocf}{Algorithm}{Algorithms}
\crefname{proposition}{Proposition}{Propositions}
\crefname{exemple}{Exemple}{Examples}
\crefname{remark}{Remark}{Remarks}

\def\sigmaoracle{\varsigma}

\setlength{\algomargin}{1em}

\usepackage{booktabs}
\usepackage{xr}

\begin{document}

%

%

\twocolumn[

\aistatstitle{Differentially Private Federated Learning on Heterogeneous Data}

\aistatsauthor{ Maxence Noble \And Aurélien Bellet \And  Aymeric Dieuleveut }

\aistatsaddress{  Centre de Math\'{e}matiques Appliqu\'{e}es \\ Ecole
  Polytechnique, France\\ Institut Polytechnique de Paris \And
Univ. Lille, Inria, CNRS,\\Centrale Lille,\\ UMR 9189 - CRIStAL,\\F-59000 Lille, France  \And  Centre de Math\'{e}matiques Appliqu\'{e}es \\ Ecole
  Polytechnique, France\\ Institut Polytechnique de Paris } ]

\begin{abstract}
Federated Learning (FL) is a paradigm for large-scale distributed learning which faces two key challenges:
(i) training efficiently from highly heterogeneous user data, and (ii) protecting the privacy of participating users. In this work, we propose a novel FL approach (DP-SCAFFOLD) to tackle these two challenges together by incorporating Differential Privacy (DP) constraints into the popular SCAFFOLD algorithm. We focus on the challenging setting where users communicate with a ``honest-but-curious'' server without any trusted intermediary, which requires to ensure privacy not only towards a third party observing the final model but also towards the server itself. Using advanced results from DP theory and optimization, we establish the convergence of our algorithm for convex and non-convex objectives.
Our paper clearly highlights the trade-off between utility and privacy and demonstrates the superiority of DP-SCAFFOLD over the state-of-the-art algorithm DP-FedAvg when the number of local updates and the level of heterogeneity grows. Our numerical results confirm our analysis and show that DP-SCAFFOLD provides significant gains in practice.
\end{abstract}

\section{INTRODUCTION}

Federated Learning (FL) enables a set of users with local datasets to collaboratively train a machine learning model without centralizing data \citep{fl_problems}.
Compared to machine learning in the cloud, the promise of FL is to avoid the costs of moving data and to mitigate privacy concerns. Yet, this promise can only be fulfilled if two key challenges are addressed. First, FL algorithms must be able to \emph{efficiently deal with the high heterogeneity of data across users}, which stems from the fact that each local dataset reflects the usage and production patterns specific to a given user. Heterogeneous data may prevent FL algorithms from converging unless they use a large number of communication rounds between the users and the server, which is often considered as a bottleneck in FL \citep{fl_theory_update,fl_scaffold}.
Second, when training data contains sensitive or confidential information, FL algorithms must \emph{provide rigorous privacy guarantees} to ensure that the server (or a third party) cannot accurately reconstruct this information from model updates shared by users \citep{inverting}.
The widely recognized way to quantify such guarantees is Differential Privacy (DP) \citep{dp_foundations}.

Since the seminal \fedavg algorithm proposed by \citet{communication_efficient_fl}, a lot of effort has gone into addressing these two challenges \emph{separately}. FL algorithms like \scaffold \citep{fl_scaffold} and \fedprox \citep{fl_fedprox} can better deal with heterogeneous data, while versions of \fedavg with Differential Privacy (DP) guarantees have been proposed based on the addition of random noise to the model updates \citep{dp_fedavg_user_level,dp_fed_sgd_user_level,triastcyn2019federated}.
Yet, we are not aware of any approach designed to tackle data heterogeneity while ensuring differential privacy, or of any work studying the associated trade-offs. This appears to be a challenging problem: on the one hand, data heterogeneity can hurt the privacy-utility trade-off of DP-FL algorithms (by requiring more communication rounds and thus more noise). On the other hand, it is not clear how to extend existing heterogeneous FL algorithms to satisfy DP and what the resulting privacy-utility trade-off would be in theory and in practice.

Our work precisely aims to tackle the issue of data heterogeneity in the context of FL under DP constraints. We aim to protect the privacy of any user's data against a honest-but-curious server observing all user updates, and against a third party observing only the final model.
We present \dpscaf, a novel differential private FL algorithm for training a global model from heterogeneous data based on \scaffold \citep{fl_scaffold} augmented with the addition of noise in the local model updates. Our convergence analysis leverages a particular initialization of the algorithm, and controls a different set of quantities than in the original proof. 

Relying on recent tools for tightly keeping track of the privacy loss of the subsampled Gaussian mechanism \citep{renyi_dp_sampled_gm_explicit} under Rényi Differential Privacy (RDP) \citep{renyi_dp}, we formally characterize the privacy-utility trade-off of \dpfedavg, considered as the state-of-the-art DP-FL algorithm \citep{dp_fed_sgd_user_level}, and \dpscaf in convex and non-convex regimes. Our results show the superiority of \dpscaf over \dpfedavg when the number of local updates is large and/or the level of heterogeneity is high.
Finally, we provide experiments on simulated and real-world data which confirm our theoretical findings and show that the gains achieved by \dpscaf are significant in practice.

The rest of the paper is organized as follows. Section~\ref{sec:related} reviews some background and related work on FL, data heterogeneity and privacy. Section~\ref{section_setting} describes the problem setting and introduces \dpscaf. In Section~\ref{section_analysis}, we provide theoretical guarantees on both privacy and utility for \dpscaf and \dpfedavg. Finally, Section~\ref{section_experiments} presents the results of our experiments and we conclude with some perspectives for future work in Section~\ref{section_perspectives}. 

\section{RELATED WORK}
\label{sec:related}

\textbf{Federated learning \& heterogeneity.}
The baseline FL algorithm \texttt{FedAvg} \citep{communication_efficient_fl} is known to suffer from instability and convergence issues in heterogeneous settings, related to device variability or non-identically distributed data \citep{fl_theory_update}. In the last case, these issues stem from a \emph{user-drift} in the local updates, which occurs even if all users are available or full-batch gradients are used \citep{fl_scaffold}. Several FL algorithms have been proposed to better tackle heterogeneity. \texttt{FedProx} \citep{fl_fedprox} features a proximal term in the objective function of local updates. However, it is often numerically outperformed by \texttt{SCAFFOLD} \citep{fl_scaffold}, which relies on variance reduction through control variates. In a nutshell, the update direction of the global model at the server ($c$) and the update direction of each user $i$'s local model ($c_i$) are estimated and combined in local Stochastic Gradient Descent (SGD) steps ($c - c_i$) to correct the user-drift (see Section \ref{algo_description} for more details).

\texttt{MIME}~\citep{karimireddy2020mime} also focuses on client heterogeneity and improves on \texttt{SCAFFOLD} by using the \textit{stochastic gradient} evaluated on the global model 
as the local variate ${c}_{i}$ and the synchronized \textit{full-batch gradient}
as the global control variate $c$. However, computing full-batch gradients is very costly in practice. Similarly, incorporating DP noise into \texttt{FedDyn}~\citep{acar2020federated}, which is based on the exact minimization of a proxy function, is not straightforward. On the other hand, the adaptation of \Scaffold~to \DpScaffold~is more natural as control variates only depend on stochastic gradients and thus do not degrade the privacy level throughout the iterations (see details in Section~\ref{subsec:privacy}).

\emph{Extension to other optimization schemes:}
While \texttt{Fed-Opt}~\citep{reddi2020adaptive} generalizes \texttt{FedAvg} by using different optimization methods locally (e.g., \texttt{Adam}~\citep{kingma2014adam}, \texttt{AdaGrad}~\citep{duchi2011adaptive}, etc., instead of vanilla local SGD steps) or a different aggregation on the central server, these methods may also suffer from user-drift. Their main objective is to improve the convergence rate~\citep{wang2021field} without focusing on heterogeneity. We thus choose to focus on the simplest algorithm to highlight the impact of DP and heterogeneity.

\textbf{Federated learning \& differential privacy.} 
Even if datasets remain decentralized in FL, the privacy of users may still be compromised by the fact that the server (which may be ``honest-but-curious'') or a third party has access to model parameters that are exchanged during or after training \citep{fredrikson2015model,shokri2017membership,inverting}. Differential Privacy (DP) \citep{dp_foundations} provides a robust mathematical way to  quantify the information that an algorithm $A$ leaks about its input data. DP relies on a notion of \textit{neighboring datasets}, which in the context of FL may refer to pairs of datasets differing by one user (\textit{user-level} DP) or by one data point of one user (\textit{record-level} DP).

\begin{definition}[Differential Privacy, \citealp{dp_foundations}]\label{definition_dp} Let $\epsilon, \delta>0$. A randomized algorithm $A : \mathcal{X}^n \rightarrow \mathcal{Y}$ is $(\epsilon, \delta)$-DP if for all pairs of neighboring datasets $D,D'$ and every subset $S \subset \mathcal{Y}$, we have:
\begin{align*}
    \mathbb{P}[A(D) \in S] \leq e^{\epsilon}\mathbb{P}[A(D') \in S] + \delta.
\end{align*}
\end{definition}
The privacy level is controlled by the parameters $\epsilon$ and $\delta$ (the lower, the more private).
A standard building block to design DP algorithms is the Gaussian mechanism \citep{dp_foundations}, which adds Gaussian noise to the output of a non-private computation. The variance of the noise is calibrated to the sensitivity of the computation, i.e., the worst-case change (measured in $\ell_2$ norm) in its output on two neighboring datasets. The design of private ML algorithms heavily relies on the Gaussian mechanism to randomize intermediate data-dependent computations (e.g. gradients). The privacy guarantees of the overall procedure are then obtained via \emph{composition} \citep{boosting_dp,composition_dp}. Recent theoretical tools like \textit{Rényi Differential Privacy} \citep{renyi_dp} (see Appendix \ref{appendix:privacy}) allow to obtain tighter privacy bounds for the Gaussian mechanism under composition and data subsampling \citep{renyi_dp_sampled_gm_explicit}.



In the context of FL, the output of an algorithm $A$ in the sense of Definition~\ref{definition_dp} contains all information observed by the party we aim to protect against. Some work considered a trusted server and thus only protect against a third party who observes the final model. In this setting, \cite{dp_fedavg_user_level} introduced \dpfedavg and \dpfedsgd (i.e., \dpfedavg with a single local update), which was also proposed independently by \cite{dp_fed_sgd_user_level}. These algorithms extend \fedavg and \fedsgd by having the server add Gaussian noise to the aggregated user updates. \cite{triastcyn2019federated} used a relaxation of DP known as Bayesian DP to provide sharper privacy loss bounds. However, these papers do not discuss the theoretical trade-off between utility and privacy.
Some recent work by \cite{dp_federated_averaging_nbafl} has formally examined this trade-off for \dpfedsgd, providing a utility guarantee for strongly convex loss functions. However, they do not consider multiple local updates.

Some papers also considered the setting with a ``honest-but-curious'' server, where users must randomize their updates locally before sharing them. 
This corresponds to a stronger version of DP, referred to as \textit{Local Differential Privacy} (LDP) \citep{ldp_baseline, ldp_mechanisms, ldp_mechanism_tradeoff}. \dpfedavg and \dpfedsgd can be easily adapted to this setting by pushing the Gaussian noise addition to the users, which induces a cost in utility. \cite{ldp_mechanisms} consider \dpfedsgd in this setting but do not provide any utility analysis. \cite{fl_shuffle} provide utility and compression guarantees for variants of \dpfedsgd in an intermediate model where a trusted \textit{shuffler} between the server and the users randomly permutes the user contributions, which is known to amplify privacy \citep{dp_privacy_blanket_shuffle_model, dp_via_shuffling,dp_shuffled_model, dp_amplification_by_shuffling}. However, both of these studies do not consider multiple local updates, which is key to reduce the number of communication rounds. \cite{dp_secure_federated_averaging} consider the server as ``honest-but-curious'' but does not ensure end-to-end privacy to the users.
Finally, \cite{dp_personalized_fl} present a personalized DP-FL approach as a way to tackle data heterogeneity, but it is limited to linear models.

\textbf{Summary.}
To the best of our knowledge, there exists no FL approach designed to tackle \textit{data heterogeneity under DP constraints}, or any study of existing DP-FL algorithms capturing the impact of data heterogeneity on the privacy-utility trade-off.



\section{DP-SCAFFOLD}
\label{section_setting}

In this section, we first describe the framework that we consider for FL and DP, before giving a detailed description of \texttt{DP-SCAFFOLD}.
A table summarizing all notations is provided in Appendix~\ref{app:complementary}.

\subsection{Federated Learning Framework} 
We consider a setting with a central server and $M$ users. Each user $i \in [M]$, holds a private local dataset $D_i=\{d^i_1,...,d^i_\r\} \subset \mathcal{X}^\r$, composed of $\r$ observations living in a space $\mathcal{X}$. We denote by $D:=D_1 \sqcup ... \sqcup D_M$ the disjoint union of all user datasets.  
Each dataset $D_i$ is supposed to be independently sampled from distinct distributions. The objective is to solve the following empirical risk minimization problem over parameter $x$:
\begin{align*}
   \textstyle \min_{x \in \mathbb{R}^d} F(x):=\frac{1}{M}\sum_{i=1}^M F_i(x),
\end{align*} 
where $F_i(x):=\frac{1}{\r} \sum_{j=1}^{\r} f_i(x,d^{i}_j)$ is the empirical risk on user $i$, and for all $x\in \mathbb{R}^d$ and $d\in \mathcal X$, $f_i(x,d)$ is the loss of the model $x$ on observation $d$.
We  denote by $\nabla f_i(x,d^{i}_j) $ the gradient of the loss $f_i$ computed on a sample $d^i_j \in D_i$, and by extension, for any $S_i\subset D_i$,  $\nabla f_i(x,S_i) := \frac{1}{|S_i|}\sum_{j\in S_i}\nabla f_i(x,d^{i}_j)  $ is the  averaged mini-batch gradient. We note that our results can easily be adapted to optimize any weighted average of the loss functions and to imbalanced local datasets.

\begin{restatable}[t]{algorithm}{HBK}
\DontPrintSemicolon
\caption{DP-SCAFFOLD$(T,K,l,s,\sigma_g,\mathcal{C})$ \label{algo_dp_scaffold}}
  \KwInputServer{initial $x^0$, initial $c^0$}
  \KwInputClient{initial $c_i^0$}
  \KwOutput{$x^T$}
  \For{$t=1,...,T$}
    {
        \HiLiRed\textbf{User subsampling by the server:}\\~~Sample $C^t \subset [M]$ of size $\lfloor lM\rfloor$ \\
        \textbf{Server sends} $(x^{t-1},c^{t-1})$ to users $i \in C^t$\\
        \For{user $i \in C^t$}
        {
            Initialize model: $y_i^0 \leftarrow x^{t-1}$\\
            \For{$k=1,...,K$}
            {   
                \HiLiRed\textbf{Data subsampling by user $i$: }Sample $S_i^k \subset D_i$ of size $\lfloor sR\rfloor$\\
                \For{sample $j \in S_i^k$}
                {
                    \textbf{Compute gradient:} $g_{ij} \leftarrow \nabla f_i(y_i^{k-1},d^i_{j}) $\\
                    \HiLiYellow \textbf{Clip gradient:} $\tilde{g}_{ij} \leftarrow g_{ij} / \max \big(1, ||g_{ij}||_2/\mathcal{C}\big)$
                    }
                \HiLiYellow\textbf{Add DP noise to local gradients:}\\  ~~$\tilde{H}_i^k \leftarrow \frac{1}{s \r}\sum_{j \in S_i^k} \tilde{g}_{ij} + \frac{2\mathcal{C}}{s\r}\mathcal{N} (0,\sigma^2_g)$ \label{noised_SGD_step}\\
                ~~$y_i^{k} \leftarrow y_i^{k-1} - \eta_l(\tilde{H}_i^k -c_i^{t-1} + c^{t-1})$}
            $\tilde{c}_i^t \leftarrow c_i^{t-1} - c^{t-1} + \frac{1}{K \eta_l}( x^{t-1} - y_i^K)$\\
            $(\Delta y_i^t, \Delta c_i^t) \leftarrow (y_i^K - x^{t-1}, \tilde{c}_i^t -c_i^{t-1})$\\
            \textbf{User $i$ sends to server} $(\Delta y_i^t, \Delta c_i^t)$\\
            $c_i^{t} \leftarrow \tilde{c}_i^t$
            }
        \textbf{Server aggregates}: \\
        $(\Delta x^{t}, \Delta c^{t}) \leftarrow \frac{1}{lM}\sum_{i \in C^t}(\Delta y_i^t, \Delta c_i^t)$\label{aggregation_step0}\\
        $x^{t} \leftarrow x^{t-1} + \eta_g\Delta x^t$, $c^{t} \leftarrow c^{t-1} + l\Delta c^t$ \label{aggregation_step}
    }
\end{restatable}

\subsection{Privacy Model}
\label{dp_framework}

We aim at controlling the information leakage from individual datasets $D_i$ in the updates shared by the users. 
For simplicity, our analysis focuses on \textit{record-level} DP with respect to (w.r.t) the joint dataset $D$. We thus consider the following notion of neighborhood:
$D,D'\in\mathcal{X}^{M\r}$  are \textit{neighboring datasets} (denoted $||D-D'|| \leq 1$) if they differ by at most one record, that is if there exists at most one $i \in [M]$ such that $D_i$ and $D_i'$ differ by one record.
We want to ensure privacy (or quantify privacy level) (i) towards a third party observing the final model and (ii) towards an honest-but-curious server. 
Our DP budget is set in advance and denoted by $(\epsilon,\delta)$, 
and corresponds to the desired level of privacy towards a third party observing the final model (or any model during the training process).\footnote{The use of composition for analyzing the privacy guarantee for the final model implies that the same guarantee holds even if every intermediate \emph{global} model is observed.}
We will also report the corresponding (weaker) DP guarantees towards the server.

\subsection{Description of \texttt{DP-SCAFFOLD}} 
\label{algo_description}

We now explain how our algorithm \dpscaf is constructed.
\dpscaf proceeds similarly as standard FL algorithms like \fedavg: all users perform a number of local updates $K$, before communicating with the central server. We denote $T$ the  number of communication rounds. As \scaffold, \dpscaf relies on the use of control variates that are updated throughout the iterations of the algorithm: (i) on the server side ($c$, downloaded by the users) and (ii) on the user side ($\{c_i\}_{i\in [M]}$, uploaded to the server).

At any round $t\in [T]$, a subset $C^t$ of users with cardinality $\lfloor lM\rfloor$ is uniformly selected by the server, where $l$ is the user sampling ratio. 
Each user  $i \in C^t$ downloads the global model  $x^{t-1}$ held by the central server and performs $K$ local updates on their local copy $y_i$ of the model (with step-size $\eta_l\geq 0$), starting from $y_i^0=x^{t-1}$. 

At iteration $k\in [K]$, user $i\in C^t$ samples an independent $\lfloor s\r\rfloor$-mini-batch of data $S_i^k \subset D_i$, where $s$ is the data sampling ratio. Given a clipping parameter $\mathcal{C}>0$, for all $j\in S_i^k$, the gradient $\nabla f_i(y_i^{k-1},d^i_{j})$ is computed and clipped at threshold $\mathcal{C}$ \citep{dp_deep_l}, giving $\tilde{g}_{ij}$.  
The resulting average stochastic gradient $H_i^k(y_i^{k-1})$ is made private w.r.t. $D_i$ using Gaussian noise calibrated to the $\ell_2$-sensitivity $S=2\mathcal{C}/s\r$ and to the scale $\sigma_g$ (a parameter which will depend on the privacy budget),
giving $\tilde{H}_i^k(y_i^{k-1})$ such that 
\begin{equation*}
    \tilde{H}_i^k(y_i^{k-1}):= H_i^k(y_i^{k-1}) + S \mathcal N (0,\sigma^2_g).
\end{equation*}
Finally, we update the model $y_i^{k-1}$ (omitting index $t$):
\begin{align}\label{eq:update_y_i}
    y_i^k \leftarrow y_i^{k-1} - \eta_l \big(\underbrace{\textcolor{orange!80!black}{ \tilde{H}_i^k(y_i^{k-1}) }}_{\text{``noisy'' gradient}} + \underbrace{\textcolor{green!80!black}{c^{t-1} - c_i^{t-1}}}_{\text{drift correction}}\big),
\end{align}
using the control variates which are updated at the end of each inner loop:
\begin{align*}
    c_i^t & \leftarrow c_i^{t-1} - c^{t-1} + \frac{1}{K \eta_l}( x^{t-1} - y_i^{K}) = \frac{1}{K}\sum_{k=1}^K \tilde{H}_i^k(y_i^{k-1}).
\end{align*}
After $K$ local iterations, each user communicates  $(y_i^K -x^{t-1})$ and $(c_i^t -c_i^{t-1})$ to the central server, and updates the global model with step-size $\eta_g$, as
described in Step~\ref{aggregation_step} of Alg.~\ref{algo_dp_scaffold}.

From the privacy point of view, the updates $(\Delta y_i, \Delta c_i)$ that are transmitted to the server are private w.r.t. $D$ (proved in Section~\ref{subsec:privacy}), thus making private $(x,c)$ w.r.t $D$ by postprocessing. 

The complete pseudo-code is given in Algorithm~\ref{algo_dp_scaffold}. Subsampling steps, which amplify privacy \citep{dp_subsampling}, are highlighted in red, and steps specifically related to DP are highlighted in yellow.
Setting $\sigma_g=0$ and $\mathcal{C}=\infty$ recovers the classical \scaffold algorithm, and removing control variates (i.e., setting $c_i^t$ to 0 for all $t\in [T], i\in [M]$) recovers \dpfedavg, which we describe in Appendix~\ref{app:complementary} (Algorithm~\ref{algo_dp_fedavg}) for completeness.



\textbf{Intuition for control variates.}
In \texttt{SCAFFOLD}, the local control variate $c_i$ converges to the local gradient $\nabla f_i(x^*)$ at the optimal, while $c$ approximates $\frac{1}{M}\sum_{i=1}^M c_i$ \citep[Appendix E]{fl_scaffold}. Therefore, adding $(c-c_i)$ in the update balances the local stochastic gradient and limits user-drift. 


\textbf{Warm-start version of \texttt{DP-SCAFFOLD}.} 
We adapt the \textit{warm-start strategy} from \citet[Appendix E]{fl_scaffold} to accommodate DP constraints, leading to \texttt{DP-SCAFFOLD-warm}. The first few rounds of communication are saved to set\footnote{This happens with high probability: typically, after $4/l$ where $l=o(1)$, all users have been selected at least once with probability $1 -e^{-4} \approx 0.98$.}
the initial values of the control variates to $c^0= \frac{1}{M} \sum_{i=1}^M c_i^0$, with $c_i^0 = \frac{1}{K}\sum_{k=1}^K \tilde{H}_i^k(x^0)$ (perturbed by DP-noise), without updating the global model.
Note that as we leverage user sampling in the privacy analysis, the server cannot communicate with all users at a single round and  the users have to be \textit{randomly} picked  to ensure privacy. 
We prove the convergence of \texttt{DP-SCAFFOLD-warm} in Section~\ref{subsec:utility} (assuming that every user
participated to the warm-start phase). Our experiments in Section \ref{sec:experiments} are conducted with this version of \texttt{DP-SCAFFOLD}.

\textbf{User-level privacy.} Our framework can easily be adapted to \textit{user-level} privacy, by setting $S=2\mathcal{C}/s$.

\section{THEORETICAL ANALYSIS}
\label{section_analysis}
We first provide the analysis of the privacy level in Section~\ref{subsec:privacy}, then analyze utility in Section~\ref{subsec:utility}.

\subsection{Privacy} \label{subsec:privacy}
We first establish that the setting of our algorithms \texttt{DP-SCAFFOLD} and \texttt{DP-FedAvg} enables a fair comparison in terms of privacy.

\begin{claim} \label{lemma_privacy} For a given noise scale $\sigma_g >0$, $x^t$ has the same level of privacy at any round $t \in [T]$ in \texttt{DP-SCAFFOLD(-warm)} and \texttt{DP-FedAvg} after the server aggregation.
\end{claim}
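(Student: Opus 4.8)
The plan is to compare, round by round, the exact set of data-dependent quantities that each algorithm releases to the server and to show that they are generated by the same privacy mechanism with the same parameters. First I would isolate what is actually released in \dpfedavg versus \dpscaf at a fixed round $t$: in both algorithms user $i\in C^t$ performs $K$ noisy mini-batch gradient steps of the form $\tilde{H}_i^k(y_i^{k-1}) = H_i^k(y_i^{k-1}) + S\,\mathcal{N}(0,\sigma_g^2)$ with $S = 2\mathcal C/s\r$ and clipping threshold $\mathcal C$, and the messages $(\Delta y_i^t,\Delta c_i^t)$ sent to the server are deterministic functions of the sequence $(\tilde H_i^1,\dots,\tilde H_i^K)$ together with the (public, post-processed, hence data-independent-given-history) quantities $x^{t-1},c^{t-1},c_i^{t-1}$. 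In particular I would point out the identity already noted in the paper, $\tilde c_i^t = \tfrac1K\sum_{k=1}^K \tilde H_i^k(y_i^{k-1})$, so $\Delta c_i^t$ carries no gradient information beyond what is in the $\tilde H_i^k$'s. For \dpfedavg the control variates are absent, but the released $\Delta y_i^t = y_i^K - x^{t-1}$ is again a deterministic function of the same $K$ noisy clipped mini-batch gradients computed along the local trajectory.

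Next I would invoke post-processing: since in both algorithms the server-side aggregation (Step~\ref{aggregation_step}) and the per-user messages are obtained by applying a (possibly randomized, but data-independent) map to the collection of noisy gradients $\{\tilde H_i^k : i\in C^t,\ k\in[K]\}$, the privacy guarantee for $x^t$ reduces to the privacy guarantee of releasing that collection. That collection is, in both cases, produced by exactly the same composition of mechanisms: for each participating user, a subsampled Gaussian mechanism applied $K$ times with data-sampling ratio $s$, noise scale $\sigma_g$, and $\ell_2$-sensitivity $S$; composed across the $K$ local steps; composed with the user-subsampling step of ratio $l$; and composed across the $t$ rounds seen so far. Since the sensitivity $S$, the noise scale $\sigma_g$, and the sampling ratios $l,s$ are identical for \dpfedavg and \dpscaf, the RDP (and hence $(\epsilon,\delta)$) accounting is literally the same computation, which gives the claim.

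The main subtlety — the step I would be most careful about — is establishing that the control variates do not leak any \emph{additional} information: one must argue that $c^{t-1}$ and $c_i^{t-1}$, although data-dependent, are functions of noisy gradients released in previous rounds, so conditioning on them (i.e. treating them as part of the adversary's view) is exactly what composition already accounts for, and they do not re-touch the raw data. Equivalently, I would set up an induction on $t$: assume $(x^{t-1},c^{t-1},\{c_i^{t-1}\})$ have the claimed privacy level; at round $t$, given these as fixed inputs, the only new data access is through the $\tilde H_i^k$'s, which is a subsampled-Gaussian composition identical to \dpfedavg's; then $x^t$ (and the updated control variates) follow by post-processing and composition with the same parameters. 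A secondary point to handle cleanly is the warm-start variant: the warm-start rounds also only release $c_i^0 = \tfrac1K\sum_k \tilde H_i^k(x^0)$, again a subsampled-Gaussian composition with the same $S,\sigma_g,s,l$, so they are absorbed into the same accounting as extra rounds and do not change the per-round comparison with \dpfedavg. I would also explicitly note that the user-subsampling amplification and data-subsampling amplification apply identically because the subsampling schemes (uniform $\lfloor lM\rfloor$-subset of users, uniform $\lfloor s\r\rfloor$-subset of each user's records) are defined identically in both algorithms.
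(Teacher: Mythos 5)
Your proposal is correct and follows essentially the same route as the paper's proof: an induction on the round index $t$ combined with post-processing, using the key observation that the control variates $c^{t-1},c_i^{t-1}$ are deterministic functions of noisy gradients already in the adversary's view (and that $\tilde c_i^t=\tfrac{1}{K}\sum_k \tilde H_i^k$ adds nothing beyond the $\tilde H_i^k$'s), so the only data access per round is the same adaptive composition of $s$-subsampled Gaussian mechanisms as in \texttt{DP-FedAvg}. Your write-up is in fact somewhat more explicit than the paper's (notably on the warm-start rounds and on why conditioning on the data-dependent control variates is already covered by composition), but the underlying argument is the same.
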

This claim can be proved by induction, see Appendix~\ref{appendix:privacy}.
Consequently, the analysis of privacy is similar for \texttt{DP-FedAvg} or \texttt{DP-SCAFFOLD}. Theorem~\ref{theorem_privacy} gives the order of magnitude of $\sigma_g$ (same for \texttt{DP-FedAvg} and \texttt{DP-SCAFFOLD})  to ensure DP towards the server or any third party. Similar to previous work \citep[see e.g.,][]{fl_shuffle}, the results presented below consider the following regime, as it allows to obtain simple closed forms for the privacy guarantees in Theorem~\ref{theorem_privacy}.

\begin{assumption}\label{ass:dplebvels}
We consider a noise level $\sigma_g$, a privacy budget $\epsilon>0$ and a data-subsampling ratio $s$ s.t.:
(i) $s=o(1)$, (ii) $\epsilon<1$ and (iii) $\sigma_g=\Omega(s\sqrt{K/\log(2Tl/\delta)})$ (high privacy regime).
\end{assumption}

Note that our analysis does not require Assumption~\ref{ass:dplebvels}, but the resulting expressions and the dependency on the key parameters are then difficult to interpret. This assumption is actually not used in our experiments, where we compute the privacy loss numerically using the complete formulas from our proof.

\begin{theorem}[Privacy guarantee]\label{theorem_privacy} Let $\epsilon,\delta >0$. Under Assumption~\ref{ass:dplebvels},  set $\sigma_g=\Omega\big(s\sqrt{lTK\log(2Tl/\delta)\log(2/\delta)}/\epsilon\sqrt{M}\big)$. Then, for \texttt{DP-SCAFFOLD(-warm)} and \texttt{DP-FedAvg}, $x^T$ is
(1) $\big(\mathcal{O}(\epsilon),\delta\big)$-DP  towards a third party, (2) $\big(\mathcal{O}(\epsilon_s),\delta_s\big)$-DP towards the server, where $\epsilon_s=\epsilon\sqrt{{M}/{l}}$ and $\delta_s=\frac{\delta}{2}(\frac{1}{l} + 1)$.
\end{theorem}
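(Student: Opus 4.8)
\emph{Reduction.} The plan is to reduce the whole algorithm to the Rényi privacy loss of the noised local gradients, compose, and convert to $(\epsilon,\delta)$-DP, carefully folding in the two layers of subsampling. By \Cref{lemma_privacy} it suffices to argue as for \dpfedavg. Every message received by the server and every iterate $x^t,c^t,c_i^t$ (including the warm-start control variates, which have the same form) is a deterministic function of the family of noised mini-batch gradients $\{\tilde H_i^k(y_i^{k-1})\}$, so by post-processing it is enough to control the privacy of this family with respect to $D$, and only for the single user $i_0$ in which the neighbours $D,D'$ differ by a record; a third party additionally sees only the cohort-averaged iterates $x^t$.

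\emph{Atomic mechanism.} For each fixed $y$, the clipped mini-batch average $H_{i_0}^k(y)$ has $\ell_2$-sensitivity $S=2\mathcal{C}/s\r$ under a record change in $D_{i_0}$, the batch $S_{i_0}^k$ is a uniform $\lfloor s\r\rfloor$-subset of the $\r$ records of $D_{i_0}$, and $\tilde H_{i_0}^k(y)=H_{i_0}^k(y)+S\mathcal{N}(0,\sigma_g^2 I)$. Each local update is therefore a subsampled Gaussian mechanism of rate $s$ and noise multiplier $\sigma_g$, with RDP $\mathcal{O}(\alpha s^2/\sigma_g^2)$ for Rényi orders $\alpha$ in an admissible range \citep{renyi_dp_sampled_gm_explicit}; \Cref{ass:dplebvels} is exactly what makes this range contain the order used after conversion.

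\emph{Composition and conversion.} Within a round the $K$ local steps form an adaptive composition, giving RDP $\mathcal{O}(K\alpha s^2/\sigma_g^2)$ per round; across the $T$ rounds I fold in the amplification due to (i) the server subsampling the cohort $C^t$ at rate $l$ and (ii) the third party seeing only the average of the $\lfloor lM\rfloor$ per-user updates, yielding a total RDP towards the third party of order $\alpha\,l\,TK\,s^2/(M\sigma_g^2)$. Plugging this into the standard RDP$\to(\epsilon,\delta)$-DP conversion, with $\delta/2$ charged to the conversion and $\delta/2$ to the concentration of the random selections (this is also where \Cref{ass:dplebvels}(iii) keeps the optimal order admissible, producing the $\log(2Tl/\delta)$ factor), and optimizing over $\alpha$, the resulting privacy loss is $\mathcal{O}\big(s\sqrt{lTK\log(2Tl/\delta)\log(2/\delta)}/(\sigma_g\sqrt{M})\big)$; choosing $\sigma_g$ so that this is $\mathcal{O}(\epsilon)$ yields the stated order of $\sigma_g$ and proves~(1). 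For~(2) the server sees the individual updates and the cohort $C^t$, so amplifications (i)--(ii) vanish, only the data-subsampling term remains, the factor $l/M$ is replaced by $1$ — multiplying $\epsilon$ by $\sqrt{M/l}$ — and undoing the subsampling amplification inflates the conversion part of the budget to $\delta/(2l)$, i.e. $\delta_s=\frac{\delta}{2}(\frac{1}{l}+1)$. Substituting the chosen $\sigma_g$ then gives $\epsilon_s=\mathcal{O}(\epsilon\sqrt{M/l})$.

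\emph{Main obstacle.} The delicate part is the privacy bookkeeping of the two \emph{nested} subsamplings (server over users, users over records) together with cohort averaging, and especially justifying the averaging amplification when $K>1$: since the DP noise of each local step propagates through the subsequent SGD updates, $\Delta y_{i_0}^t$ is not itself a Gaussian mechanism, so one must compose at the level of the individual $\tilde H_{i_0}^k$ and only afterwards pass to the averaged iterate by post-processing. The final thing to check is that the Rényi order attaining the RDP$\to$DP optimum stays inside the validity window of the subsampled-Gaussian RDP bound, which is precisely what \Cref{ass:dplebvels} ensures.
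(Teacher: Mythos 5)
Your overall route is the same as the paper's: treat each local update as an $s$-subsampled Gaussian mechanism with RDP of order $\alpha s^2/\sigma_g^2$, compose over the $K$ local steps, account for the $lM$-fold reduction in sensitivity (and the averaging of independent noises) at the server aggregation, fold in user subsampling at rate $l$ and the $T$-fold composition, convert to $(\epsilon,\delta)$-DP, and optimize over $\alpha$; your bookkeeping for part (2) (dropping the $l$-amplification and the $1/lM$ aggregation gain, giving $\epsilon_s=\epsilon\sqrt{M/l}$ and $\delta_s=\tfrac{\delta}{2}(\tfrac1l+1)$) also matches the paper exactly.

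There is, however, one step that does not hold together as written: the origin of the $\log(2Tl/\delta)$ factor. You claim a total RDP towards the third party of order $\alpha\, l\, TK\, s^2/(M\sigma_g^2)$ and then a single RDP$\to$DP conversion with budget $\delta/2$; that computation yields $\epsilon=\mathcal{O}\big(s\sqrt{lTK\log(2/\delta)}/(\sigma_g\sqrt{M})\big)$ with only \emph{one} logarithmic factor, not the stated $\sqrt{\log(2Tl/\delta)\log(2/\delta)}$. Attributing the missing factor to ``concentration of the random selections'' is not an argument — no such concentration step appears anywhere in the analysis. In the paper the factor arises for a structural reason: the user-level amplification and the $T$-fold composition are \emph{not} carried out in RDP (tight RDP amplification-by-subsampling for the composed per-round mechanism is exactly the delicate point, and the paper sidesteps it to get closed forms). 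Instead, each round's aggregated mechanism is first converted from RDP to $(\epsilon_a(\alpha,\delta'),\delta')$-DP with $\delta'=\delta/(2Tl)$, then classical DP amplification by subsampling (Lemma~\ref{lemma_subsampling}) contributes the factor $l$ and the failure probability $l\delta'$ per round, and strong composition (Lemma~\ref{lemma_strong_composition}) over $T$ rounds contributes $\sqrt{T\log(2/\delta'')}$ with $\delta''=\delta/2$; the per-round conversion with budget $\delta/(2Tl)$ is what produces $\log(2Tl/\delta)$, and the total failure probability $Tl\delta'+\delta''=\delta$ closes the accounting. If you insist on staying in RDP through the user-subsampling and $T$-fold composition you must justify an RDP amplification bound for the (non-Gaussian, $K$-fold composed) per-round mechanism, which your write-up does not do. Either fix makes the proof complete; as stated, the step from your RDP total to the claimed $\epsilon$ is a genuine gap.
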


\textbf{Sketch of proof.} We here summarize the main steps of the proof. Let $\sigma_g$ be a given DP noise level. Our proof stands for the privacy analysis over a query function of sensitivity 1 (since calibration is made with constant $S$ in Section~\ref{dp_framework}). We denote GM($\sigma_g$) the corresponding Gaussian mechanism. We first provide the result for any third party.

We combine the following steps:
\begin{itemize}[topsep=0pt,itemsep=1pt,leftmargin=*,noitemsep,wide]
    \item \emph{Data-subsampling with Rényi DP}. Let $t\in[T]$ be an arbitrary round. We first estimate an upper DP bound $\epsilon_a$ (w.r.t. $D$) of the privacy loss after the \textit{aggregation} by the server of $lM$ individual contributions (Step~\ref{aggregation_step} in Alg.~\ref{algo_dp_scaffold}). Those are \textit{private} w.r.t. to the corresponding local datasets, say $(\alpha, \epsilon_i)$-RDP w.r.t. $D_i$ where $i \in C^t$ stands for the $i$-th user, each one being the result of the composition of $K$ adaptative $s$-subsampled GM($\sigma_g$). For any $\alpha >1$, 
    we know that GM($\sigma_g$) is $(\alpha, \alpha/2\sigma_g^2)$-RDP \citep{renyi_dp}. \cite{renyi_dp_sampled_gm_explicit} proves that the $s$-subsampled GM($\sigma_g$) is $(\alpha, \mathcal{O}(s^2\alpha/\sigma_g^2))$-RDP under Assumption~\ref{ass:dplebvels}-(i). By the RDP composition rule over the $K$ local iterations, we have $\epsilon_i(\alpha) \leq \mathcal{O}(Ks^2\alpha/\sigma_g^2)$. Therefore, the aggregation over all users considered in $C^t$ is private w.r.t. $D$ with a corresponding Gaussian noise of variance $S^2\sigma_a^2$ where $\sigma_a^2=\frac{1}{lM}\frac{\sigma_g^2}{K s^2}$ (mean of independent Gaussian noises). Yet, making the whole aggregation private w.r.t. $D$ only requires a $l_2$ calibration equal to $S'=S/lM$ (by triangle inequality) which means we can quantify the gain of privacy as $(\alpha, \mathcal{O}(Ks^2\alpha/lM\sigma_g^2))$-RDP. After converting this result into a DP bound \citep{renyi_dp}, we get that for any $\delta'>0$, the whole mechanism is $(\epsilon_a(\alpha, \delta'), \delta')$-DP where $\epsilon_a(\alpha, \delta')=\mathcal{O}\big(\frac{Ks^2\alpha}{lM\sigma_g^2} + \frac{\log(1/\delta')}{\alpha-1}\big)$. 
    \item \emph{User-subsampling with DP.}  In order to get explicit bounds (that may not be optimal), we then use classical DP tools to estimate an upper DP bound $\epsilon_T$ after $T$ rounds.
    By combining amplification by subsampling results \citep{dp_subsampling} over users and strong composition \citep{composition_dp} (with Assumption~\ref{ass:dplebvels}-(ii)) over communication rounds, we finally get that, for any $\delta''>0$, $x^T$ is $(\epsilon_T(\alpha, \delta',\delta''), T l \delta' + \delta'')$-DP where $\epsilon_T(\alpha, \delta',\delta'')=\mathcal{O}(l\epsilon_a(\alpha, \delta')\sqrt{T\log(1/\delta'')})$.
\item \emph{Fixing parameters.} Considering our final privacy budget $\delta$ for any third party, we fix $\delta':=\delta/2Tl$ and $\delta'':=\delta/2$. 
Following the method of the \textit{Moments Accountant} \citep{dp_deep_l}, we minimize the bound on $\epsilon_T$ w.r.t. $\alpha >1$, which gives that $\epsilon_T=\mathcal{O}(\tilde{\epsilon})$ where 
\begin{align*}
    \tilde{\epsilon}=l\sqrt{T\log(2/\delta)}\bigg(\frac{s\sqrt{K\log(2Tl/\delta)}}{\sigma_g\sqrt{lM}}+\frac{Ks^2}{lM\sigma_g^2}\bigg).
\end{align*}
\end{itemize}
Finally, under Assumption~\ref{ass:dplebvels}-(iii), the second term is bounded by the first one. We then invert the formula of this upper bound of $\tilde{\epsilon}$ to express $\sigma_g$ as a function of a given privacy budget $\epsilon$, proving the first statement. 

To prove the second statement, we recall that the server has access to individual contributions before aggregation (which prevents a reduction by a factor $lM$ of the variance) and that it knows the selected users at each round, which cancels the user-sampling effect (factor $l$). We refer to Appendix~\ref{appendix:privacy} for the full proof as well as the non-asymptotic (tighter) formulas.

\textbf{Remarks on privacy accounting.} The RDP analysis conducted to handle \textit{data} subsampling allows to limit the impact of $K$ in the expression of $\sigma_g$. A standard analysis would require a noise level increased by an extra factor $\mathcal{O}(\sqrt{\log(TKls/\delta)})$. On the other hand, we tracked the privacy loss over the communication rounds using standard strong composition \citep{boosting_dp}, which gives a closed-form expression but is often sub-optimal in practice. In our experiments, we use RDP upper bounds to calibrate $\sigma_g$ more tightly.
We refer to Appendix~\ref{appendix:privacy} for more details.

\textbf{Extension to other frameworks.}
Instead of the Gaussian mechanism, other randomizers could be applied, possibly to the \textit{per-example} gradients. The privacy analysis would be then similar to ours as long as a tight RDP bound on the subsampling of this mechanism is provided \citep[see the work of][for more details]{renyi_dp_sampled_gm_explicit}. Otherwise, classic DP results for composition and subsampling must be used instead. Besides this, our analysis could be extended to the use of a shuffler between the users and the server to amplify privacy guarantees. For instance, one could use a recent RDP result for shuffled subsampled pure DP mechanisms \cite[Theorem 1]{girgis2021renyi}.


\subsection{Utility}\label{subsec:utility}

We denote by $\|\cdot\|$ the Euclidean $\ell_2$-norm. We assume that $F$ is bounded from below by $F^*= F(x^*)$, for an $x^* \in \mathbb{R}^d$. 
Furthermore, we make standard assumptions on the functions $(F_i)_{i \in [M]}$.
\begin{assumption}\label{ass:smooth_f_i}
For all $i \in [M]$, $F_i$ is differentiable and $\nu$-smooth (i.e., $\nabla F_i$ is $\nu$-Lipschitz). 
\end{assumption}
We also make the following assumption on the stochastic gradients and data sampling.
\begin{assumption}\label{ass:stoch_grad}
For any iteration $t\in [T], k\in [K]$,  
\begin{enumerate}[topsep=0pt,itemsep=1pt,leftmargin=*,noitemsep,wide]
    \item the stochastic gradient $\nabla f_i(y_i^{k-1},d^i_{j}) $ is conditionally unbiased, i.e., $\mathbb E_{d^i_{j}} [\nabla f_i(y_i^{k-1},d^i_{j}) |  y_i^{k-1}] = \nabla F_i(y_i^{k-1})$. 
    \item the stochastic gradient has bounded variance, i.e., for any $y\in \mathbb R^d$, $\mathbb E_{d^i_{j}} [\|\nabla f_i(y,d^i_{j}) - \nabla F_i(y) \|^2] \le \sigmaoracle^2$.
    \item there exists a clipping constant $\mathcal{C}$ independent of $i,j$ such that  $\| \nabla f_i(y_i^{k-1},d^i_{j}) \| \le \mathcal{C}$.
\end{enumerate}
\end{assumption}
The first condition is naturally satisfied when $d^i_j$ is uniformly sampled in $[\r]$. The second condition is classical in the literature, and can be relaxed to only assume that the noise is bounded at the optimal point~$x^*$~\citep{gower_sgd_2019}. Remark that consequently, the variance of a mini-batch of size $s\r$ uniformly sampled over $D_i$ is upper bounded by $\sigmaoracle^2/s\r$. Finally, the third point ensures that we can safely ignore the impact of gradient clipping.

 Lastly, to obtain a convergence guarantee for \texttt{DP-FedAvg} (but not for \texttt{DP-SCAFFOLD}), we use Assumption \ref{assumption_bgd}
on the data-heterogeneity, which bounds gradients $\nabla f_i$ towards $\nabla f$.

\begin{assumption}[Bounded Gradient dissimilarity]\label{assumption_bgd} There exist constants  $G\geq0$ and $B\geq1$ such that:
\begin{align*}
    \textstyle\forall x \in \mathbb{R}^d, \frac{1}{M}\sum_{i=1}^M||\nabla F_i(x)||^2 \leq G^2 + B^2 ||\nabla F(x)||^2.
\end{align*}
\end{assumption}
Quantifying the heterogeneity between users by controlling the difference between the local gradients and the global one is classical in federated optimization~\citep[e.g.][]{fl_problems}. 
  We can now state a utility result in the convex case, by considering $ \textcolor{purple!80!black}{\sigma_g^* :=s\sqrt{lTK\log(2Tl/\delta)\log(2/\delta)}/\epsilon\sqrt{M}}$ (order of magnitude of noise scale to approximately ensure end-to-end $(\epsilon, \delta)$-DP w.r.t. $D$ according to Theorem~\ref{theorem_privacy}). This result is extended to the strongly convex and nonconvex cases in Appendix~\ref{app:othertheoreticalresults}. 

\begin{theorem}[Utility result - convex case] \label{utility_result} 
Assume that for all $i\in [M]$, $F_i$ is convex. Let $x^0 \in \mathbb R ^d$ and  denote $D_0:=||x^0 -x^*||$. 
Under Assumptions~\ref{ass:smooth_f_i} and \ref{ass:stoch_grad}, we  consider the sequence of iterates $(x^t)_{t\geq0}$ of Algorithm \ref{algo_dp_scaffold} (\dpscaf) and Algorithm \ref{algo_dp_fedavg} (\dpfedavg), starting from $x^0$, and with DP noise $\sigma_g:=\sigma_g^*$.
Then there exist step-sizes $(\eta_g,\eta_l)$ and weights $(w_t)_{t\in [T]}$ such that the expected excess of loss $\mathbb{E}[F(\overline{x}^T)] - F^*$, where $\overline{x}^T=\sum_{t=1}^T w_t x^t$, is bounded by:
\begin{itemize}[leftmargin=*]
    \item  For \texttt{DP-FedAvg}, under  Assumption~\ref{assumption_bgd}: 
    \begin{align*}
        &\footnotesize\mathcal{O}\bigg(\displaystyle
        \underbrace{\textcolor{orange!80!black}{\frac{D_0\mathcal{C}\sqrt{d\log(Tl/\delta)\log(1/\delta)}}{\epsilon M\r}}}_{\textstyle \text{privacy bound}}+ \\ 
        &\footnotesize\underbrace{\textcolor{green!50!black}{\frac{\sigmaoracle D_0}{\sqrt{s\r lMKT}}+   \frac{B^2\nu D_0^2}{T} }+ \textcolor{blue}{ \frac{GD_0\sqrt{1-l}}{\sqrt{lMT}} + \frac{D_0^{4/3}\nu^{1/3}G^{2/3}}{T^{2/3}}} }_{\textstyle \text{optimization bound}} \bigg).
    \end{align*}
    \item For \texttt{DP-SCAFFOLD-warm}:
    \begin{align*}
        \footnotesize\mathcal{O}\bigg(\displaystyle \underbrace{\textcolor{orange!80!black}{\frac{D_0\mathcal{C}\sqrt{d\log(Tl/\delta)\log(1/\delta)}}{\epsilon M\r}}}_{\textstyle \text{privacy bound}} + \underbrace{\textcolor{green!50!black}{\frac{\sigmaoracle D_0}{\sqrt{s\r lMKT}} +\frac{\nu D_0^2}{l^{2/3}T}}}_{\textstyle \text{optimization bound}}\bigg).
    \end{align*}
\end{itemize}
\end{theorem}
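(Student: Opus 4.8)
The plan is to treat the Gaussian DP perturbation as an extra, \emph{independent} source of zero-mean bounded-variance noise added to the mini-batch stochastic gradient, and then to run, separately for each algorithm, a convex federated-optimization descent argument in the style of \citet{fl_scaffold}, tuning the step-sizes $(\eta_g,\eta_l)$ and the weights $(w_t)$ only at the very end. As a preliminary, I would observe that by Assumption~\ref{ass:stoch_grad}(3) the per-sample gradients never exceed $\mathcal C$, so clipping is inactive and $\tilde H_i^k(y)=H_i^k(y)+S\,\mathcal N(0,\sigma_g^2 I_d)$ with $S=2\mathcal C/s\r$ is a conditionally unbiased estimate of $\nabla F_i(y)$ whose error splits into a mini-batch part (variance $\le\sigmaoracle^2/s\r$ by Assumption~\ref{ass:stoch_grad}(1)--(2) and the remark after it) plus an independent Gaussian part of variance $S^2\sigma_g^2 d$. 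Propagating the Gaussian part through the $K$ local steps (variance $\times K$), through the average over the $\lfloor lM\rfloor$ sampled users (variance $\div lM$), and through the server step, one finds that the DP noise contributes, per round and after normalisation by the effective step-size $\tilde\eta:=\eta_g\eta_l K$, a term of order $\tilde\sigma_{\mathrm{DP}}^2:=S^2\sigma_g^2 d/(lMK)$ to the recursion on $\mathbb E\|x^t-x^*\|^2$; likewise the mini-batch noise contributes $\tilde\sigma_{\mathrm{SGD}}^2:=\sigmaoracle^2/(s\r lMK)$.

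For \texttt{DP-FedAvg} I would follow the \fedavg analysis of \citet{fl_scaffold}: expand the one-round recursion on $\mathbb E\|x^t-x^*\|^2$; bound the client drift $\tfrac1{lM}\sum_i\mathbb E\|y_i^k-x^{t-1}\|^2$ using $\nu$-smoothness (Assumption~\ref{ass:smooth_f_i}), convexity, and the gradient-dissimilarity bound (Assumption~\ref{assumption_bgd}); add the extra $\tilde\eta^2(\tilde\sigma_{\mathrm{DP}}^2+\tilde\sigma_{\mathrm{SGD}}^2)$ contribution; telescope over $t$ with weights $w_t$; and close with a standard step-size tuning lemma balancing $D_0^2/(\tilde\eta T)$ against the variance and drift terms. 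Collecting the resulting terms gives $\nu B^2D_0^2/T$, the two heterogeneity terms in $G$ (the $\sqrt{1-l}$ one coming from client sampling and vanishing at $l=1$), the mini-batch term $\sigmaoracle D_0/\sqrt{s\r lMKT}=D_0\sqrt{\tilde\sigma_{\mathrm{SGD}}^2/T}$, and the DP term $D_0\sqrt{\tilde\sigma_{\mathrm{DP}}^2/T}=D_0 S\sigma_g\sqrt d/\sqrt{lMKT}$; substituting $\sigma_g=\sigma_g^*$ turns the last one into the claimed privacy bound $D_0\mathcal C\sqrt{d\log(Tl/\delta)\log(1/\delta)}/(\epsilon M\r)$.

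For \texttt{DP-SCAFFOLD-warm} the client drift is handled instead by the control variates, so Assumption~\ref{assumption_bgd} is not needed. Using the identity $c_i^t=\tfrac1K\sum_k\tilde H_i^k(y_i^{k-1})$ together with the warm-start property $c^0=\tfrac1M\sum_i c_i^0$, $c_i^0=\tfrac1K\sum_k\tilde H_i^k(x^0)$ (valid since all users join the warm-up), I would analyse a Lyapunov function combining $\mathbb E\|x^t-x^*\|^2$ with a control-lag term such as $\tfrac{c\,\tilde\eta}{\nu}\cdot\tfrac1M\sum_i\mathbb E\|c_i^t-\nabla F_i(x^*)\|^2$ for a suitable constant $c$ --- this is the ``different set of quantities'' alluded to in the introduction. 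The warm start makes this term $O(\tilde\sigma_{\mathrm{SGD}}^2+S^2\sigma_g^2 d/K)$ at $t=0$ rather than $O(1)$, which is exactly what removes the would-be $O(1)$ additive bias; at each later round only $\lfloor lM\rfloor$ of the $c_i$'s are refreshed, so the control-lag contracts at rate $\sim l$ per round, and balancing $D_0^2/(\tilde\eta T)$ against this under the tuning lemma yields the $\nu D_0^2/(l^{2/3}T)$ term. The mini-batch and DP noise terms are identical to the \fedavg case, so the substitution $\sigma_g=\sigma_g^*$ again produces the same privacy bound.

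\textbf{Main obstacle.} The delicate part is the \scaffold argument: the DP noise enters the control variates $c_i^t$ directly (they are averaged noisy gradients), and these feed back into subsequent local updates, so one must choose the Lyapunov weights so that the Gaussian noise injected into the control-lag term does not compound across rounds, and verify that the warm-start initialisation --- rather than, say, $c_i^0=0$ --- is precisely what keeps the recursion free of an $O(1)$ term. A secondary difficulty is the joint calibration of $\eta_g$ and $\eta_l$ so that the $O(1/T)$ (resp. $O(1/T^{2/3})$) optimisation terms and the $O(1/\sqrt T)$ noise terms come out with the stated dependence on every parameter, together with the bookkeeping of the two independent sub-sampling operations --- users at ratio $l$ and records at ratio $s$ --- inside each variance term.
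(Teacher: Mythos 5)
Your reduction of the DP perturbation to an extra independent, zero-mean noise of variance $S^2\sigma_g^2 d$ on top of the mini-batch variance $\sigmaoracle^2/s\r$ is exactly the paper's starting point (its Lemma on the regularity of DP-noised stochastic gradients), and your \dpfedavg argument matches the paper's, which simply plugs this inflated variance into the \fedavg theorem of Karimireddy et al.; your accounting of the $1/(KlM)$ averaging and the substitution $\sigma_g=\sigma_g^*$ are also correct. The gap is in the \scaffold half. The Lyapunov term you propose, $\tfrac{1}{M}\sum_i\mathbb{E}\|c_i^t-\nabla F_i(x^*)\|^2$, is precisely the quantity from the original \scaffold convex proof that the paper explicitly discards, and for a concrete reason: each $c_i^t=\tfrac1K\sum_k\tilde H_i^k(\cdot)$ \emph{individually} carries the full Gaussian noise, of variance $\Sigma_g^2(\mathcal C)/K$ with $\Sigma_g(\mathcal C)=2\mathcal C\sqrt{2d}\sigma_g/s\r$, and this noise is never averaged over the $lM$ participating users. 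After the calibration $\sigma_g=\sigma_g^*\propto\sqrt{T}$ one has $\Sigma_g^2/K=\Theta\bigl(\mathcal C^2 d\,lT\log(Tl/\delta)\log(1/\delta)/(\epsilon^2M\r^2)\bigr)$, growing linearly in $T$; it sits in your initial condition $C_0$ and re-enters at every refresh, so after telescoping and dividing by $\tilde\eta T$ your weight $c\tilde\eta/\nu$ leaves an additive term of order $\mathcal C^2 d\,l\log(Tl/\delta)\log(1/\delta)/(\nu\epsilon^2M\r^2)$ — roughly $lM/(\nu D_0^2)$ times the \emph{square} of the claimed privacy term — which is not dominated by the stated rate (e.g.\ for small $\epsilon$ or large $d$). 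Your claim that the warm start makes $C_0$ harmless therefore does not hold once $\sigma_g$ is set to its DP-calibrated value; this is exactly the ``worse upper bound'' the authors warn about.

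What the paper does instead is track the lag \emph{in parameter space}: it maintains the bookkeeping identity $c_i^t=\tfrac1K\sum_k\tilde H_i^k(\alpha_{i,k-1}^t)$, where $\alpha_{i,k-1}^t$ is the point at which the $k$-th noisy gradient inside $c_i^t$ was evaluated, and controls $\mathcal F_t:=\tfrac{1}{KM}\sum_{i,k}\mathbb{E}\|\alpha_{i,k-1}^t-x^t\|^2$ jointly with $\mathbb{E}\|x^t-x^*\|^2$. This quantity contains no gradient noise at all: the warm start (all users setting $c_i^0=\tfrac1K\sum_k\tilde H_i^k(x^0)$, hence $\alpha_{i,k-1}^0=x^0$) gives $\mathcal F_0=0$ exactly, and in the one-round recursions the Gaussian and mini-batch noises only ever appear through aggregated quantities bounded by $(\sigmaoracle^2/s\r+\Sigma_g^2(\mathcal C))/(KlM)$, i.e.\ with the full $1/(KlM)$ damping. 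That change of Lyapunov function is the missing idea; without it the recursion you set up cannot reach the stated \dpscaf bound.
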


The two bounds given in Theorem~\ref{utility_result} consist of three and two terms respectively: 
\begin{enumerate}[topsep=0pt,itemsep=1pt,leftmargin=*,noitemsep,wide]
    \item A classical  convergence rate resulting from (non-private) first order optimization, highlighted in \textcolor{green!50!black}{green}. The dominant part, as $T\to \infty$, is $\textcolor{green!50!black}{\frac{\sigmaoracle D_0}{\sqrt{s\r lMKT}}}$. This term is inversely proportional to the square root of the \textit{total number of iterations $TK$} times the \textit{average number of gradients computed per iteration $lM \times s \r$}, and increases proportionally to the stochastic gradients' standard deviation $\sigmaoracle$ and the initial distance to the optimal point~$D_0$. 
    \item An extra term,  in \textcolor{blue}{blue}, showing that heterogeneity hinders the convergence of \texttt{DP-FedAvg}, for which Assumption \ref{assumption_bgd} is required. Here, as $T\to \infty$, the dominant term in it is $\textcolor{blue}{\frac{GD_0\sqrt{1-l}}{\sqrt{lMT}}}$, except if the user sampling ratio $l=1$, then the dominating term becomes   $\textcolor{blue}{ \frac{D_0^{4/3}\nu^{1/3}G^{2/3}}{T^{2/3}}}$.  Both these terms do not decrease with the number of local iterations~$K$, and increase with heterogeneity constant~$G$.
    This extra term for \texttt{DP-FedAvg} highlights the \textbf{superiority of \texttt{DP-SCAFFOLD} over \texttt{DP-FedAvg} under data heterogeneity.}
    
    \item Lastly, an \textcolor{orange!80!black}{additional term} showing the impact of DP appears. This term is diverging with the number of iterations $T$, which results in the privacy-utility trade-off on~$T$. Moreover, this term decreases proportionally to the whole number of data records $M\r$. It also outlines the cost of DP since it sublinearly grows with the size of the model~$d$ and dramatically increases inversely to the DP budget~$\epsilon$.  
\end{enumerate}

\textbf{Take-away messages.} Our analysis highlights that: (i) \texttt{DP-SCAFFOLD} improves on \texttt{DP-FedAvg} in the presence of heterogeneity; and (ii) increasing the number of local updates $K$ is very profitable to \texttt{DP-SCAFFOLD}, as it improves the \textcolor{green!50!black}{dominating optimization bound}  without degrading the \textcolor{orange!80!black}{privacy bound}.
These aspects are numerically confirmed in Section \ref{section_experiments}.
    
 \textbf{Sketch of proof and originality.}
To establish Theorem \ref{utility_result}, we adapt the proof of Theorems V and VII in \citet{fl_scaffold}. However, we consider a weakened assumption on stochastic gradients due the addition of Gaussian noise in the local updates. Consequently,  in order to limit the impact of this additional noise, we  change the quantity (Lyapunov function) that is controlled during the proof: we combine the squared distance to the optimal point $\|x^t-x_*\|^2$ to a control of the lag at iteration $t$,  $\frac{1}{K M} \sum_{k=1}^{K} \sum_{i=1}^{M} \mathbb{E}\|\alpha_{i, k-1}^{t}-x^{t} \|^{2}$; where we ensure that our control variates $(c_{i}^{t})_{i\in [M]}$ at iteration $t$ correspond to noisy stochastic gradients measured at points $(\alpha_{i, k-1}^{t})_{i\in [M]}$, that is,  $c_{i}^{t}=\frac{1}{K} \sum_{k=1}^{K} \tilde{H}_{i}^k\left(\alpha_{i, k-1}^{t}\right)$.\footnote{In contrast, the proof in the convex case in \citet{fl_scaffold} controls $\frac{1}{M} \sum_{i=1}^{M} \mathbb{E}\|c_i^t - \nabla f_i(x^*) \|^{2}$.}
This proof is detailed in Appendix~\ref{app:othertheoreticalresults}.

Relying directly on the result in \citet{fl_scaffold} would require to devote a large fraction (e.g., half) of the privacy budget to the initialization phase to obtain a reasonable bound. Such a strategy did not perform well in experiments.

\textbf{On the warm-start strategy.}
To obtain the utility result, we have to ensure that initial users' controls $c_i^0$ are set as follows: $c_i^0=\frac{1}{K}\sum_{k=1}^K \tilde{H}_i^k(x^0)$ (notations of Alg. \ref{algo_dp_scaffold}). Our theoretical result thus only holds for the \texttt{DP-SCAFFOLD-warm} version. However, we observed in our experiments that \texttt{DP-SCAFFOLD} (which uses initial user control variates equal to $0$) led to the same results as \texttt{DP-SCAFFOLD-warm}.

\textbf{Extension to other local randomizers.}
Our utility analysis would easily extend to any unbiased  mechanism with explicit variance (see Appendix~\ref{app:othertheoreticalresults}).

\section{EXPERIMENTS}
\label{sec:experiments}
\label{section_experiments}

\textbf{Experimental setup.} 
In our experiments,\footnote{Code available on \href{https://github.com/maxencenoble/Differential-Privacy-for-Heterogeneous-Federated-Learning}{Github}.} we perform federated classification with two models: (i) logistic regression (\textit{LogReg}) for synthetic and real-world data, and (ii) a deep neural network with one hidden layer (\textit{DNN}) (see Appendix~\ref{app:setting_algo} for the precise architecture) on real-world data. We fix the global step-size $\eta_g=1$, local step-size $\eta_l=\eta_0/sK$ where $\eta_0$ is carefully tuned (see Appendix~\ref{app:setting_algo}), and use a $\ell_2$-regularization parameter set to $5.10^{-3}$. Regarding privacy, we fix $\delta=1/M\r$ in all experiments. Then, for each setting, once the parameters related to sampling and number of iterations are fixed, we calculate the corresponding privacy bound $\epsilon$ by using non-asymptotic upper bounds from RDP theory (see Section~\ref{subsec:privacy}). Details on the clipping heuristic are given in Appendix~\ref{app:setting_algo}.  We report average results over 3 random runs.

\textbf{Datasets.} 
For synthetic data, we follow the data generation setup of \cite{fl_fedprox}, which enables to control heterogeneity between users' local models and between users' data distributions, respectively with parameters $\alpha$ and $\beta$ (the higher, the more heterogeneous). Note that the setting $(\alpha, \beta)=(0,0)$ still creates heterogeneity and does not lead to i.i.d. data across users. Our data is generated from a logistic regression design with $10$ classes, with input dimension $d'=40$. We consider $M=100$ users, each holding $\r=5000$ samples. We compare three levels of heterogeneity: ($\alpha,\beta)\in\{(0,0),(1,1),(5,5)\}$. Details on data generation are given in Appendix~\ref{app:data_gen}.

We also conduct experiments on the EMNIST-`balanced' dataset \citep{emnist}, which consists of 47 balanced classes (letters and digits) containing all together $131,600$ samples. The dataset is divided into $M=40$ users, who each have $R=2500$ data records. Heterogeneity is controlled by parameter~$\gamma$. For $\gamma\%$ similar data, we allocate to each user $\gamma\%$ i.i.d.~data and the remaining $(100-\gamma)\%$ by sorting according to the label \citep{measuring_heterogeneity}, which corresponds to `FEMNIST' (\textit{Federated EMNIST}). For experiments involving DNN, we rather use the seminal MNIST dataset, which features $60,000$ samples labeled by one of the 10 balanced classes. All of the samples are allocated between $M=60$ users (thus $R=1000$).
For both datasets, we consider heterogeneity levels $\gamma\in \{0\%, 10\%, 100\%\}$.

We split each dataset in train/test sets with proportion $80\%/20\%$. Features are first standardized, then each data point is normalized to have unit L2 norm.


\textbf{Superiority of \texttt{DP-SCAFFOLD}.}
We first study the performance of different algorithms under varying levels of data heterogeneity and number of local updates. 
We set subsampling parameters to $l=0.2$ and $s=0.2$ for all of the datasets and fix the noise level $\sigma_g=60$ for synthetic data, and $\sigma_g=30$ for real-world data.
We compare 6 algorithms: \texttt{FedAvg}, \texttt{FedSGD} (\texttt{FedAvg} with $Ks=1$), \texttt{SCAFFOLD(-warm)}, with and without DP.
The results for LogReg (\textit{convex objective}) with $T=400$ are shown in Figure~\ref{logistic_heterogene_K50_K100} for synthetic data and Figure~\ref{femnist_and_mnist_heterogene_K50} (top row) for FEMNIST.
Figure~\ref{femnist_and_mnist_heterogene_K50} (bottom row) shows 
results for DNN (\textit{non-convex objective}) with $T=100$ on MNIST data.
We report in the figure caption the corresponding privacy bound $\epsilon$ for the last iterate with respect to a third party.



In both convex and non-convex settings,
\texttt{DP-SCAFFOLD} clearly outperforms \texttt{DP-FedAvg} and \texttt{DP-FedSGD} under data heterogeneity. The performance gap also increases with the number $K$ of local updates, see Figure~\ref{logistic_heterogene_K50_K100}.
These results confirm our theoretical results: they show that the control variates of \texttt{DP-SCAFFOLD} are robust to noise, and allow to overcome the limitations of \texttt{DP-FedAvg} under high heterogeneity and many local updates. 

\textbf{Trade-offs between parameters.} In \texttt{DP-SCAFFOLD}, a fixed guarantee $\epsilon$ can be achieved by different combinations of values for $K$, $T$  and $\sigma_g$, as shown in Theorem~\ref{utility_result}). We propose to empirically observe these trade-offs  on synthetic data under a high privacy regime $(\epsilon=3)$. The sampling parameters are fixed to $l=0.05, s=0.2$. Given $\sigma_g$ and $K$, we calculate the maximal value of $T$ such that the privacy bound is still maintained after $T$ communication rounds. Table~\ref{table_experiments_varying_k_sigma_gaussian_5_5} shows the test accuracy obtained after these iterations for a high heterogeneity setting $(\alpha, \beta)=(5,5)$.

Our results highlight the trade-off between $T$ and $K$ (which relates to
hardware and communication constraints in real deployments) to achieve some given performance. Indeed, if $K$ is too large, $T$ has to be chosen very low to ensure the desired privacy, leading to poor accuracy. For instance, with $K=40$, $T$ cannot exceed $90$, and the resulting accuracy thus barely reaches $22\%$, even with low private noise. On the other hand, if we set $K$ too low, \texttt{DP-SCAFFOLD} does not converge despite a high value of $T$, since it does not take advantage of the local updates. Moreover, we can observe another dimension of the trade-off involving $\sigma_g$. It seems that better performance can be achieved by setting $\sigma_g$ relatively low, although it implies to choose a smaller $T$. This trade-off is evidenced by the fact that the accuracy achieved in the first two rows ($\sigma_g=10$ and $\sigma_g=20$) is quite similar, showing that $\sigma_g$ and $T$ compensate each other.

\textbf{Other results.} Appendix~\ref{appendix_experiment} shows results with other metrics and heterogeneity levels, higher privacy regimes, and presents additional experiments on the effect of sampling parameters $l$ and $s$ (and the trade-off with $T$) on privacy and convergence.
\section{CONCLUSION}\label{section_perspectives}

Our paper introduced a novel FL algorithm, \texttt{DP-SCAFFOLD}, to tackle data heterogeneity under DP constraints, and showed that it improves over the baseline \texttt{DP-FedAvg} from both the theoretical and empirical point of view. In particular, our theoretical analysis highlights an interesting trade-off between the parameters of the problem, involving a term of heterogeneity in \texttt{DP-FedAvg} which does not appear in the rate of \texttt{DP-SCAFFOLD}. As future work, we aim at providing additional experiments with deep learning models and various sizes of local datasets across users, for more realistic use-cases. Besides, our paper opens other perspectives. \texttt{DP-SCAFFOLD} may be improved by incorporating other ML techniques such as momentum. On the experimental side, a larger number of samples and a more precise tuning of the trade-off between $T$, $K$ and subsampling parameters may dramatically improve the utility for real-world cases under a given privacy budget. From a theoretical perspective, investigating an adaptation of our approach to a personalized FL setting \citep{fallah2020personalized,sattler2020clustered,Marfoq2021a}, where formal privacy guarantees have seldom been  studied \citep[at the exception of][]{personalized_ml,dp_personalized_fl}, is a direction of interest.

\begin{table*}[h!]
\vspace{-1.5em}
\caption{Test Accuracy ($\%$) For \texttt{DP-SCAFFOLD} on Synthetic Data, With $\epsilon=3$, $l=0.05$, $s=0.2$, $(\alpha, \beta)=(5,5)$.}
\label{table_experiments_varying_k_sigma_gaussian_5_5}
\begin{center}
\vspace{-1em}
\resizebox{\linewidth}{!}{
\begin{tabular}{cclclclclcl}
\toprule
\textbf{$\sigma_g$} &\multicolumn{2}{c}{$K=1$} &\multicolumn{2}{c}{$K=5$}&\multicolumn{2}{c}{$K=10$} & \multicolumn{2}{c}{$K=20$} & \multicolumn{2}{c}{$K=40$} \\
\cmidrule(l){1-1}\cmidrule(l){2-3}\cmidrule(l){4-5}\cmidrule(l){6-7}\cmidrule(l){8-9}\cmidrule(l){10-11}
$10$ &  $27.41_{\pm 0.71}$ & $T=542$
     &$\mathbf{45.53}_{\pm 0.99}$ & $T=488$
     & $43.52_{\pm 1.52}$ & $T=428$
     & $42.51_{\pm 0.80}$ & $T=324$
     & $21.80_{\pm 3.28}$ & $T=72$\\
$20$ &  $27.34_{\pm 1.31}$ & $T=545$
     & $\mathbf{44.39}_{\pm 0.46}$ & $T=502$
     & $43.47_{\pm 1.74}$ & $T=451$
     & $42.33_{\pm 0.77}$ & $T=352$
     & $20.14_{\pm 2.67}$ & $T=83$\\
$40$ & $21.05_{\pm 2.27}$ & $T=546$
     & $34.50_{\pm 0.65}$ & $T=505$
     & $\mathbf{36.85}_{\pm 0.85}$ & $T=457$
     & $33.24_{\pm 0.41}$ & $T=360$
     & $14.85_{\pm 0.95}$ & $T=86$\\
$80$ & $17.61_{\pm 2.62}$ & $T=546$
     & $24.41_{\pm 0.81}$ & $T=506$
     & $\mathbf{27.33}_{\pm 0.37}$ & $T=458$
     & $19.42_{\pm 0.51}$ & $T=362$
     & $14.08_{\pm 0.14}$ & $T=87$\\
$160$ & $13.97_{\pm 1.70}$ & $T=546$
     & $15.99_{\pm 0.30}$ & $T=506$
     & $\mathbf{19.27}_{\pm 1.65}$ & $T=458$
     & $14.86_{\pm 0.75}$ & $T=362$
     & $14.17_{\pm 0.06}$ & $T=87$\\
\bottomrule
\end{tabular}
}
\end{center}
\end{table*}
 
\begin{figure*}[h!]
    \centering
    \includegraphics[width=\linewidth]{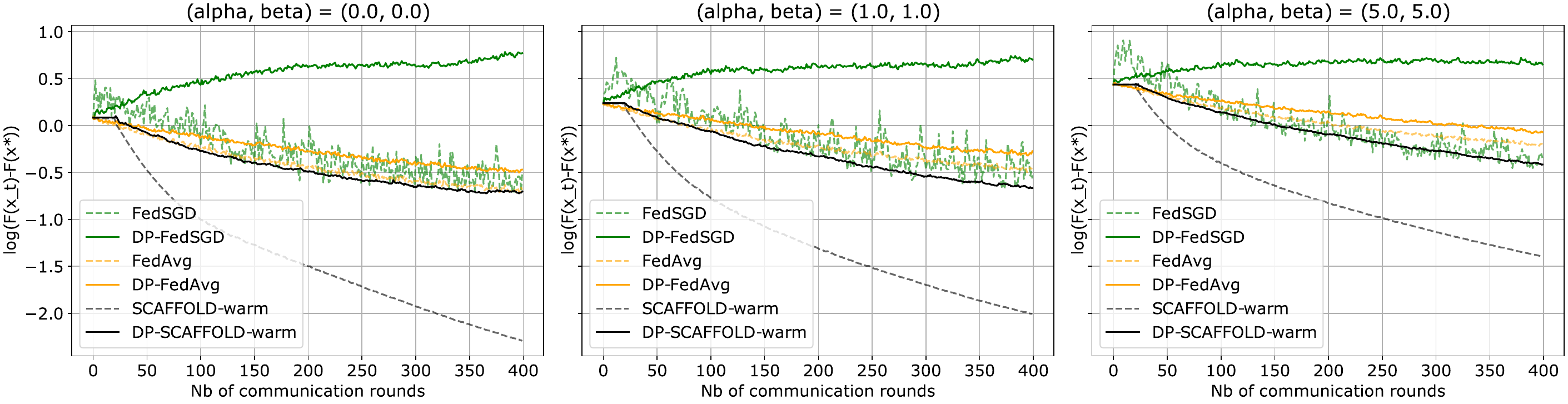} \\
\vspace{0.6em}
    \includegraphics[width=\linewidth]{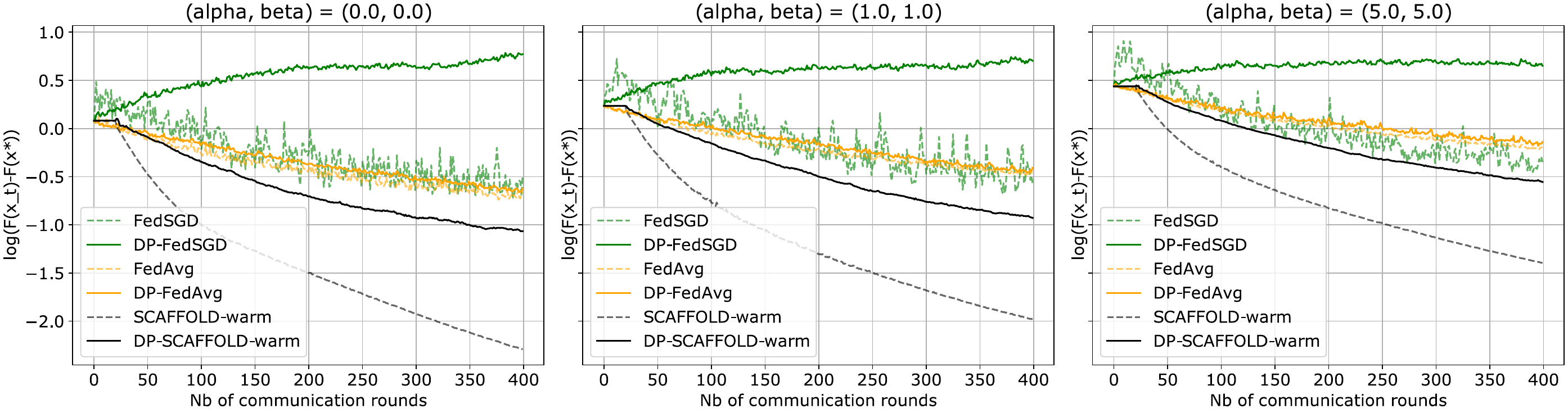}
\vspace{-2em}
    \caption{Train Loss On Synthetic Data ($\epsilon=13$). First Row: $K=50$; Second Row: $K=100$.}
    \label{logistic_heterogene_K50_K100}
\end{figure*}

\begin{figure*}[h!]
    \centering
    \includegraphics[width=\linewidth]{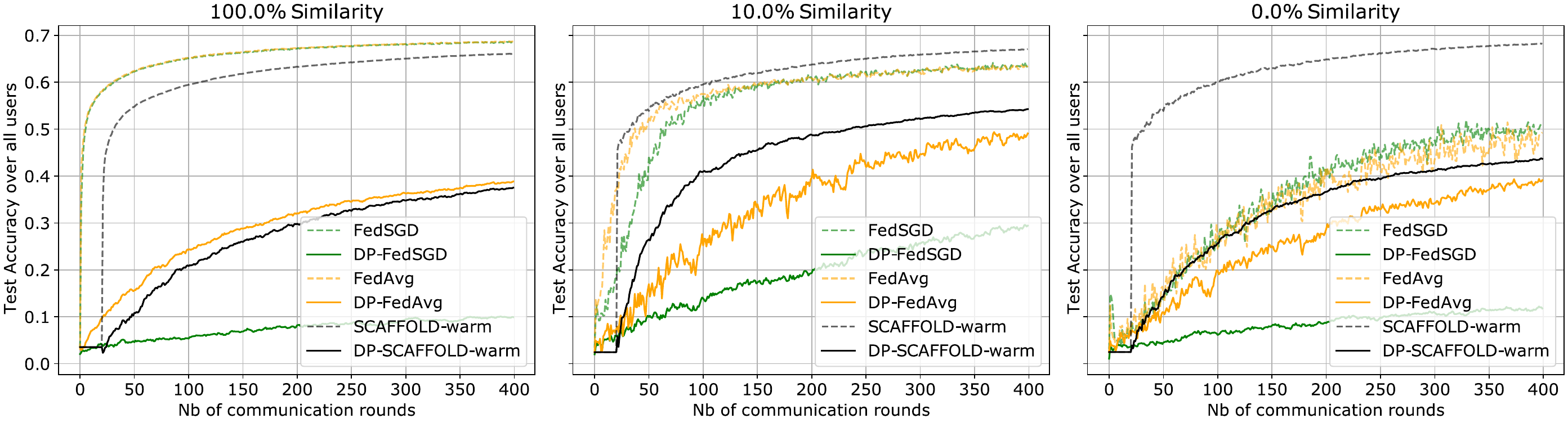}\\
\vspace{0.6em}
    \includegraphics[width=\linewidth, height=4.5cm]{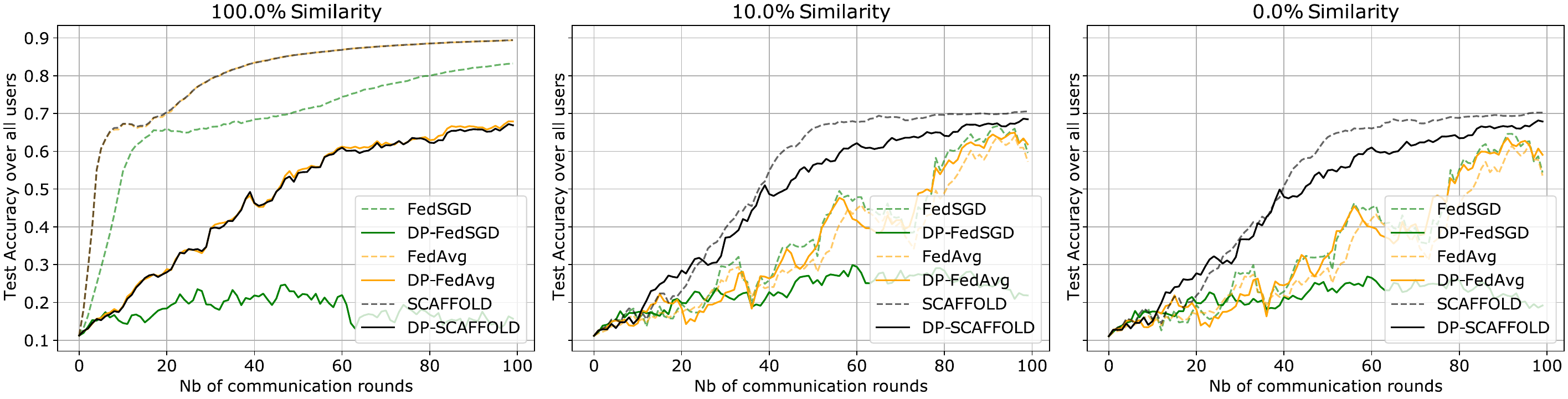}
\vspace{-2em}
    \caption{Test Accuracy With $K=50$. First Row: FEMNIST (LogReg), $\epsilon=11.4$; Second Row: MNIST (DNN), $\epsilon=7.2$.}
\vspace{-3em}
    \label{femnist_and_mnist_heterogene_K50}
\end{figure*}

\section*{Acknowledgments}
We thank Baptiste Goujaud and Constantin Philippenko for interesting discussions.  We thank anonymous reviewers for their constructive feedback.
The work of A. Dieuleveut is partially supported by ANR-19-CHIA-0002-01 /chaire SCAI, and Hi! Paris. 
The work of A. Bellet is supported by grants ANR-16-CE23-0016 (Project PAMELA) and ANR-20-CE23-0015 (Project PRIDE).

\bibliography{bibfile}

\clearpage
\appendix

\onecolumn
\makesupplementtitle

\section*{ORGANIZATION OF THE APPENDIX}

This appendix is organized as follows. Appendix~\ref{app:complementary} summarizes the main notations and provides the detailed \dpfedavg algorithm for completeness.
Appendix~\ref{appendix:privacy} provides details on our privacy analysis. Appendix~\ref{app:othertheoreticalresults} gives the full proofs of our utility results for the convex, strongly convex and non-convex cases. Finally, Appendix~\ref{app:exp_all} provides more details on the experiments of Section~\ref{sec:experiments}, as well as additional results.

\section{ADDITIONAL INFORMATION}
\label{app:complementary}

\subsection{Table of Notations}

Table~\ref{table_notations} summarizes the main notations used throughout the paper.

\begin{table}[h]
\caption{Summary of the main notations.} \label{table_notations}
\begin{center}
\begin{tabular}{ll}
\textbf{Symbol}  &\textbf{Description} \\
\hline \\
    $[n]$ & set $\{1,2,...,n\}$ for any  $n \in \mathbb{N}$\\
    $M$, $i \in [M]$ & number and index of users\\
    $T$, $t \in [T]$ & number and index of communication rounds\\
    $K$, $k \in [K]$ & number and index of local updates (for each user)\\
    $D_i$ & local dataset held by the $i$-th user, composed of points $d_1^i, \hdots, d^i_\r$\\
    $\r$ & size of any local dataset $D_i$\\
    $D$ & joint dataset ($\bigsqcup_{i=1}^{M} D_i$)\\
    $f_i(x,d)$ & loss of the $i$-th user for model $x$ on data record $d$ \\
    $F_i$ & local empirical risk function of the $i$-th user $(\frac{1}{\r}\sum_{j=1}^\r f_i(\cdot, d_j^i))$\\
    $F$ & global objective function $(\frac{1}{M}\sum_{i=1}^M F_i)$\\
    $x^t\in \mathbb{R}^d$ & server model after round $t$\\
    $y_i^k\in \mathbb{R}^d$ & model of $i$-th user after local update $k$ \\
    $c^t \in \mathbb{R}^d$ & server control variate after round $t$ \\
    $c_i^t\in \mathbb{R}^d$ & control variate of the $i$-th user after round $t$ \\
    $l\in (0,1)$ &  user sampling ratio\\ 
    $s\in (0,1)$ & data sampling ratio\\ 
    $\epsilon,\delta$ & differential privacy parameters \\
    $\sigma_g$ & standard deviation of Gaussian noise added for privacy\\
    $\mathcal{C}$ & gradient clipping threshold\\
    $\nu$ & Lipschitz-smoothness constant\\
    $\mu$ & strong convexity parameter \\
    $\sigmaoracle^2$ & variance of stochastic gradients\\
    \hline
\end{tabular}
\end{center}
\end{table}

\subsection{\dpfedavg Algorithm}
The code of \dpfedavg is given in Algorithm~\ref{algo_dp_fedavg}.

 \begin{algorithm}
\DontPrintSemicolon
\caption{DP-FedAvg$(T,K,l,s,\sigma_g,\mathcal{C})$ \label{algo_dp_fedavg}}
  \KwInputServer{initial $x^0$}
  \KwOutput{$x^T$}
  \For{$t=1,...,T$}
    {
        \HiLiRed\textbf{User subsampling by the server:} $C^t \subset [M]$ of size $\lfloor lM\rfloor$\\
        \textbf{Server communicates} $x^{t-1}$ to users $i \in C^t$\\
        \For{user $i \in C^t$}
        {
            Initialize model: $y_i^0 \leftarrow x^t$\\
            \For{$k=1,...,K$}
            {   
                \HiLiRed\textbf{Data subsampling by user: }$S_i^k \subset D_i$ of size $\lfloor sR\rfloor$\\
                \For{sample $j \in S_i^k$}
                {
                    \textbf{Compute gradient:} $g_{ij} \leftarrow \nabla f_i(y_i^{k-1},d^i_{j}) $\\
                    \HiLiYellow \textbf{Clip gradient:} $\tilde{g}_{ij} \leftarrow g_{ij} / \max \big(1, ||g_{ij}||_2/\mathcal{C}\big)$
                    }
                 \HiLiYellow\textbf{Add DP noise to local gradients:} $\tilde{H}_i^k \leftarrow \frac{1}{s \r}\sum_{j \in S_i^k} \tilde{g}_{ij} + \frac{2\mathcal{C}}{s\r}\mathcal{N} (0,\sigma^2_g)$\\
                $y_i^{k} \leftarrow y_i^{k-1} - \eta_l \tilde{H}_i^k$}
            $\Delta y_i^t \leftarrow y_i^K - x^{t-1}$\\
            \textbf{User $i$ communicates to server:} $\Delta y_i^t$
            }
        \textbf{Server aggregates}: $\Delta x^t \leftarrow \frac{1}{lM}\sum_{i \in C^t}\Delta y_i^t$\ \\
        $x^{t} \leftarrow x^{t-1} + \eta_g\Delta x^t$
    }
 \end{algorithm}

\section{DETAILS ON PRIVACY ANALYSIS}
\label{appendix:privacy}

In this section, we provide the proof of our privacy results. We start by recalling standard differential privacy results
on composition and amplification by subsampling
in Section~\ref{formal_dp}.
Section~\ref{rdp_theory} reviews recent results in Rényi Differential Privacy (RDP) which allow to obtain tighter privacy bounds.
We then formally state and prove Claim~\ref{lemma_privacy} in Section~\ref{proof_claim}. Finally, we provide the proof of our main result (Theorem~\ref{theorem_privacy}) in Section~\ref{proof_theorem_privacy}.

\subsection{Reminders on Differential Privacy}
\label{formal_dp}

In the following, we denote by 
$D \in \mathcal{X}^n$ to a dataset of size $n$. Two datasets $D,D' \in \mathcal{X}^n$ are said to be neighboring (denoted by $||D-D'||\leq 1$) if they differ by at most one element.

\paragraph{Composition.}


Let $M_1(\cdot;A_1), ..., M_T(\cdot;A_T)$ be a sequence of $T$ \textit{adaptive} DP mechanisms where $A_t$ stands for the auxiliary input to the $t$-th mechanism, which may depend on the outputs of previous mechanisms $(M_{t'})_{t'<t}$. The ability to choose the sequences of mechanisms adaptively is crucial for the design of iterative machine learning algorithms.
DP allows to keep track of the privacy guarantees when such a sequence of private mechanisms is run on the same dataset $D$. 
Simple composition \citep[Theorem III.1.]{boosting_dp} states that the privacy parameters grow linearly with $T$. \cite{boosting_dp} provide a \textit{strong composition} result where the $\epsilon$ parameter grows sublinearly with $T$. This result is recalled in
Lemma~\ref{lemma_strong_composition}. 

\begin{lemma}[Strong adaptive composition,
\citealp{boosting_dp}]
\label{lemma_strong_composition}
Let $M_1,...,M_T$ be $T$ adaptive $(\epsilon,\delta)$-DP mechanisms. Then, for any $\delta' >0$, the mechanism $M=(M_1,...,M_T)$ is $(\overline{\epsilon},\overline{\delta})$-DP where:
\begin{align*}
     \overline{\epsilon}=\epsilon \sqrt{2T\log(1/\delta')} + T\epsilon (e^{\epsilon} -1) \text{ and }\overline{\delta}=T\delta+ \delta'.
\end{align*}
\end{lemma}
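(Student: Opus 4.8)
\textbf{Proof plan for Lemma~\ref{lemma_strong_composition} (Strong adaptive composition).}
The statement is the classical strong composition theorem of \citet{boosting_dp}, so the plan is to reproduce the standard argument based on the \emph{privacy loss random variable} and a concentration bound. First I would set up the machinery: for an adaptive sequence $M=(M_1,\dots,M_T)$ run on neighboring datasets $D,D'$, define the privacy loss $Z = \log\frac{\mathbb{P}[M(D)=v]}{\mathbb{P}[M(D')=v]}$ where $v$ is drawn from $M(D)$, and decompose it as $Z=\sum_{t=1}^T Z_t$ where $Z_t$ is the per-step privacy loss of $M_t$ conditioned on the transcript so far. The $(\epsilon,\delta)$-DP guarantee of each $M_t$ (which must hold for \emph{every} fixed value of the auxiliary input, hence for every prefix of the transcript) is the key input that makes each $Z_t$ well-controlled.

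The core of the argument has two ingredients. First, a ``dense in $(\epsilon,0)$-DP'' reduction: an $(\epsilon,\delta)$-DP mechanism can, up to an additive $\delta$ failure event, be treated as if each per-step privacy loss $Z_t$ is bounded in $[-\epsilon,\epsilon]$ and — crucially — has conditional expectation bounded by $\epsilon(e^\epsilon-1)$ (this expectation bound is what produces the $T\epsilon(e^\epsilon-1)$ term, via $\mathrm{e}^\epsilon-1-\epsilon \le \epsilon(e^\epsilon-1)$-type estimates on $\mathrm{KL}$-like quantities). Second, since conditionally on the past each $Z_t$ lies in a bounded range and has controlled mean, the partial sums form a bounded-difference martingale, so Azuma--Hoeffding gives $\mathbb{P}\big[\sum_t Z_t > T\epsilon(e^\epsilon-1) + \epsilon\sqrt{2T\log(1/\delta')}\big] \le \delta'$. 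Combining the $T\delta$ mass from the bad events across the $T$ steps with the $\delta'$ tail mass and unwinding the definition of the privacy loss then yields that $M$ is $(\overline\epsilon,\overline\delta)$-DP with $\overline\epsilon = \epsilon\sqrt{2T\log(1/\delta')} + T\epsilon(e^\epsilon-1)$ and $\overline\delta = T\delta + \delta'$.

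The main obstacle — and the only part requiring real care rather than bookkeeping — is the \emph{adaptive} aspect: one must verify that conditioning on the realized transcript of $M_1,\dots,M_{t-1}$ leaves $M_t$ still $(\epsilon,\delta)$-DP with the stated bounds on the conditional privacy loss. This is exactly why the definition of adaptive DP requires the guarantee to hold uniformly over all auxiliary inputs $A_t$; once that is in hand, the martingale structure is immediate and Azuma applies verbatim. A secondary technical point is handling the $\delta>0$ slack cleanly (the union bound over the $T$ per-step failure events, and checking that the failure events can be chosen measurably with respect to the transcript), but this is routine. Since this lemma is quoted directly from \citet{boosting_dp}, in the paper itself I would simply cite it rather than reproduce the martingale computation.
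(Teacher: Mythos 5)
The paper does not prove this lemma at all: it is recalled verbatim as a known result of \citet{boosting_dp} and used as a black box, which is exactly what you conclude you would do. Your accompanying outline of the standard argument (per-step privacy loss bounded in $[-\epsilon,\epsilon]$ with conditional mean at most $\epsilon(e^\epsilon-1)$, Azuma--Hoeffding on the resulting martingale to get the $\epsilon\sqrt{2T\log(1/\delta')}$ term, and a union bound over the $T$ failure events of mass $\delta$ each) is a correct account of the original proof, so there is nothing to fix.
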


\textbf{Remark.} When stating theoretical results, $\overline{\epsilon}$ is typically approximated by $\mathcal{O}(\epsilon\sqrt{T\log(1/\delta')})$ when $\epsilon << 1$.

\paragraph{Privacy amplification by subsampling.} 

A key result in DP is that applying a private algorithm on a random subsample of the dataset amplifies privacy guarantees \citep{dp_subsampling}. 
In this work, we are interested in subsampling without replacement.

\begin{definition}[Subsampling without replacement]
\label{def_subsampling}
The subsampling procedure $\text{Samp}_{n,m} : \mathcal{X}^{n} \rightarrow \mathcal{X}^{m}$ (where $m \in \mathbb{N}$, with $m \leq n$) takes $D$ as input and chooses uniformly among its elements a subset $\underline{D}$ of $m$ elements. We may also denote $\text{Samp}_{n,m}$ as $\text{Samp}_{q}$ where $q=m/n$ in the rest of the paper.
\end{definition}

Lemma \ref{lemma_subsampling} quantifies the associated privacy amplification effect.

\begin{lemma}[Amplification by subsampling, \citealp{dp_subsampling}]
\label{lemma_subsampling}
Let $M' : \mathcal{X}^{m} \rightarrow \mathcal{Y}$ be a $(\epsilon,\delta)$-DP mechanism w.r.t. a given dataset $\underline{D} \in \mathcal{X}^{m}$. Then, mechanism $M : \mathcal{X}^{n} \rightarrow \mathcal{Y}$ defined as $M:=M'\circ \text{Samp}_{n,m}$ is $(\epsilon', \delta')$-DP w.r.t. to any dataset $D \in \mathcal{X}^{n}$ such that $\underline{D}=\text{Samp}_{n,m}(D)$, where:
\begin{center}
    $\epsilon'=\log(1 + q(e^{\epsilon} -1))$, $\delta'=q\delta$, $q=m /n$.
\end{center}
\end{lemma}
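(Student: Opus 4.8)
# Proof Plan for Lemma 2.4 (Amplification by Subsampling)

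\textbf{Overall strategy.} The plan is to reduce the neighboring-dataset condition on $D \in \mathcal{X}^n$ to a comparison of the distributions of the subsample $\underline{D} = \text{Samp}_{n,m}(D)$, and then invoke the $(\epsilon,\delta)$-DP guarantee of $M'$ together with a convexity/mixture argument. Let $D, D' \in \mathcal{X}^n$ be neighboring, so they differ in exactly one coordinate: say $D$ contains a record $a$ where $D'$ contains a record $a'$, and they agree elsewhere. The key observation is that one can couple the subsampling process on $D$ and on $D'$: with probability $1 - q$ the differing slot is \emph{not} selected, and conditionally on this event the two subsamples $\underline{D}$ and $\underline{D}'$ are identically distributed; with probability $q$ the differing slot \emph{is} selected, and then $\underline{D}$ and $\underline{D}'$ are two datasets in $\mathcal{X}^m$ that themselves differ in exactly one record (hence are neighbors in $\mathcal{X}^m$). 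This decomposes the output distribution of $M = M' \circ \text{Samp}_{n,m}$ as a mixture.

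\textbf{Key steps.} First I would fix a measurable event $S \subseteq \mathcal{Y}$ and write, using the coupling above,
\begin{align*}
\mathbb{P}[M(D) \in S] &= (1-q)\, \mathbb{P}[M'(\underline{D}) \in S \mid \text{slot not picked}] + q\, \mathbb{P}[M'(\underline{D}) \in S \mid \text{slot picked}],
\end{align*}
and similarly for $D'$. Second, under "slot not picked" the two conditional distributions coincide, so those contributions match exactly. Third, under "slot picked", $\underline{D}$ and $\underline{D}'$ are neighbors in $\mathcal{X}^m$, so applying the $(\epsilon,\delta)$-DP property of $M'$ gives
\[
\mathbb{P}[M'(\underline{D}) \in S \mid \text{picked}] \le e^{\epsilon}\, \mathbb{P}[M'(\underline{D}') \in S \mid \text{picked}] + \delta.
\]
Combining, $\mathbb{P}[M(D) \in S] \le (1-q)\mathbb{P}[\cdots] + q e^{\epsilon} \mathbb{P}[\cdots] + q\delta$, and then I would bound $(1-q) + q e^{\epsilon} = 1 + q(e^{\epsilon}-1) = e^{\epsilon'}$ with $\epsilon' = \log(1 + q(e^{\epsilon}-1))$, while the additive term is exactly $q\delta = \delta'$. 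A symmetric computation handles the direction needed to conclude $(\epsilon',\delta')$-DP in the standard sense. One subtlety to be careful about: since subsampling is without replacement, conditioning on whether the distinguished slot is selected must be done on the level of the \emph{index set} chosen by $\text{Samp}_{n,m}$, not on record values; once phrased at the index level the coupling is immediate because $D$ and $D'$ have the same index structure.

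\textbf{Main obstacle.} The routine calculation is trivial; the delicate part is justifying the mixture decomposition rigorously, i.e.\ constructing the coupling so that the "slot not picked" components are \emph{literally equal in distribution} rather than merely close. This requires arguing that, conditioned on the selected index set avoiding the distinguished coordinate, the random subsample is a deterministic function of coordinates on which $D$ and $D'$ agree, hence has identical law. A second minor point is that $M'$ may be randomized, so one should condition on the index set first and then integrate over $M'$'s internal randomness; the DP inequality for $M'$ is applied pointwise in the selected index set and then averaged. Neither is a genuine difficulty, but both are the places where a sloppy proof would go wrong, so I would state the index-level coupling explicitly before doing any arithmetic.
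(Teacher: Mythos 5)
The paper does not prove this lemma; it is recalled from \citet{dp_subsampling} without proof, so your attempt can only be judged on its own merits. Your mixture decomposition over whether the distinguished index is selected is the right starting point, and your concerns about phrasing the coupling at the index level are legitimate but minor. The genuine gap is in the step you dismiss as ``routine calculation'': after writing $\mathbb{P}[M(D)\in S]=(1-q)C+qA$ and $\mathbb{P}[M(D')\in S]=(1-q)C+qA'$ (with $C$ the common ``not picked'' term, $A,A'$ the ``picked'' terms) and applying $A\le e^{\epsilon}A'+\delta$, you propose to conclude by ``bounding $(1-q)+qe^{\epsilon}=e^{\epsilon'}$''. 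But the coefficients $(1-q)$ and $qe^{\epsilon}$ multiply \emph{different} probabilities ($C$ and $A'$), so they cannot be summed. Concretely, you need $(1-q)C+qe^{\epsilon}A'\le e^{\epsilon'}\bigl((1-q)C+qA'\bigr)$, and expanding shows the difference equals $q(1-q)(e^{\epsilon}-1)(A'-C)$, which is nonnegative only if $A'\le C$ --- a condition that need not hold. With only the inequality $A\le e^{\epsilon}A'+\delta$, the best you can conclude is $(\epsilon,q\delta)$-DP, i.e.\ no amplification of $\epsilon$ at all.

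The missing idea is a \emph{second} comparison: one must also show $A\le e^{\epsilon}C+\delta$, by coupling each index set $I\ni i^*$ with a uniformly random $J=(I\setminus\{i^*\})\cup\{j\}$, $j\notin I$, so that $D_I$ and $D_J=D'_J$ are neighbors in $\mathcal{X}^m$ (one checks this coupling has the correct marginals). Taking the convex combination of the two bounds with weight $\alpha=q+(1-q)e^{-\epsilon}$ on the first gives $A\le e^{\epsilon}\bigl(\alpha A'+(1-\alpha)C\bigr)+\delta$, and only then does the arithmetic close: the coefficients of $C$ and $A'$ in $e^{\epsilon'}\bigl((1-q)C+qA'\bigr)-\bigl((1-q)C+qe^{\epsilon}(\alpha A'+(1-\alpha)C)\bigr)$ are both exactly zero for this choice of $\alpha$. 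So the delicate part of the proof is precisely the step you flagged as trivial, and the part you flagged as delicate (the index-level coupling for the ``not picked'' branch) is indeed immediate.
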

\textbf{Remark.} In theoretical results, $\epsilon'$ is often approximated by $\mathcal{O}(q\epsilon)$ when $\epsilon << 1$.

\subsection{Rényi Differential Privacy}
\label{rdp_theory}

\cite{dp_deep_l} demonstrated in practice that the privacy bounds provided by standard $(\epsilon,\delta)$-DP theory (see Section~\ref{formal_dp}) often overestimate the actual privacy loss. 
In order to better express inequalities on the tails of the output distributions of private algorithms, we introduce 
the \textit{privacy loss random variable} \citep{dp_foundations,dp_deep_l, renyi_dp_sampled_gm_explicit}. Given a random mechanism $M$, let $M(D)$ and $M(D')$ be the distributions of the output when $M$ is run on $D$ and $D'$ respectively. The privacy loss $L^M_{D,D'}$ is defined as:
\begin{align}
    L^M_{D,D'}(\theta):=\log\bigg(\displaystyle\frac{M(D)(\theta)}{M(D')(\theta)}\bigg) \quad \text{where } \theta \sim M(D). \label{eq:privacy_loss}
\end{align}
The interpretation of this quantity is easy to understand: $(\epsilon, \delta)$-DP ensures that the absolute value of the privacy loss is bounded by $\epsilon$ with probability at least $(1-\delta)$ for all pairs of neighboring datasets $D$ and $D'$ \citep[Lemma 3.17]{dp_foundations}.

We will reason on the \textit{Cumulant Generating Function} (CGF) of the privacy loss, denoted $K_M$, rather than on the privacy loss $L^M$ itself.
This CGF is expressed as follows for any $\lambda>0$:
\begin{align*}
 K_M(D,D',\lambda):=\mathbb{E}_{\theta\sim M(D)}\big[e^{\lambda L^M_{D, D'}(\theta)}\big]= \mathbb{E}_{\theta\sim M(D)}\bigg[\bigg(\displaystyle\frac{M(D)(\theta)}{M(D')(\theta)} \bigg)^{\lambda}\bigg],
\end{align*}
which is also equivalent to:
\begin{align}
     K_M(D,D',\lambda)=\mathbb{E}_{\theta\sim M(D')}\bigg[\bigg(\displaystyle\frac{M(D)(\theta)}{M(D')(\theta)} \bigg)^{\lambda+1}\bigg]. \label{eq:cgf}
\end{align}

By the property of the moment generating function, $K_M(D,D',\cdot)$ fully determines the distribution of the privacy loss random variable $L^M_{D, D'}$.
We also define $K_M(\lambda):=\sup_{||D-D'||\leq 1} K_M(D,D',\lambda)$, which is the upper bound on the CGF for any pair of neighboring datasets.

We can now introduce \textit{Rényi Differential Privacy} (RDP), which generalizes DP using the Rényi divergence $D_{\alpha}$.

\begin{definition}[Rényi Differential Privacy, 
\citealp{renyi_dp}]\label{def_rdp} For any $\alpha \in (1,\infty)$ and any $\epsilon>0$, a mechanism $M :\mathcal{X}^n \rightarrow \mathcal{Y}$ is said to be $(\alpha, \epsilon)$-RDP, if for all neighboring datasets $D$ and $D'$,
\begin{align}
    D_\alpha(M(D)||M(D')):=\displaystyle\frac{1}{\alpha -1}\log \mathbb{E}_{\theta\sim M(D')}\bigg[\bigg(\displaystyle\frac{M(D)(\theta)}{M(D')(\theta)} \bigg)^{\alpha}\bigg] \leq \epsilon.\label{eq:rdp_inequality}
\end{align}
\end{definition}

Given a mechanism $M$ and a RDP parameter $\alpha$, we can thus determine from Definition~\ref{def_rdp} the lowest value of the $\epsilon$-RDP bound, denoted $\epsilon_M(\alpha)$, such that $M$ is $(\alpha, \epsilon_M(\alpha))$-RDP. Indeed, $\epsilon_M(\alpha)$ is such that:
\begin{align*}
    \epsilon_M(\alpha)= \inf_{\epsilon \in \varepsilon(M)} \epsilon \quad \text{where}\quad \varepsilon(M):=\{\epsilon>0 : \sup_{||D-D'||\leq 1} D_\alpha(M(D)||M(D')) \leq \epsilon\}.
\end{align*}

The obvious similarity between Eq.~\eqref{eq:cgf} and Eq.~\eqref{eq:rdp_inequality} shows the link between the CGF and the notion of RDP. Indeed, for any $\alpha \in (1,\infty)$, it is easy to see that $(\alpha-1)\epsilon_M(\alpha)$ is equal to $K_M(\lambda)$ where $\lambda+1=\alpha$ (restated in Lemma~\ref{lemma_conversion_rdp_cgf}).
\begin{lemma}[\textbf{Equivalence RDP-CGF}]\label{lemma_conversion_rdp_cgf} Any mechanism $M$ is $(\lambda +1, K_M(\lambda)/\lambda)$-RDP for all $\lambda >0$. 
\end{lemma}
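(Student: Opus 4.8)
\textbf{Proof plan for Lemma~\ref{lemma_conversion_rdp_cgf}.}

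The plan is to read off the claim directly from the definitions by matching the two quantities $\epsilon_M(\alpha)$ and $K_M(\lambda)$ under the substitution $\alpha = \lambda + 1$. First I would fix $\lambda > 0$, set $\alpha := \lambda + 1 > 1$, and fix an arbitrary pair of neighboring datasets $D, D'$ (i.e., $\|D - D'\| \le 1$). The starting point is the identity~\eqref{eq:cgf}, which expresses the CGF of the privacy loss as
\[
K_M(D,D',\lambda) = \mathbb{E}_{\theta \sim M(D')}\!\left[\left(\frac{M(D)(\theta)}{M(D')(\theta)}\right)^{\lambda+1}\right].
\]
Comparing this with the expectation appearing inside the Rényi divergence in~\eqref{eq:rdp_inequality}, namely $\mathbb{E}_{\theta \sim M(D')}\big[(M(D)(\theta)/M(D')(\theta))^{\alpha}\big]$, and using $\alpha = \lambda + 1$, these two expectations are literally the same. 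Hence $(\alpha - 1) D_\alpha(M(D)\|M(D')) = \log K_M(D,D',\lambda)$, i.e. $\lambda\, D_{\lambda+1}(M(D)\|M(D')) = \log K_M(D,D',\lambda)$, for every fixed neighboring pair.

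Next I would take the supremum over all neighboring pairs $D, D'$ on both sides. Since $x \mapsto \log x$ is increasing, the supremum passes through it, giving
\[
\sup_{\|D-D'\|\le 1} \lambda\, D_{\lambda+1}(M(D)\|M(D')) = \log\!\Big(\sup_{\|D-D'\|\le 1} K_M(D,D',\lambda)\Big) = \log K_M(\lambda),
\]
by the definition $K_M(\lambda) := \sup_{\|D-D'\|\le 1} K_M(D,D',\lambda)$. Dividing by $\lambda$, the left-hand side is exactly $\sup_{\|D-D'\|\le 1} D_{\lambda+1}(M(D)\|M(D'))$, which by the characterization of $\epsilon_M(\alpha)$ recalled just before the lemma equals $\epsilon_M(\lambda+1)$. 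Therefore $\epsilon_M(\lambda+1) = \tfrac{1}{\lambda}\log K_M(\lambda) \le \tfrac{1}{\lambda} K_M(\lambda)$, using $\log x \le x$ for $x > 0$ (or more simply, since the stated bound $K_M(\lambda)/\lambda$ only needs to be an upper bound on $\epsilon_M(\lambda+1)$, one can just drop the logarithm). By Definition~\ref{def_rdp}, this says precisely that $M$ is $(\lambda+1, K_M(\lambda)/\lambda)$-RDP.

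There is no serious obstacle here: the lemma is essentially a change of variable plus a monotonicity argument, and the excerpt has already done the real work by establishing~\eqref{eq:cgf}. The only mildly delicate point is handling the supremum over neighboring datasets cleanly (ensuring it commutes with the monotone transformations and that $\epsilon_M(\alpha)$ as an infimum of valid RDP bounds indeed coincides with the supremum of Rényi divergences), but this is routine. One should also note whether $K_M(\lambda)$ could be infinite, in which case the statement is vacuously true; otherwise everything is finite and the manipulations above are valid.
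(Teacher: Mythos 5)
Your proof is correct and follows the same route the paper implicitly takes: the paper offers no proof of this lemma, asserting only that it is ``easy to see'' by comparing Eq.~\eqref{eq:cgf} with Eq.~\eqref{eq:rdp_inequality} under $\alpha=\lambda+1$, which is precisely the matching you carry out, together with the routine passage of the supremum over neighboring pairs. One point you rightly flag deserves emphasis: with the paper's displayed definition of $K_M$ (an expectation with no logarithm) one obtains $\epsilon_M(\lambda+1)=\frac{1}{\lambda}\log K_M(\lambda)$ and must invoke $\log x\le x$ to reach the stated bound, so the lemma is then only an inequality rather than an equivalence; however, the name ``cumulant generating function,'' the cited monotonicity and convexity properties, and the later formulas \eqref{epsilon_pb}--\eqref{delta_pb} all presuppose that $K_M$ is the \emph{logarithm} of that expectation, under which reading the identity is exact and your final inequality is unneeded. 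Either way, the statement as printed is established.
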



We now recall how we can convert RDP guarantees into standard DP guarantees.

\begin{lemma}[RDP to DP conversion,
\citealp{renyi_dp}]\label{lemma_conversion_rdp_dp} If $M$ is $(\epsilon, \alpha)$-RDP, then $M$ is $(\epsilon + \log(1/\delta)/(\alpha-1),\delta)$-DP for any $0<\delta<1$.
\end{lemma}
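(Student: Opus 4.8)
\textbf{Proof plan for Lemma~\ref{lemma_conversion_rdp_dp} (RDP to DP conversion).}

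The plan is to bound the privacy loss random variable using a Markov / Chernoff argument on its moment generating function, then optimize. First I would recall that if $M$ is $(\alpha,\epsilon)$-RDP, then by Definition~\ref{def_rdp} and the equivalence with the CGF (Lemma~\ref{lemma_conversion_rdp_cgf}), we have for all neighboring $D,D'$ that $\mathbb{E}_{\theta\sim M(D)}[e^{(\alpha-1)L^M_{D,D'}(\theta)}] = e^{(\alpha-1)D_\alpha(M(D)\|M(D'))} \le e^{(\alpha-1)\epsilon}$, writing $\lambda = \alpha-1 > 0$. The goal is to show $\mathbb{P}_{\theta\sim M(D)}[L^M_{D,D'}(\theta) > \epsilon'] \le \delta$ for a suitable $\epsilon'$, since by \citep[Lemma 3.17]{dp_foundations} (mentioned in the excerpt) control of the privacy loss with probability $1-\delta$ yields $(\epsilon',\delta)$-DP.

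The key steps, in order: (i) Fix neighboring $D,D'$ and a threshold $t$ to be chosen. Apply Markov's inequality to the nonnegative random variable $e^{\lambda L^M_{D,D'}(\theta)}$ with $\theta\sim M(D)$: $\mathbb{P}[L^M_{D,D'}(\theta) \ge t] = \mathbb{P}[e^{\lambda L} \ge e^{\lambda t}] \le e^{-\lambda t}\,\mathbb{E}[e^{\lambda L}] \le e^{-\lambda t} e^{\lambda\epsilon} = e^{\lambda(\epsilon - t)}$. (ii) Choose $t$ so that the right-hand side equals $\delta$, i.e. $\lambda(\epsilon - t) = \log\delta$, giving $t = \epsilon + \frac{\log(1/\delta)}{\lambda} = \epsilon + \frac{\log(1/\delta)}{\alpha-1} =: \epsilon'$. (iii) Conclude that $\mathbb{P}_{\theta\sim M(D)}[L^M_{D,D'}(\theta) \ge \epsilon'] \le \delta$ uniformly over neighboring pairs, hence for every measurable $S$,
\begin{align*}
\mathbb{P}[M(D)\in S] &\le \mathbb{P}[M(D)\in S,\ L^M_{D,D'} < \epsilon'] + \delta \\
&\le e^{\epsilon'}\mathbb{P}[M(D')\in S] + \delta,
\end{align*}
where the last inequality uses that on the event $\{L^M_{D,D'}(\theta) < \epsilon'\}$ the density ratio $M(D)(\theta)/M(D')(\theta) < e^{\epsilon'}$. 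This is exactly $(\epsilon',\delta)$-DP with $\epsilon' = \epsilon + \log(1/\delta)/(\alpha-1)$.

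I expect the main (minor) obstacle to be the measure-theoretic bookkeeping in step (iii): one must carefully split the set $S$ according to the event on which the privacy loss is small, handle the possibility that $M(D')$ assigns zero mass where $M(D)$ does not (absolute continuity / the $\delta$ slack absorbs this), and make sure the Markov bound on $e^{\lambda L}$ under $M(D)$ translates correctly into a statement comparing $M(D)$ and $M(D')$ on arbitrary events. The analytic core — a single Chernoff bound plus an explicit optimization over the threshold — is entirely routine; no choice of $\alpha$ is optimized here since the statement is for a fixed $\alpha$.
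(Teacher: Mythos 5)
Your argument is correct, and it yields exactly the claimed parameters. Note that the paper does not actually prove Lemma~\ref{lemma_conversion_rdp_dp}; it imports it from \citet{renyi_dp}, so there is no in-paper proof to match. Your route is the standard ``privacy-loss tail bound'' argument: the identity $\mathbb{E}_{\theta\sim M(D)}[e^{(\alpha-1)L^M_{D,D'}(\theta)}]=\mathbb{E}_{\theta\sim M(D')}[(M(D)(\theta)/M(D')(\theta))^{\alpha}]=e^{(\alpha-1)D_\alpha(M(D)\|M(D'))}$ (this is precisely Eq.~\eqref{eq:cgf} and Lemma~\ref{lemma_conversion_rdp_cgf} of the paper) turns the RDP hypothesis into a moment bound, Markov's inequality gives $\mathbb{P}_{M(D)}[L\ge \epsilon']\le e^{(\alpha-1)(\epsilon-\epsilon')}=\delta$ for $\epsilon'=\epsilon+\log(1/\delta)/(\alpha-1)$, and the event-splitting $\mathbb{P}[M(D)\in S]\le \mathbb{P}[M(D)\in S\cap\{L<\epsilon'\}]+\delta\le e^{\epsilon'}\mathbb{P}[M(D')\in S]+\delta$ closes the argument (absolute continuity is implicit, since otherwise $D_\alpha=\infty$ and the hypothesis is vacuous). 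For comparison, Mironov's original proof avoids the privacy-loss variable altogether and instead applies H\"older's inequality to get the probability-preservation bound $\mathbb{P}[M(D)\in S]\le e^{\frac{\alpha-1}{\alpha}\epsilon}\,\mathbb{P}[M(D')\in S]^{(\alpha-1)/\alpha}$, then splits into cases according to whether $\mathbb{P}[M(D')\in S]$ is above or below a threshold; both derivations give the same constants, and yours is the one more commonly seen in the moments-accountant literature. No gap.
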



Given Lemma~\ref{lemma_conversion_rdp_dp} and Lemma~\ref{lemma_conversion_rdp_cgf}, it is possible to find the smallest $\epsilon$ from some fixed parameter $\delta$ or the smallest $\delta$ from some fixed parameter $\epsilon$ so as to achieve $(\epsilon, \delta)$-DP:
\begin{align}
    \epsilon(\delta)&=\min_{\lambda>0}\displaystyle\frac{\log(1/\delta)+K_M(\lambda)}{\lambda}, \label{epsilon_pb}\\
    \delta(\epsilon)&=\min_{\lambda>0}e^{K_M(\lambda)-\lambda\epsilon}.\label{delta_pb}
\end{align}

Moreover, $\lambda \rightarrow K_M(\lambda)/\lambda$ is monotonous \citep[Theorem 3]{property_rdp} and $\lambda \rightarrow K_M(\lambda)$ is convex \citep[Theorem 11]{property_rdp}. This last property enables to bound $K_M$ by a linear interpolation between the values of $K_M$ evaluated at integers, as stated below: 
\begin{align}
    \forall \lambda >0, K_{M}(\lambda) \leq (1-\lambda + \lfloor\lambda\rfloor)K_{M}(\lfloor\lambda\rfloor) + (\lambda - \lfloor\lambda\rfloor)K_{M}(\lceil\lambda\rceil).\label{eq:convexity_km}
\end{align}

Therefore, Problem~\eqref{epsilon_pb} is quasi-convex and Problem~\eqref{delta_pb} is log-convex, and both can be solved if we know the expression of $K_M(\lambda)$ for any $\lambda >0$.

We provide below other useful results from RDP theory, which we will use in our privacy analysis.

\begin{lemma}[RDP Composition,
\citealp{renyi_dp}]\label{lemma_composition_rdp} Let $\alpha \in (1, \infty)$. Let $M_1$ and $M_2$ be two mechanisms such that $M_1$ is $(\alpha, \epsilon_1)$-RDP and $M_2$, which takes the output of $M_1$ as auxiliary input, is $(\alpha, \epsilon_2)$-RDP. Then the composed mechanism $M_2\circ M_1$ is $(\alpha, \epsilon_1 + \epsilon_2)$-RDP.
\end{lemma}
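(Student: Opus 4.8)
The plan is to work directly from the definition of Rényi divergence and the tower property of conditional expectation. Let $\alpha \in (1,\infty)$, and suppose $M_1 : \mathcal{X}^n \to \mathcal{Y}_1$ is $(\alpha,\epsilon_1)$-RDP and $M_2 : \mathcal{X}^n \times \mathcal{Y}_1 \to \mathcal{Y}_2$ is $(\alpha,\epsilon_2)$-RDP in its first argument, uniformly over the auxiliary input from $\mathcal{Y}_1$. Fix a pair of neighboring datasets $D,D'$ with $\|D - D'\| \le 1$. The composed mechanism $M := M_2 \circ M_1$ outputs a pair $(\theta_1,\theta_2)$ (or just $\theta_2$, which is a postprocessing of the pair and can only decrease divergence), where $\theta_1 \sim M_1(D)$ and $\theta_2 \sim M_2(\theta_1, D)$. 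First I would write the joint density as a product $M(D)(\theta_1,\theta_2) = M_1(D)(\theta_1)\, M_2(\theta_1,D)(\theta_2)$, and similarly for $D'$, so that the likelihood ratio factorizes.

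The key computation is then to expand the moment that defines $D_\alpha(M(D)\|M(D'))$:
\begin{align*}
    \mathbb{E}_{(\theta_1,\theta_2)\sim M(D')}\!\left[\left(\frac{M(D)(\theta_1,\theta_2)}{M(D')(\theta_1,\theta_2)}\right)^{\!\alpha}\right]
    = \mathbb{E}_{\theta_1\sim M_1(D')}\!\left[\left(\frac{M_1(D)(\theta_1)}{M_1(D')(\theta_1)}\right)^{\!\alpha}\! \mathbb{E}_{\theta_2\sim M_2(\theta_1,D')}\!\left[\left(\frac{M_2(\theta_1,D)(\theta_2)}{M_2(\theta_1,D')(\theta_2)}\right)^{\!\alpha}\right]\right].
\end{align*}
The inner conditional expectation is exactly $\exp\big((\alpha-1) D_\alpha(M_2(\theta_1,D)\|M_2(\theta_1,D'))\big) \le e^{(\alpha-1)\epsilon_2}$ by the hypothesis on $M_2$, where the bound holds for every fixed value of $\theta_1$ because $M_2$ is $(\alpha,\epsilon_2)$-RDP uniformly in its auxiliary input. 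Pulling this constant bound out of the outer expectation leaves $e^{(\alpha-1)\epsilon_2}\,\mathbb{E}_{\theta_1\sim M_1(D')}\big[(M_1(D)(\theta_1)/M_1(D')(\theta_1))^{\alpha}\big] \le e^{(\alpha-1)\epsilon_2}\, e^{(\alpha-1)\epsilon_1}$. Taking $\tfrac{1}{\alpha-1}\log$ of both sides gives $D_\alpha(M(D)\|M(D')) \le \epsilon_1 + \epsilon_2$, and taking the supremum over neighboring $D,D'$ yields the claim (with a postprocessing/data-processing step if one only wants the marginal $\theta_2$).

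The main subtlety — not really an obstacle but the point that needs care — is the adaptivity: $M_2$'s parameters may not be controlled on a single measure but must hold \emph{uniformly} over all possible auxiliary inputs $\theta_1$, which is precisely what the statement "$M_2$ is $(\alpha,\epsilon_2)$-RDP" encodes when $M_2$ takes an auxiliary argument. This uniformity is what allows the bound $e^{(\alpha-1)\epsilon_2}$ to be factored out of the outer expectation regardless of the correlation between $\theta_1$ and the realized privacy loss of $M_2$. A secondary technical point is handling the densities rigorously (absolute continuity, possible zeros of $M_1(D')$); this is routine and can be dealt with by the standard convention that the likelihood ratio is taken on the support of the numerator measure, exactly as in the definition of $K_M$ and $D_\alpha$ used earlier in the paper. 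Since this is the original result of \citet{renyi_dp}, the proof is short and I would simply present the factorization argument above.
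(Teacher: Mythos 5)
Your argument is correct and is precisely the standard proof of adaptive RDP composition from \citet{renyi_dp} (the paper itself only cites this result without reproving it): factorize the joint density, bound the inner conditional $\alpha$-moment uniformly over the auxiliary input $\theta_1$, pull out the constant $e^{(\alpha-1)\epsilon_2}$, and bound the remaining outer moment by $e^{(\alpha-1)\epsilon_1}$. The point you flag about uniformity over the auxiliary input being exactly what licenses factoring the bound out of the outer expectation is the right one to emphasize for the adaptive case.
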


\begin{lemma}[RDP Gaussian mechanism,
\citealp{renyi_dp}] \label{lemma_rdp_gaussian} If $f : \mathcal{X}^n \rightarrow \mathbb{R}^d$ has $\ell_2$-sensitivity 1, then the Gaussian mechanism $G_f(\cdot):= f(\cdot) + \mathcal{N}(0, \sigma_g^2 \text{I}_d)$ is $(\alpha, \alpha/2\sigma_g^2)$-RDP for any $\alpha>1$. 
\end{lemma}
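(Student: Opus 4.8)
\textbf{Proof plan for Lemma~\ref{lemma_rdp_gaussian} (RDP of the Gaussian mechanism).}

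The plan is to reduce the $d$-dimensional statement to a one-dimensional computation and then evaluate a Gaussian integral. First I would fix two neighboring datasets $D,D'$ and write $u := f(D) - f(D') \in \mathbb{R}^d$, so that by the $\ell_2$-sensitivity assumption $\|u\| \le 1$. The two output distributions are $P = \mathcal{N}(f(D), \sigma_g^2 I_d)$ and $Q = \mathcal{N}(f(D'), \sigma_g^2 I_d)$, two isotropic Gaussians whose means differ by $u$. The key structural observation is that the Rényi divergence $D_\alpha(P\|Q)$ depends on $P,Q$ only through $\|u\|$: rotate and translate coordinates so that $f(D') = 0$ and $u = \|u\| e_1$; the divergence is invariant under this rigid motion, and in the new coordinates $P$ and $Q$ factorize over the $d$ coordinates, with coordinates $2,\dots,d$ contributing identical marginals (hence divergence $0$). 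So $D_\alpha(P\|Q)$ equals the one-dimensional Rényi divergence between $\mathcal{N}(\|u\|,\sigma_g^2)$ and $\mathcal{N}(0,\sigma_g^2)$.

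Next I would invoke (or quickly derive) the closed form for the Rényi divergence between two one-dimensional Gaussians with common variance $\sigma_g^2$ and mean gap $m$, namely $D_\alpha\big(\mathcal{N}(m,\sigma_g^2)\,\|\,\mathcal{N}(0,\sigma_g^2)\big) = \frac{\alpha m^2}{2\sigma_g^2}$. To derive it from scratch one writes the defining integral $\frac{1}{\alpha-1}\log \int q(\theta)\big(p(\theta)/q(\theta)\big)^\alpha d\theta$, substitutes the Gaussian densities so that the integrand becomes a constant times $\exp\!\big(-\tfrac{1}{2\sigma_g^2}\big[(1-\alpha)\theta^2 + \alpha(\theta-m)^2\big]\big)$, completes the square in $\theta$ (the coefficient of $\theta^2$ is $1$, which is why the Gaussian integral converges for all $\alpha>1$ with no extra constraint), and collects the leftover quadratic term in $m$; the $\theta$-integral is a normalizing constant of a Gaussian and the residual exponent is $\frac{\alpha(\alpha-1)m^2}{2\sigma_g^2}$, so dividing by $\alpha-1$ gives $\frac{\alpha m^2}{2\sigma_g^2}$. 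Setting $m = \|u\| \le 1$ yields $D_\alpha(P\|Q) \le \frac{\alpha}{2\sigma_g^2}$, and taking the supremum over neighboring $D,D'$ shows $G_f$ is $(\alpha, \alpha/2\sigma_g^2)$-RDP.

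The main obstacle is essentially bookkeeping rather than conceptual: one must be careful that the Gaussian integral in the Rényi divergence is finite, which here holds for every $\alpha > 1$ precisely because the variances of $P$ and $Q$ coincide (the coefficient of $\theta^2$ in the exponent is $(1-\alpha)+\alpha = 1 > 0$); if the variances differed, one would need $\alpha$ below a threshold. The reduction to one dimension via rotational invariance also deserves a line of justification, since it is what lets us avoid a genuinely $d$-dimensional integral. Beyond these points the proof is a direct computation, and since this is a restatement of a standard result from \citet{renyi_dp}, one may alternatively just cite it; I would include the short self-contained derivation above for completeness.
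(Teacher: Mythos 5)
Your derivation is correct: reducing to one dimension by rotational invariance, computing $D_\alpha\big(\mathcal{N}(m,\sigma_g^2)\,\|\,\mathcal{N}(0,\sigma_g^2)\big)=\alpha m^2/2\sigma_g^2$ by completing the square, and bounding $m=\|f(D)-f(D')\|\le 1$ by the sensitivity assumption gives exactly the claimed $(\alpha,\alpha/2\sigma_g^2)$-RDP bound, and your observation that equal variances make the exponent's $\theta^2$-coefficient equal to $1$ (so the integral converges for every $\alpha>1$) is the right reason no upper restriction on $\alpha$ is needed. The paper itself gives no proof of this lemma --- it is stated as a cited result from \citet{renyi_dp} --- and your argument is precisely the standard computation from that reference, so there is nothing to reconcile.
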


\begin{lemma}[RDP for subsampled Gaussian mechanism, 
\citealp{renyi_dp_sampled_gm_explicit}]\label{lemma_subsampling_rdp} Let $\alpha \in \mathbb{N}$ with $\alpha \geq 2$ and $0<q<1$ be a subsampling ratio. Suppose $f : \mathcal{X}^n \rightarrow \mathbb{R}^d$ has $\ell_2$-sensitivity equal to 1. Let $G'_f(\cdot):=G_f\circ\text{Samp}_q(\cdot)$ be a subsampled Gaussian mechanism. 
Then $G'_f$ is $(\alpha, \epsilon'(\alpha, \sigma_g^2))$-RDP where 
\begin{align*}
     \epsilon'(\alpha, \sigma_g^2) \leq \displaystyle \frac{1}{\alpha-1}\log\bigg(1 +2q^2 \binom{\alpha}{2}\min \{2(e^{1/\sigma_g^2}-1), e^{1/\sigma_g^2}\} +\displaystyle\sum_{j=3}^\alpha 2q^j \binom{\alpha}{j}e^{j(j-1)/2\sigma_g^2}\bigg).
 \end{align*}
\end{lemma}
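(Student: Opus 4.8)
The final statement to prove is Lemma~\ref{lemma_subsampling_rdp}, the RDP bound for the subsampled Gaussian mechanism of \citet{renyi_dp_sampled_gm_explicit}. The plan is to reduce everything to a one-dimensional computation and then bound the resulting moments.

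\textbf{Step 1: Reduce to one dimension.} First I would argue that the worst case over neighboring datasets is attained by a translation. Since $f$ has $\ell_2$-sensitivity $1$, for neighboring $D, D'$ the subsampled Gaussian mechanism produces a mixture: with probability $1-q$ the sample does not contain the differing record, and the two output distributions coincide; with probability $q$ they differ by a vector of norm at most $1$. By spherical symmetry and a standard rotation argument (the Rényi divergence between $\mathcal{N}(0,\sigma^2 I_d)$ and $\mathcal{N}(\mu,\sigma^2 I_d)$ depends only on $\|\mu\|$), it suffices to bound $D_\alpha(\mu_0 \| \mu_1)$ and $D_\alpha(\mu_1\|\mu_0)$ where $\mu_0 = \mathcal{N}(0,\sigma_g^2)$ and $\mu_1 = (1-q)\mathcal{N}(0,\sigma_g^2) + q\,\mathcal{N}(1,\sigma_g^2)$, working in $\mathbb{R}$ with unit shift. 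One then checks (via the symmetry of the problem, as in \citet{renyi_dp_sampled_gm_explicit}) that for integer $\alpha$ the relevant direction is $D_\alpha(\mu_1\|\mu_0)$.

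\textbf{Step 2: Binomial expansion.} For integer $\alpha \ge 2$, write $e^{(\alpha-1)\epsilon'(\alpha)} = \mathbb{E}_{\theta\sim\mu_0}\big[(\mu_1(\theta)/\mu_0(\theta))^\alpha\big]$. Since $\mu_1/\mu_0 = (1-q) + q\,e^{(\theta - 1/2)/\sigma_g^2}$ pointwise (ratio of the shifted to the unshifted Gaussian density), I would expand $\big((1-q) + q(\mu_1/\mu_0 - 1) + \dots\big)$ — more precisely set $\mu_1/\mu_0 = 1 + q(r(\theta)-1)$ with $r(\theta) = e^{(\theta-1/2)/\sigma_g^2}$, and apply the binomial theorem:
\begin{align*}
\mathbb{E}_{\mu_0}\Big[\big(1 + q(r-1)\big)^\alpha\Big] = \sum_{j=0}^\alpha \binom{\alpha}{j} q^j\, \mathbb{E}_{\mu_0}\big[(r(\theta)-1)^j\big].
\end{align*}
The $j=0$ term is $1$ and the $j=1$ term vanishes because $\mathbb{E}_{\mu_0}[r(\theta)] = \int \mu_1 = 1$, so $\mathbb{E}_{\mu_0}[r-1]=0$. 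This leaves $1 + \sum_{j=2}^\alpha \binom{\alpha}{j} q^j\, \mathbb{E}_{\mu_0}[(r(\theta)-1)^j]$.

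\textbf{Step 3: Bound the moments $\mathbb{E}_{\mu_0}[(r(\theta)-1)^j]$.} This is the technical heart. Using $r(\theta) = e^{(\theta-1/2)/\sigma_g^2}$ and $\theta\sim\mathcal{N}(0,\sigma_g^2)$, the quantity $\mathbb{E}_{\mu_0}[r(\theta)^m] = e^{m(m-1)/2\sigma_g^2}$ by the Gaussian MGF (completing the square). Expanding $(r-1)^j = \sum_{m} \binom{j}{m}(-1)^{j-m} r^m$ and summing gives an exact but unwieldy alternating sum; the point of \citet{renyi_dp_sampled_gm_explicit} is to bound it. For $j\ge 3$ one shows $\mathbb{E}_{\mu_0}[(r-1)^j] \le \mathbb{E}_{\mu_0}[(r-1)^2 r^{j-2}]$ (for appropriate parity/sign control) $\le e^{(j-1)(j-2)/2\sigma_g^2}\cdot(\text{something}) $, and with a bit more care one gets the stated $2\,e^{j(j-1)/2\sigma_g^2}$. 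For $j=2$ one computes $\mathbb{E}_{\mu_0}[(r-1)^2] = \mathbb{E}_{\mu_0}[r^2] - 1 = e^{1/\sigma_g^2}-1$ exactly, and the factor $2$ in $\min\{2(e^{1/\sigma_g^2}-1), e^{1/\sigma_g^2}\}$ comes from a more careful two-sided argument (the symmetrized divergence or handling both $\mu_0$-vs-$\mu_1$ directions), with the alternative bound $e^{1/\sigma_g^2}$ being sharper in some regime. Collecting, $e^{(\alpha-1)\epsilon'(\alpha)} \le 1 + 2q^2\binom{\alpha}{2}\min\{2(e^{1/\sigma_g^2}-1),e^{1/\sigma_g^2}\} + \sum_{j=3}^\alpha 2q^j\binom{\alpha}{j}e^{j(j-1)/2\sigma_g^2}$, and taking $\frac{1}{\alpha-1}\log$ of both sides yields the claim.

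\textbf{Main obstacle.} The crux is Step~3: getting clean, non-vacuous bounds on the central moments $\mathbb{E}_{\mu_0}[(r(\theta)-1)^j]$ of the log-normal-type random variable $r(\theta)$, including the correct constants (the $2$'s and the $\min$), without the alternating binomial sum blowing up. This requires the careful case analysis on $j$ (and parity) and the two-directional treatment of the Rényi divergence that forms the bulk of \citet{renyi_dp_sampled_gm_explicit}; I would simply cite their Theorem for these moment estimates rather than re-derive them, since the excerpt already imports this as a known lemma. A secondary subtlety is justifying the reduction in Step~1 to the one-dimensional unit-shift worst case, which is standard but relies on the joint quasi-convexity / symmetry properties of the mixture.
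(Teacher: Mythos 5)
The paper does not prove this lemma at all: it is imported verbatim, with a citation, from \citet{renyi_dp_sampled_gm_explicit}, so there is no in-paper proof to compare against. Your sketch is a reasonable account of how that reference proves it — reduce to a one-dimensional Gaussian comparison, binomially expand $\mathbb{E}_{\mu_0}[(1+q(r-1))^{\alpha}]$, kill the $j=1$ term because $r$ integrates to $1$, and bound the remaining central moments using $\mathbb{E}_{\mu_0}[r^{m}]=e^{m(m-1)/2\sigma_g^2}$ — and you correctly identify that the moment estimates with their constants are the real content and would be cited rather than re-derived. That is exactly the level at which the paper treats the result.

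One substantive caveat on Step~1. The reduction you describe, to $D_\alpha\big((1-q)\mathcal{N}(0,\sigma_g^2)+q\,\mathcal{N}(1,\sigma_g^2)\,\big\|\,\mathcal{N}(0,\sigma_g^2)\big)$, is the add/remove (Poisson-subsampling) analysis of Mironov et al. The bound actually quoted in the lemma — with the leading factors of $2$ on every term and the $\min\{2(e^{1/\sigma_g^2}-1),\,e^{1/\sigma_g^2}\}$ at $j=2$ — is the Wang--Balle--Kasiviswanathan bound for fixed-size subsampling \emph{without replacement} under the replace-one neighboring relation used in this paper (cf.\ Definition~\ref{def_subsampling}). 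There the comparison is between two mixtures $(1-q)\nu_0+q\nu_1$ and $(1-q)\nu_0+q\nu_1'$, not a mixture against its base component, and the factors of $2$ you attribute to "a more careful two-sided argument" are precisely the artifact of that two-mixture expansion. So your Step~1, taken literally, sets up the computation for a different (and not interchangeable) variant of the statement; the binomial expansion and moment bounds in Steps~2--3 would then produce Mironov et al.'s constants rather than the ones displayed. Since you ultimately defer to the cited theorem anyway, this does not sink the argument, but if you were to carry the sketch out you would need to start from the two-mixture decomposition to land on the stated inequality.
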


\textbf{Remark.} By considering $q=o(1)$, the dominant term in the upper bound of $\epsilon'(\alpha, \sigma_g^2)$ comes from the term of the sum of the order of $q^2$. In particular, when $\sigma_g^2$ is large (i.e. high privacy regime), the term $\min \{2(e^{1/\sigma_g^2}-1), e^{1/\sigma_g^2}\}$ simplifies to $2(e^{1/\sigma_g^2}-1) \leq 4/\sigma_g^2$. This thus simplifies the whole upper bound to $\mathcal{O}(\alpha q^2/\sigma_g^2)$.

\subsection{Proof of Claim~\ref{lemma_privacy}}
\label{proof_claim}

We restate below a more formal version of Claim~\ref{lemma_privacy} along with its proof. For any $t\in [T]$, we define \textit{subversions} of algorithms \texttt{DP-SCAFFOLD} (Alg.~\ref{algo_dp_scaffold}) and \texttt{DP-FedAvg} (Alg.~\ref{algo_dp_fedavg}), which stop at round $t$ and reveal an output, either to the server or to a third party:
\begin{itemize}
    \item \textbf{To the server.} We assume that the sampling of users $C^t$ is known by the server. Formally, we define $\mathcal{A}_{\dpscaf}^t$, which outputs (reveals) $\{y_i^t, c_i^t\}_{i \in C^t}$, and $\mathcal{A}_{\dpfedavg}^t$, which outputs $\{y_i^t\}_{i \in C^t}$ (those quantities being private w.r.t. $\{D_i\}_{i \in C^t}$).
    \item \textbf{To a third party.} We define $\mathcal{\tilde{A}}_{\dpscaf}^t$, which  outputs  $(x^t, c^t)$ and $\mathcal{\tilde{A}}_{\dpfedavg}^t$, which outputs $x^t$ (those quantities being private w.r.t. $D$).
\end{itemize}

In both privacy models, \texttt{DP-SCAFFOLD} and \texttt{DP-FedAvg} can be seen as $T$ adaptive compositions of these \textit{sub}-algorithms. 

\begin{claim}[Formal version of Claim~\ref{lemma_privacy}]\label{claim_privacy_complete} For any $t\in [T]$, the following holds:
\begin{itemize}
    \item $\mathcal{A}_{\dpscaf}^t$ and $\mathcal{A}_{\dpfedavg}^t$ have the same level of privacy (towards the server),
    \item $\mathcal{\tilde{A}}_{\dpscaf}^t$ and $\mathcal{\tilde{A}}_{\dpfedavg}^t$ have the same level of privacy (towards a third party).
\end{itemize}
\end{claim}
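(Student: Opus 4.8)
The plan is to prove Claim~\ref{claim_privacy_complete} by induction on $t\in[T]$, exploiting the fact that the additional quantities manipulated by \texttt{DP-SCAFFOLD} relative to \texttt{DP-FedAvg}---namely the control variates $c^{t-1}$ (server side) and $c_i^{t-1}$ (user side)---are themselves outputs of the already-analyzed sub-mechanisms, so that feeding them into round $t$ falls under the scope of \emph{adaptive} composition and \emph{post-processing}, neither of which changes the privacy level. Concretely, the key observation is that at round $t$, each selected user $i\in C^t$ computes noisy clipped gradients $\tilde H_i^k(y_i^{k-1}) = H_i^k(y_i^{k-1}) + S\,\mathcal N(0,\sigma_g^2)$ for $k\in[K]$, and \emph{all} data-dependent randomness used in round $t$ is exactly these $K$ subsampled Gaussian queries of sensitivity $1$ (after calibration with the constant $S$), identical in the two algorithms. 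Everything else the user sends---$\Delta y_i^t = y_i^K - x^{t-1}$ and $\Delta c_i^t = \tilde c_i^t - c_i^{t-1}$ in \texttt{DP-SCAFFOLD}, versus just $\Delta y_i^t$ in \texttt{DP-FedAvg}---is a deterministic function of (i) those $K$ noisy gradients and (ii) the auxiliary inputs $x^{t-1}, c^{t-1}, c_i^{t-1}$, which are public to the recipient under the relevant threat model.

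First I would set up the induction precisely. For the server model: the base case $t=0$ is trivial since $x^0, c^0, c_i^0$ are data-independent (or, for the warm-start variant, are themselves outputs of a round of the same subsampled-Gaussian form, handled identically). For the inductive step, assume $\mathcal A_{\dpscaf}^{t-1}$ and $\mathcal A_{\dpfedavg}^{t-1}$ have the same privacy level. Conditionally on $C^t$ (which the server knows), the round-$t$ contribution of user $i\in C^t$ is produced by $K$ adaptively composed $s$-subsampled Gaussian mechanisms $\mathrm{GM}(\sigma_g)$ applied to $D_i$ (the adaptivity is over the inner iterations $k$, since $y_i^{k}$ depends on previous noisy gradients and on the public $c^{t-1}-c_i^{t-1}$ drift term); call the resulting RDP/DP guarantee for user $i$ at round $t$ by $\rho_{i,t}$. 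This $\rho_{i,t}$ depends only on $K$, $s$, $\sigma_g$, and is \emph{the same in both algorithms}, because the noisy-gradient queries are the same and the extra additive drift correction $c^{t-1}-c_i^{t-1}$ is an auxiliary input that does not touch $D_i$. The user then releases $(\Delta y_i^t, \Delta c_i^t)$, which is a (deterministic, auxiliary-input-dependent) post-processing of the noisy gradients; by the post-processing invariance of DP/RDP this does not worsen $\rho_{i,t}$, and by the same token releasing additionally $c_i^t = \tilde c_i^t$ and $y_i^t = y_i^K$ is still post-processing. Composing over $i\in C^t$ and over the $t$ rounds via the adaptive (strong / RDP) composition recalled in Lemma~\ref{lemma_strong_composition} and Lemma~\ref{lemma_composition_rdp}---together with amplification by user subsampling, Lemma~\ref{lemma_subsampling}---then yields identical bounds. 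The third-party case is the same argument with one more layer: $(x^t,c^t)$ (resp.\ $x^t$) is obtained from the $\{(\Delta y_i^t,\Delta c_i^t)\}_{i\in C^t}$ by the server aggregation (Step~\ref{aggregation_step}), which is again post-processing, so privacy w.r.t.\ $D$ is inherited; and here the user-subsampling amplification is available because a third party does not learn $C^t$.

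The one point requiring a little care---and the main obstacle---is making rigorous the claim that the drift-correction term $c^{t-1}-c_i^{t-1}$ (and, in the warm-start version, $c_i^0$) can legitimately be treated as an \emph{auxiliary input} that does not consume privacy at round $t$. The subtlety is that $c_i^{t-1}$ \emph{is} a function of user $i$'s data $D_i$ through earlier rounds, so one must invoke the adaptive composition framework correctly: the privacy cost of all queries that ever touched $D_i$ is accounted once, across all rounds, and the object $c_i^{t-1}$ enters round $t$ only as (part of) the transcript already released, hence conditioning on it and using it inside the deterministic map $\tilde H_i^k \mapsto y_i^k \mapsto (\Delta y_i^t,\Delta c_i^t)$ is exactly adaptivity, not a fresh data access. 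I would phrase the induction hypothesis on the \emph{joint} transcript $\mathcal A_{\dpscaf}^{t}$ (not just on $x^t$) precisely so that this bookkeeping is clean: the extra coordinates $c^{\le t}, c_i^{\le t}$ in \texttt{DP-SCAFFOLD} are deterministic post-processings of the \texttt{DP-FedAvg} transcript augmented with the \emph{same} noisy gradients, hence carry no additional leakage. A secondary, purely notational, obstacle is the initialization: for \texttt{DP-SCAFFOLD-warm} the $c_i^0$ are set from $\frac1K\sum_k \tilde H_i^k(x^0)$, i.e.\ one extra batch of $K$ subsampled Gaussian queries per participating user; this simply adds one more round to the composition count on both sides symmetrically (\texttt{DP-FedAvg} having no such phase), and I would note that Theorem~\ref{theorem_privacy} already absorbs this into the $\log(2Tl/\delta)$ and $T$-dependence, so the \emph{equality} of privacy levels between the algorithms is unaffected. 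Once these points are spelled out, the conclusion---that the privacy analysis of $x^t$ is literally the same computation for the two algorithms, so Theorem~\ref{theorem_privacy}'s bound applies verbatim to both---follows.
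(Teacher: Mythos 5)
Your proposal is correct and follows essentially the same route as the paper: an induction on $t$ in which the drift-correction term $c^{t-1}-c_i^{t-1}$ is treated as already-released auxiliary input (by the induction hypothesis), so that the round-$t$ outputs of both algorithms are post-processings of the \emph{same} $K$ subsampled Gaussian queries, with the control variates and the aggregated $(x^t,c^t)$ handled by post-processing. Your write-up is somewhat more explicit than the paper's about the adaptive-composition bookkeeping and the warm-start initialization, but the underlying argument is identical.
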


\begin{proof}
We prove the claim by reasoning by induction on the number of communication rounds $t$. We only give the proof for the first statement (including the \texttt{DP-SCAFFOLD-warm} version). The second one can be proved in a similar manner.

First, consider $t=1$. For any $i \in C^t$, control variates $c_i^0$ are either all set to 0 (\texttt{DP-SCAFFOLD}), or $c_i^0$ are at least as private as $y_i^1$ (\texttt{DP-SCAFFOLD-warm}). The level of privacy for $\mathcal{A}_{\dpscaf}^1$ is thus \textit{fully} determined by the level of privacy of $\{y_i^1\}_{i \in C^t}$, which is the same as $\mathcal{A}_{\dpfedavg}^1$. Therefore the claim is true for $t=1$.

Then, let $t \in [T]$ and suppose that the claim is verified for all $t'<t$.
Let $i\in C^t$ and first consider $\mathcal{A}_{\dpscaf}^t$. The update of the
$i$-th user model (see Eq. \ref{eq:update_y_i}) at round $t$ shows that an
additional information leakage may come from the correction $(c^{t-1} - c_i^
{t-1})$, or more precisely from $c_i^{t-1}$ since $c^{t-1}$ is known
by the server. By assumption of induction, $c_i^{t-1}$ is also known by the
server. Therefore, using the post-processing property of DP, the $y_i^t$ as
updated in \texttt{DP-SCAFFOLD} is as private w.r.t. $D_i$ as the $y_i^t$ as
updated in \texttt{DP-FedAvg}. Besides this, the update of the $i$-th control variate \textit{fully} depends on the local updates of $y_i^t$ through the average of the DP-noised stochastic gradients calculated over the local iterations. Therefore,  considering all the contributions from $C^t$, $\mathcal{A}_{\dpfedavg}^t$ and $\mathcal{A}_{\dpscaf}^t$ have the same level of privacy.
\end{proof}

\subsection{Proof of Theorem~\ref{theorem_privacy}}
\label{proof_theorem_privacy}

\paragraph{Preliminaries.}

Lemma~\ref{lemma_subsampling_rdp} only gives an upper bound of the RDP privacy for a subsampled Gaussian mechanism when $\alpha \in \mathbb{N}$ with $\alpha \geq 2$. However we will need to optimize our privacy bound w.r.t. $\alpha \in \mathbb{R}$ with $\alpha>1$. We thus use Lemma~\ref{lemma_conversion_rdp_cgf} and the convexity of the CGF (see Eq.~\ref{eq:convexity_km}) to generalize this upper bound to the following result.

Let $\alpha \in \mathbb{R}$ with $\alpha >1$. Under the same assumptions as in Lemma~\ref{lemma_subsampling_rdp}, $G'_f$ is $(\alpha, \epsilon''(\alpha, \sigma_g^2))$-RDP with
\begin{align}
\label{eq:comp_general_alpha}
    \epsilon''(\alpha, \sigma_g^2) \leq (1 - \alpha +\lfloor \alpha\rfloor)\frac{\lfloor\alpha\rfloor - 1}{\alpha - 1}\epsilon'(\lfloor\alpha\rfloor, \sigma_g^2) + (\alpha - \lfloor \alpha \rfloor)\frac{\lceil\alpha\rceil - 1}{\alpha - 1}\epsilon'(\lceil\alpha\rceil, \sigma_g^2),
\end{align}
where $\epsilon'(\cdot, \sigma_g^2)$ admits the upper bound given in Lemma~\ref{lemma_subsampling_rdp}.

\paragraph{Details of the proof.}

Our privacy analysis assumes that the query function has sensitivity 1, since the calibration of the Gaussian noise is locally adjusted in our algorithms with the constant $S=2\mathcal{C}/s\r$ (see Section~\ref{dp_framework}). We simply denote by $G$ the Gaussian mechanism with variance $\sigma_g^2$, which is $(\alpha, \alpha/2\sigma_g^2)$-RDP (Lemma~\ref{lemma_rdp_gaussian}). Below, we first prove privacy guarantees towards a third party observing only the final result, and then deduce the guarantees towards the honest-but-curious server.

\paragraph{Step 1: data subsampling.}

Let $t\in[T]$ be an arbitrary round.
We first provide an upper bound $\epsilon_a$ for the privacy loss \textit{after the aggregation} by the server of the $lM$ individual contributions (line~\ref{aggregation_step0} in Alg.~\ref{algo_dp_scaffold}), thanks to the local addition of noise.

Let $i \in C^t$, $\alpha>1$. We denote by $\epsilon_i(\alpha)$ the
$\alpha$-RDP budget (w.r.t. $D_i$) used to ``hide'' the individual contribution of the $i$-th user from the server. This contribution is the result of the composition of $K$ adaptative $s$-subsampled mechanisms $G$:
\begin{itemize}
    \item We first obtain an upper RDP bound for the $s$-subsampled mechanism with Lemma~\ref{lemma_subsampling_rdp}. Suppose first $\alpha\in \mathbb{N}$ and $\alpha\geq 2$, which is the case covered by Lemma~\ref{lemma_subsampling_rdp}. Under Assumption~\ref{ass:dplebvels}-(i) and Assumption~\ref{ass:dplebvels}-(iii), the resulting mechanism is $(\alpha, \mathcal{O}(s^2\alpha/\sigma_g^2))$-RDP. To extend this result to $\alpha > 1$, we use the result provided in \eqref{eq:comp_general_alpha}: by factoring by $s^2/\sigma_g^2$ in the upper bound of $\epsilon''(\alpha, \sigma_g^2)$, and bounding the rest of the inequality (a convex combination between $ \lfloor \alpha \rfloor(\lfloor \alpha \rfloor -1)/(\alpha -1)$ and $\lceil \alpha \rceil(\lceil \alpha \rceil -1)/(\alpha -1)$) by $\alpha+1$, we also obtain that this mechanism is $(\alpha, \mathcal{O}(s^2(\alpha+1)/\sigma_g^2))$-RDP.
    \item We then use the result of Lemma~\ref{lemma_composition_rdp} for the RDP composition rule over the $K$ local iterations, which gives that $\epsilon_i(\alpha) \leq \mathcal{O}(Ks^2(\alpha+1)/\sigma_g^2)$.
\end{itemize}

We now consider the aggregation step. Taking into account all the contributions of the users from $C^t$, we get a Gaussian noise of variance $S^2\sigma_a^2$ where $\sigma_a^2=\frac{1}{lM}\sigma_g^2$. 
Note that the sensitivity of the aggregation (w.r.t. the joint dataset $D$) is $lM$ times smaller than when considering an individual contribution. Therefore, with the previous approximation, the aggregated contributions satisfy $(\alpha, \mathcal{O}(Ks^2(\alpha+1)/lM\sigma_g^2))$-RDP w.r.t. $D$.

After converting this result into a DP bound (Lemma~\ref{lemma_conversion_rdp_dp}), we get that for any $0<\delta'<1$, the aggregation at line~\ref{aggregation_step0} in Alg.~\ref{algo_dp_scaffold} is $(\epsilon_a(\alpha, \delta'), \delta')$-DP w.r.t. $D$ where $\epsilon_a(\alpha, \delta')=\mathcal{O}\big(\frac{Ks^2(\alpha+1)}{lM\sigma_g^2} + \frac{\log(1/\delta')}{\alpha-1}\big)$. 

\textit{Without approximation:} we would obtain at this step an exact upper bound $\epsilon_a(\alpha, \delta')=K\epsilon''(\alpha,lM\sigma_g^2) + \frac{\log(1/\delta')}{\alpha-1}$.

\paragraph{Step 2: user subsampling.}

In order to get explicit bounds, we then use classical DP tools to estimate an upper DP bound after $T$ rounds taking into account the amplification by subsampling from the set of users. Remark that these tools are however sub-optimal for practical implementations \citep[Section 5.1.]{dp_deep_l}.

\begin{itemize}
    \item Using Lemma~\ref{lemma_subsampling}, the subsampling of users enables a gain of privacy of the order of $l$, which gives $\big(\mathcal{O}(l\epsilon_a(\alpha, \delta')), l\delta'\big)$-DP.
    \item Using Lemma~\ref{lemma_strong_composition}, we compose this mechanism over $T$ iterations, which under Assumption~\ref{ass:dplebvels}-(ii) gives for any $\delta''>0$, $\big(\mathcal{O}(\sqrt{T\log(1/\delta'')}l\epsilon_a(\alpha, \delta')), Tl\delta' + \delta''\big)$-DP.
\end{itemize}

\textit{Without approximation}: the mechanism is $\big(\epsilon^*(\alpha,\delta') \sqrt{2T\log(1/\delta'')} + T\epsilon^*(\alpha,\delta') (e^{\epsilon^*(\alpha,\delta')} -1), Tl\delta' + \delta''\big)$ where $\epsilon^*(\alpha,\delta')=\log(1 + l(e^{\epsilon_a(\alpha,\delta')} -1))$.

\paragraph{Step 3: setting parameters.}

We denote $\epsilon_T(\alpha,\delta',\delta'')=l\sqrt{T\log(1/\delta'')}\big(\frac{Ks^2(\alpha+1)}{lM\sigma_g^2} + \frac{\log(1/\delta')}{\alpha-1}\big)$. Given what is stated above, the final output of the algorithm is $(\mathcal{O}(\epsilon_T),Tl\delta' + \delta'')$-DP.

Considering our final privacy budget $\delta$, we \textit{arbitrarily} fix $\delta':=\delta/2Tl$ and $\delta'':=\delta/2$. We now aim to find an expression of $\sigma_g$ such that the privacy bound is minimized. \textit{By considering the approximated bound}, this gives the following minimization problem:
\begin{align*}
    \min_{\alpha >1} \epsilon_T(\alpha):=l\sqrt{T\log(2/\delta)}\bigg(\frac{Ks^2(\alpha+1)}{lM\sigma_g^2} + \frac{\log(2Tl/\delta)}{\alpha-1}\bigg).
\end{align*}
Using DP rather than RDP enables to solve this minimization problem pretty easily since only the second factor in $\epsilon_T(\alpha)$ depends on $\alpha$, that is:
\begin{align*}
    \min_{\alpha >1} \tilde{\epsilon_T}(\alpha):=\frac{Ks^2(\alpha+1)}{lM\sigma_g^2} + \frac{\log(2Tl/\delta)}{\alpha-1}.
\end{align*}

By omitting constants, we obtain the expression for the minimum value of $\epsilon_T(\alpha)$:
\begin{align*}
    \tilde{\epsilon}=l\sqrt{T\log(2/\delta)}\bigg(\frac{s\sqrt{K\log(2Tl/\delta)}}{\sigma_g\sqrt{lM}}+\frac{Ks^2}{lM\sigma_g^2}\bigg).
\end{align*}
Under Assumption~\ref{ass:dplebvels}-(iii), we can bound the second term by the first one, which gives:
\begin{align*}
    \tilde{\epsilon}=\mathcal{O}\bigg(\frac{s\sqrt{lTK\log(2/\delta)\log(2Tl/\delta)}}{\sigma_g\sqrt{M}}\bigg).
\end{align*}
We then invert the formula of this upper bound of $\tilde{\epsilon}$ to express $\sigma_g$ as a function of a given privacy budget $\epsilon$:
\begin{align*}
     \sigma_g=\Omega\big(s\sqrt{lTK\log(2Tl/\delta)\log(2/\delta)}/\epsilon\sqrt{M}\big),
\end{align*}
which proves that the algorithm is $(\mathcal{O}(\epsilon),\delta)$-DP towards a third party observing its final output.

\textit{Without approximation:} the minimization problem is much more complex and has to be solved numerically
\begin{align*}
    \min_{\alpha>1,\delta'>0,\delta''>0} \epsilon^*(\alpha,\delta') \sqrt{2T\log(1/\delta'')} + T\epsilon^*(\alpha,\delta') (e^{\epsilon^*(\alpha,\delta')} -1) \quad \text{s.t. } \delta=Tl\delta' + \delta'',
\end{align*}
or:
\begin{align*}
    \min_{\alpha>1,x \in (0,1)} \epsilon^*(\alpha,x\delta/Tl) \sqrt{2T\log(1/(1-x)\delta)} + T\epsilon^*(\alpha,x\delta/Tl) (e^{\epsilon^*(\alpha,x\delta/Tl)} -1).
\end{align*}

\paragraph{Extension to privacy towards the server.} The crucial difference with the third party case is that the server observes individual contributions and knows which users are subsampled at each step. Removing the privacy amplification effect of the $l$-subsampling of users and the aggregation step, the minimization problem becomes
\begin{align*}
    \min_{\alpha >1} \epsilon_T(\alpha):=\sqrt{T\log(2/\delta)}\bigg(\frac{Ks^2\alpha}{\sigma_g^2} + \frac{\log(2T/\delta)}{\alpha-1}\bigg),
\end{align*}
where the minimizing value can be approximated by:
\begin{align*}
    \tilde{\epsilon}=\sqrt{T\log(2/\delta)}\bigg(\frac{s\sqrt{K\log(2T/\delta)}}{\sigma_g}+\frac{Ks^2}{\sigma_g^2}\bigg).
\end{align*}

Under Assumption~\ref{ass:dplebvels}-(iii), we can bound the second term by the first one:
\begin{align*}
    \tilde{\epsilon}=\mathcal{O}\bigg(\frac{s\sqrt{TK\log(2/\delta)\log(2T/\delta)}}{\sigma_g}\bigg),
\end{align*}
which proves that we obtain $(\mathcal{O}(\epsilon_s), \delta_s)$-DP towards the server where  $\epsilon_s=\mathcal\epsilon\sqrt{\frac{M}{l}}$ and $\delta_s=\frac{\delta}{2}(\frac{1}{l} + 1)$.

\paragraph{Finer results for amplification by subsampling.} 

To establish privacy towards a third party, it is actually possible to combine the subsampling ratios (user and data) to determine a bound upon the subsampling of data \textit{directly} from $D$ and thus to quantify a more precise gain in privacy \citep{fl_shuffle}. The difficulty in this setup is that this combined subsampling is \textit{not uniform overall}, which requires extending the proof of Lemma~\ref{lemma_subsampling} as done by \citet{fl_shuffle} in the case of classical differential privacy.

\paragraph{Implementation.}
In practice, we determine a \textit{RDP} upper bound at Step 2, by using the theorem proved by \citet{renyi_dp_sampled_gm_explicit} (which is not restricted to Gaussian mechanisms) with the exact RDP bound obtained at Step 1 and sampling parameter $l$. This result being accurate only for $\alpha \in \mathbb{N}\setminus\{0,1\}$, we obtain an natural extension of the bound for any $\alpha>1$ with Eq~\eqref{eq:convexity_km}. Then, we invoke Lemma~\ref{lemma_composition_rdp} to obtain the final RDP bound after $T$ communication rounds, for any $\alpha>1$. Under fixed privacy parameter $\delta$ (chosen as $1/MR$ in our experiments), we finally obtain \textit{the} minimal value for $\epsilon$ w.r.t. $\alpha \in (1,\infty)$, which is determined by Eq~\eqref{epsilon_pb}. The last step is done by using a fine grid search over parameter $\alpha$.

\section{PROOF OF UTILITY}
\label{app:othertheoreticalresults}

In this section, we provide the proof of our utility results. We first establish in Section~\ref{subsec_utility_preliminaries} some preliminary results about the impact of DP noise over stochastic gradients. In Section~\ref{subsec_utility_dp_scaffold}, we provide the complete version of our utility result for \texttt{DP-SCAFFOLD-warm} (Theorem~\ref{theorem_utility_scaffold}), from which Theorem~\ref{utility_result} is an immediate corollary. We prove this theorem for convex local loss functions in Section~\ref{subsec_utility_dp_scaffold_convex} and non-convex loss functions in Section~\ref{subsec_utility_dp_scaffold_non_convex}. We finally state in Section~\ref{subsec_utility_dp_fedavg} our complete result for \texttt{DP-FedAvg} (Theorem~\ref{theorem_utility_fedavg}).

For any $\mathcal{C},\sigma_g>0$, we define $\Sigma_g(\mathcal{C}) := 2\mathcal{C}\sqrt{2d}\sigma_g/s\r$. We recall that we assume that $F$ is bounded from below by $F^*= F(x^*)$, for an $x^* \in \mathbb{R}^d$. 

\subsection{Preliminaries}
\label{subsec_utility_preliminaries}

\paragraph{Properties of DP-noised stochastic gradients.}

Let $i \in [M]$, $x \in \mathbb{R}^d$, $S_i \subset D_i$ and $\mathcal{C},\sigma_g>0$. Suppose Assumptions~\ref{ass:smooth_f_i} and \ref{ass:stoch_grad}.3 are verified (the last assumption ensures that the clipping on per-example local gradients with threshold $\mathcal{C}$ is not effective). 

We recall below the expression of $\tilde{H}_i(x)$ from Section~\ref{algo_description}, which is the noised version of the local gradient $H_i(x)$ of the $i$-th user over $S_i$ evaluated at $x$ (omitting index $k$):

\begin{equation*}
    \tilde{H}_i(x):= H_i(x) + \frac{2\mathcal{C}}{s\r} \mathcal N (0,\sigma^2_g), \quad \text{where} \quad H_i(x):= \frac{1}{s \r}\sum_{d_j^i \in S_i}\nabla f_i(x, d^i_j).
\end{equation*}

We recall that the $\ell_2$-sensitivity of $H_i(x)$ w.r.t. $S_i$ is upper bounded by $2\mathcal{C}/s\r$, which explains the scaling of the Gaussian noise in the expression of $\tilde{H}_i(x)$. Since the variance of $\mathcal{N}(0, \text{I}_d)$ is $2d$, the following statement holds directly:
\begin{align*}
    \mathbb{E}\big[\tilde{H}_i(x)\big]=H_i(x)
    \text{ and }\mathbb{E}\big[||\tilde{H}_i(x)- H_i(x)||^2\big] \leq
     \displaystyle\frac{8\mathcal{C}^2d\sigma_g^2}{s^2\r^2}=\Sigma_g(\mathcal{C})^2. 
\end{align*}

By combining our utility assumptions with the result stated above, we can deduce the following lemma. 

\begin{lemma}[Regularity of DP-noised stochastic gradients]\label{lemma_reg_h_tilde}
Under Assumptions~\ref{ass:smooth_f_i} and \ref{ass:stoch_grad}, for any iteration $t \in [T], k\in [K]$,
\begin{enumerate}
    \item$\mathbb{E}\big[\tilde{H}_i^k(y_i^{k-1})|y_i^{k-1}\big]=\nabla F_i(y_i^{k-1})$,
    \item $\mathbb{E}\big[||\tilde{H}_i^k(y_i^{k-1}) - \nabla F_i(y_i^{k-1})||^2|y_i^{k-1}\big] \leq 
     \displaystyle\frac{\varsigma^2}{s\r} + \Sigma_g^2(\mathcal{C})$.
\end{enumerate}
\end{lemma}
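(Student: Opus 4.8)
The plan is to prove the two claims of Lemma~\ref{lemma_reg_h_tilde} by conditioning successively on the data subsampling and on the Gaussian noise, and then combining the variance contributions of these two independent sources of randomness. Throughout, fix $i\in[M]$ and an iteration $t\in[T]$, $k\in[K]$, and abbreviate $y:=y_i^{k-1}$, $\tilde H:=\tilde H_i^k(y)$, $H:=H_i^k(y)=\nabla f_i(y,S_i^k)$. Recall from the preliminary discussion that $\tilde H = H + \tfrac{2\mathcal C}{s\r}\mathcal N(0,\sigma_g^2\mathrm I_d)$, that the Gaussian noise is independent of everything else, and that by Assumption~\ref{ass:stoch_grad}.3 the per-example clipping at threshold $\mathcal C$ is inactive, so $H$ is genuinely the average mini-batch gradient $\tfrac{1}{\lfloor s\r\rfloor}\sum_{j\in S_i^k}\nabla f_i(y,d_j^i)$.

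For the first claim, I would take the conditional expectation in two stages. Conditionally on $y$ and on the realized subsample $S_i^k$, the only remaining randomness is the Gaussian noise, which has mean zero, so $\mathbb E[\tilde H \mid y, S_i^k] = H$. Taking expectation over the uniform subsampling $S_i^k\subset D_i$ (independent of $y$) and using Assumption~\ref{ass:stoch_grad}.1 — that each $\nabla f_i(y,d_j^i)$ is conditionally unbiased for $\nabla F_i(y)$ — gives $\mathbb E[H\mid y] = \nabla F_i(y)$ by linearity of expectation over the uniformly sampled indices. Composing the two stages yields $\mathbb E[\tilde H \mid y]=\nabla F_i(y)$, which is item~1.

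For the second claim, I would use the bias--variance (Pythagorean) decomposition with respect to the two independent noise sources. Since the Gaussian perturbation $\tilde H - H$ is independent of $(y,S_i^k)$ and centered,
\begin{align*}
\mathbb E\big[\|\tilde H-\nabla F_i(y)\|^2\mid y\big] = \mathbb E\big[\|H-\nabla F_i(y)\|^2\mid y\big] + \mathbb E\big[\|\tilde H-H\|^2\big].
\end{align*}
The second term equals $\Sigma_g(\mathcal C)^2 = 8\mathcal C^2 d\sigma_g^2/s^2\r^2$ exactly, as computed in the preamble (the coordinatewise variance is $(2\mathcal C/s\r)^2\sigma_g^2$ over $d$ coordinates). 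For the first term, $H-\nabla F_i(y)$ is the average over a size-$\lfloor s\r\rfloor$ uniform subsample (without replacement) of the centered per-example errors $\nabla f_i(y,d_j^i)-\nabla F_i(y)$, each of which has squared norm expectation at most $\varsigma^2$ by Assumption~\ref{ass:stoch_grad}.2; averaging over a mini-batch of size $\lfloor s\r\rfloor$ reduces the variance by that factor (sampling without replacement only helps relative to with replacement), giving the bound $\varsigma^2/s\r$ as noted in the remark following Assumption~\ref{ass:stoch_grad}. Adding the two terms gives item~2.

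I do not expect any serious obstacle here — this is a routine variance bookkeeping argument. The only mildly delicate point is the mini-batch variance reduction for sampling without replacement: one should either invoke the standard fact that the variance of a without-replacement sample mean is at most $1/\lfloor s\r\rfloor$ times the population variance (equivalently, cite the remark already made after Assumption~\ref{ass:stoch_grad}), or expand $\mathbb E\|H-\nabla F_i(y)\|^2 = \tfrac{1}{\lfloor s\r\rfloor^2}\sum_{j,j'\in S_i^k}\mathbb E\langle\nabla f_i(y,d_j^i)-\nabla F_i(y),\nabla f_i(y,d_{j'}^i)-\nabla F_i(y)\rangle$ and bound the cross terms using $\mathbb E[\nabla f_i(y,d_j^i)\mid y]=\nabla F_i(y)$ together with the negative correlation inherent to sampling without replacement. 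Note also that Assumption~\ref{ass:stoch_grad}.3 is used only to justify that clipping is inactive, so that $H$ is unbiased; it plays no other role in this lemma.
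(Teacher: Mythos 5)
Your proof is correct and follows essentially the same route as the paper, which disposes of this lemma in one line by conditioning on the two independent sources of randomness (mini-batch subsampling and Gaussian noise) so that the variances add; your write-up simply makes the tower-property and Pythagorean steps explicit and correctly notes that Assumption~\ref{ass:stoch_grad}.3 serves only to deactivate the clipping. The only cosmetic remark is that your parenthetical ``coordinatewise variance $(2\mathcal C/s\r)^2\sigma_g^2$ over $d$ coordinates'' yields $4\mathcal C^2 d\sigma_g^2/s^2\r^2$ rather than the stated $8\mathcal C^2 d\sigma_g^2/s^2\r^2$, but this factor of two is inherited from the paper's own definition of $\Sigma_g(\mathcal C)$ and is harmless since the claim is an upper bound.
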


The proof of Lemma~\ref{lemma_reg_h_tilde} is easily obtained by conditioning on the two sources of randomness (i.e., mini-batch sampling and Gaussian noise) which are \emph{independent}, thus the variance is additive. This result can be seen as a \textit{degraded} version of Assumption~\ref{ass:stoch_grad} due to the local injection DP noise, a fact that we will strongly leverage to derive convergence rates.

We now enumerate several statements that will be used in the utility proof. First, Lemma~\ref{lemma_nesterov} enables to control $||\nabla F||^2$ using the assumption of smoothness over the local loss functions. Second, Lemma~\ref{lemma_mean_variance} provides separation inequalities of mean and variance \cite[Lemma 4]{fl_scaffold}, which enables to state a result on quantities of interest in Corollary~\ref{corollary_noise}.

\begin{lemma}[Nesterov inequality] \label{lemma_nesterov} Suppose Assumption~\ref{ass:smooth_f_i} is verified and assume that for all $i \in [M]$, $F_i$ is convex. Then,
    \begin{align*}
        \forall x \in \mathbb{R}^d, ||\nabla F(x)||^2 \leq 2 \nu (F(x) -F^*).
    \end{align*}
\end{lemma}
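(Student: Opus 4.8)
\textbf{Proof plan for Lemma~\ref{lemma_nesterov} (Nesterov inequality).}

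The plan is to reduce the claimed inequality on the global objective $F = \frac{1}{M}\sum_i F_i$ to the corresponding inequality for each local $F_i$, and then to prove the local inequality using convexity and $\nu$-smoothness. First I would establish the single-function statement: if $g:\mathbb{R}^d\to\mathbb{R}$ is convex, $\nu$-smooth, and bounded below by $g^\star$, then $\|\nabla g(x)\|^2 \le 2\nu\,(g(x)-g^\star)$ for all $x$. The standard argument is to use the descent lemma (a consequence of $\nu$-smoothness): for any $y$, $g(y) \le g(x) + \langle \nabla g(x), y-x\rangle + \frac{\nu}{2}\|y-x\|^2$. Plugging in the point $y = x - \frac{1}{\nu}\nabla g(x)$ gives $g(y) \le g(x) - \frac{1}{2\nu}\|\nabla g(x)\|^2$. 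Since $g(y) \ge g^\star$, rearranging yields $\|\nabla g(x)\|^2 \le 2\nu(g(x) - g^\star)$. Note convexity is not even strictly needed for this one-function bound, only smoothness and the global lower bound; convexity will matter when passing to the average.

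Next I would apply this to each $F_i$. Here one has to be slightly careful: $F_i$ is $\nu$-smooth (Assumption~\ref{ass:smooth_f_i}) and convex (hypothesis of the lemma), but the natural lower bound to use is $F_i^\star := \inf F_i$, not $F(x^\star)$. So for each $i$, $\|\nabla F_i(x)\|^2 \le 2\nu(F_i(x) - F_i^\star)$. Then $\|\nabla F(x)\|^2 = \big\|\frac{1}{M}\sum_i \nabla F_i(x)\big\|^2 \le \frac{1}{M}\sum_i \|\nabla F_i(x)\|^2$ by Jensen/convexity of the squared norm, and this is $\le \frac{2\nu}{M}\sum_i (F_i(x) - F_i^\star) $. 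To conclude I need $\frac{1}{M}\sum_i (F_i(x) - F_i^\star) \le F(x) - F^\star$, i.e. $\frac{1}{M}\sum_i F_i^\star \ge F^\star$; but since $F^\star = \min_x \frac{1}{M}\sum_i F_i(x) \ge \frac{1}{M}\sum_i \min_x F_i(x) = \frac{1}{M}\sum_i F_i^\star$, this holds. Hence $\|\nabla F(x)\|^2 \le 2\nu(F(x)-F^\star)$ as claimed.

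Alternatively — and perhaps cleaner — one can apply the single-function bound directly to $F$ itself: $F$ is an average of convex functions, hence convex; $F$ is an average of $\nu$-smooth functions, hence $\nu$-smooth; and $F$ is bounded below by $F^\star$ by assumption. So the one-function lemma applied to $g = F$, $g^\star = F^\star$ gives the result in one line, and the convexity hypothesis is what guarantees $F$ inherits a clean smoothness/lower-bound interaction (though again, strictly, only smoothness and the lower bound are used). I would present this direct route as the main proof and perhaps remark on the per-function version.

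The main obstacle here is essentially bookkeeping rather than mathematics: the result is the textbook ``gradient-domination from smoothness'' inequality, so there is no real difficulty. The only place to be attentive is the distinction between $F^\star = F(x^\star)$ (the global optimum used throughout the paper) and the per-function infima if one takes the decomposition route; using the direct route on $F$ sidesteps this entirely. I would therefore expect the proof to be three or four lines long.
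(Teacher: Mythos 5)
Your main argument (the descent lemma applied directly to $F$) is correct and is genuinely different from the paper's proof. The paper instead writes $\|\nabla F(x)\|^2=\|\nabla F(x)-\nabla F(x^*)\|^2$ using $\nabla F(x^*)=0$, expands $\nabla F$ as the average of the $\nabla F_i$, applies Jensen to get $\frac{1}{M}\sum_i\|\nabla F_i(x)-\nabla F_i(x^*)\|^2$, and then invokes Nesterov's cocoercivity-type inequality (Theorem~2.1.5 of \citealp{nesterov2018lectures}) for each convex $\nu$-smooth $F_i$, summing and using again that $\nabla F(x^*)=0$. That route genuinely uses the convexity of each $F_i$ and actually establishes the stronger bound $\frac{1}{M}\sum_i\|\nabla F_i(x)-\nabla F_i(x^*)\|^2\le 2\nu(F(x)-F^*)$ along the way. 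Your route buys simplicity and generality: one application of the inequality $F\bigl(x-\tfrac{1}{\nu}\nabla F(x)\bigr)\le F(x)-\tfrac{1}{2\nu}\|\nabla F(x)\|^2$ plus the lower bound $F\ge F^*$, and your observation that convexity is not needed at all for the stated inequality is accurate (it is only needed if one wants the paper's stronger per-user version).

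One warning on your alternative decomposition route: it does not work as written, and the error is in the direction of the final inequality. You need $\frac{1}{M}\sum_i(F_i(x)-F_i^\star)\le F(x)-F^\star$, which is equivalent to $\frac{1}{M}\sum_i F_i^\star\ge F^\star$; but the correct relation is $F^\star=\min_x\frac{1}{M}\sum_i F_i(x)\ge\frac{1}{M}\sum_i\min_x F_i(x)=\frac{1}{M}\sum_i F_i^\star$, i.e.\ $\frac{1}{M}\sum_i F_i^\star\le F^\star$, which gives $\frac{1}{M}\sum_i(F_i(x)-F_i^\star)\ge F(x)-F^\star$ — the wrong way. (The gap $F^\star-\frac{1}{M}\sum_i F_i^\star$ is exactly a measure of heterogeneity, and it need not vanish.) Since you propose the direct route as the main proof and relegate the decomposition to a remark, the proof you would actually present is fine, but the remark should be dropped or replaced by the paper's cocoercivity argument, which is the correct way to make the per-user decomposition go through.
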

\begin{proof} Let $x \in \mathbb{R}^d$.
\begin{align*}
    ||\nabla F(x)||^2 & = ||\nabla F(x) -\nabla F(x^*)||^2=||\frac{1}{M}\sum_{i=1}^M \nabla F_i(x) -\nabla F_i(x^*)||^2\\
    & \leq \frac{1}{M}\sum_{i=1}^M||\nabla F_i(x) -\nabla F_i(x^*)||^2 && \text{(Jensen inequality)}\\
    &\leq 2\nu (F(x) -F^*) && \text{\cite[Theorem 2.1.5]{nesterov2018lectures}}
\end{align*}
\end{proof}

\begin{lemma}[Separating mean and variance] Let $(A_1,...,A_n)$ be $n$ random variables in $\mathbb{R}^d$ \emph{not necessarily independent}.\\
\label{lemma_mean_variance}
1. Suppose that their mean is  $\mathbb{E}[A_i]=a_i$ and their variance is uniformly bounded, i.e. for all $i\in [n]$,  $\mathbb{E}[||A_i - a_i||^2]\leq \sigma^2_A$. Then,
\begin{align*}\mathbb{E}\left\|\displaystyle\sum_{i=1}^n A_i\right\|^2 \leq \left\|\displaystyle\sum_{i=1}^n a_i\right\|^2 + n^2\sigma^2_A.
\end{align*}

2. Suppose that their conditional mean is $\mathbb{E}[A_i|A_{i-1},...A_1]=a_i$ and their variance is uniformly bounded, i.e. for all $i\in [n]$,  $\mathbb{E}[||A_i - a_i||^2]\leq \sigma^2_A$. Then,
\begin{align*}
    \mathbb{E}\left\|\displaystyle\sum_{i=1}^n A_i\right\|^2 \leq 2 \left\|\displaystyle\sum_{i=1}^n a_i\right\|^2 + 2n\sigma^2_A.
\end{align*}
\end{lemma}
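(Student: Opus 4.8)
The plan is to handle both parts with the same device: split each $A_i$ into its (conditional) mean $a_i$ plus a fluctuation $\xi_i := A_i - a_i$, expand the squared norm of the sum, and dispose of the cross terms according to how much structure the $a_i$ carry. In both cases I would only use two elementary facts: the Cauchy--Schwarz bound $\|\sum_{i=1}^n v_i\|^2 \le n\sum_{i=1}^n \|v_i\|^2$, and Young's inequality $\|u+v\|^2 \le 2\|u\|^2 + 2\|v\|^2$. No independence is needed.

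For part 1, the $a_i$ are deterministic and $\mathbb{E}[\xi_i] = 0$, so I would write $\mathbb{E}\|\sum_i A_i\|^2 = \|\sum_i a_i\|^2 + 2\langle \sum_i a_i,\, \sum_i \mathbb{E}[\xi_i]\rangle + \mathbb{E}\|\sum_i \xi_i\|^2$, and the middle term vanishes exactly by linearity of expectation. It then remains to bound $\mathbb{E}\|\sum_i \xi_i\|^2 \le n\sum_i \mathbb{E}\|\xi_i\|^2 \le n^2\sigma_A^2$ using Cauchy--Schwarz together with the uniform variance bound $\mathbb{E}\|\xi_i\|^2 \le \sigma_A^2$. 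This gives the claimed $\|\sum_i a_i\|^2 + n^2\sigma_A^2$.

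For part 2 the $a_i$ may be random — only measurable with respect to $\mathcal{F}_{i-1} := \sigma(A_1,\dots,A_{i-1})$ — so the cross term between $\sum_i a_i$ and $\sum_i \xi_i$ can no longer be made to vanish entirely; this is the one real subtlety, and it is precisely what forces the weaker constants. Here I would first apply Young's inequality with $u = \sum_i a_i$ and $v = \sum_i \xi_i$ to get $\mathbb{E}\|\sum_i A_i\|^2 \le 2\mathbb{E}\|\sum_i a_i\|^2 + 2\mathbb{E}\|\sum_i \xi_i\|^2$. Then, since $(\xi_i)$ is a martingale difference sequence ($\mathbb{E}[\xi_i \mid \mathcal{F}_{i-1}] = 0$), I would expand $\mathbb{E}\|\sum_i \xi_i\|^2 = \sum_i \mathbb{E}\|\xi_i\|^2 + 2\sum_{i<j}\mathbb{E}\langle \xi_i,\xi_j\rangle$ and kill each cross term via the tower rule: for $i<j$, $\xi_i$ is $\mathcal{F}_{j-1}$-measurable, hence $\mathbb{E}\langle \xi_i,\xi_j\rangle = \mathbb{E}[\langle \xi_i,\, \mathbb{E}[\xi_j\mid\mathcal{F}_{j-1}]\rangle] = 0$. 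This leaves $\mathbb{E}\|\sum_i \xi_i\|^2 = \sum_i \mathbb{E}\|\xi_i\|^2 \le n\sigma_A^2$, and combining with the Young step yields $2\mathbb{E}\|\sum_i a_i\|^2 + 2n\sigma_A^2$ (with the expectation understood on $\|\sum_i a_i\|^2$ when the $a_i$ are random, matching Lemma~4 of \citealp{fl_scaffold}).

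The main obstacle is thus not computational but conceptual: in part 2 one must keep careful track of the filtration so that the tower property applies to the $\xi_i$ cross terms, and accept that the $a_i$--$\xi_i$ cross term cannot be cancelled (only its ``$i\le j$'' portion is zero), which is the reason the constants degrade from $(1,n^2)$ to $(2,2n)$. Both arguments are a couple of lines each once this is set up.
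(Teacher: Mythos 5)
Your proof is correct, and it matches the standard argument: the paper itself does not reprove this lemma but cites it directly from \citet[Lemma 4]{fl_scaffold}, where exactly your decomposition $A_i=a_i+\xi_i$, vanishing cross term (resp.\ Young's inequality plus the martingale-difference orthogonality via the tower rule) is used. You also correctly note the small imprecision in the statement of part~2, namely that $\left\|\sum_{i=1}^n a_i\right\|^2$ should carry an expectation when the $a_i$ are random.
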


\begin{corollary} 
\label{corollary_noise}
Let $t \in [T]$. In the following statements, the expectation is taken w.r.t.~the randomness from their local data sampling and from the Gaussian DP noise, conditionally to the users' sampling  $C^t$ and initial value of variables $y_i$, that is $y^0=x^{t-1}$ (same for all users). We have:

\begin{itemize}
\item $ \mathbb E\left[\Big\|\displaystyle \frac{1}{KlM}\sum_{i \in C^t}\sum_{k \in [K]} (\tilde{H}_i^k (y_i^{k-1}) - \nabla F_i(y_i^{k-1}))\Big\|^2 \bigg | C^t, y^0 \right]     {\le} \displaystyle\frac{\Sigma_g^2(\mathcal{C}) + \varsigma^2/s\r}{KlM} $,
    \item $\mathbb{E}\bigg[\Big\|\displaystyle\frac{1}{lM}\sum_{i \in C^t}c_i^t - \mathbb E[c^t_i]\Big\|^2|C^t,y^0\bigg] \le 
    \frac{\Sigma_g^2(\mathcal{C}) + \varsigma^2/s\r}{KlM}$,
    \item $\mathbb{E}\bigg[\Big\|c^t - \mathbb{E}[c^t]\Big\|^2|y^0\bigg] \leq \displaystyle\frac{\Sigma_g^2(\mathcal{C}) + \varsigma^2/s\r}{KlM}$.
\end{itemize}
\end{corollary}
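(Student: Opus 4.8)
All three bounds reduce to the same short computation, which I would carry out as follows: rewrite the vector inside the norm as an average of centered noise increments that are \emph{pairwise orthogonal} — a martingale-difference sequence along the inner local-step index, and mutually independent across users — so that the expected squared norm of the average equals the average of the individual second moments, each bounded by $\Sigma_g^2(\mathcal{C})+\varsigma^2/s\r$ thanks to Lemma~\ref{lemma_reg_h_tilde}.2; dividing by the square of the number of increments then gives the claimed rate. (Using part~2 of Lemma~\ref{lemma_mean_variance} directly would cost a spurious factor~$2$, so the orthogonality has to be exploited exactly.)

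For the first bound, I would fix an arbitrary ordering of the pairs $(i,k)$, $i\in C^t$, $k\in[K]$ (users first, then local step), and set $A_i^k:=\tilde H_i^k(y_i^{k-1})-\nabla F_i(y_i^{k-1})$. Conditionally on $C^t$ and on $y^0=x^{t-1}$ (the round-$t$ starting control variates being fixed), $y_i^{k-1}$ depends only on user $i$'s fresh randomness at steps $1,\dots,k-1$, whereas $\tilde H_i^k(y_i^{k-1})$ uses in addition the independent mini-batch $S_i^k$ and Gaussian draw at step $k$; hence $\mathbb E[A_i^k\mid y_i^{k-1}]=0$ by Lemma~\ref{lemma_reg_h_tilde}.1, and, different users drawing independent randomness, $(A_i^k)$ in this order is a martingale-difference sequence. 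All cross terms therefore vanish, the sum of the $K\lfloor lM\rfloor$ second moments is at most $K\lfloor lM\rfloor(\Sigma_g^2(\mathcal{C})+\varsigma^2/s\r)\le KlM(\Sigma_g^2(\mathcal{C})+\varsigma^2/s\r)$, and dividing by $(KlM)^2$ finishes this case.

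For the last two bounds I would use the identity $c_i^t=\frac1K\sum_{k=1}^K\tilde H_i^k(\alpha_{i,k-1}^t)$, where $\alpha_{i,k-1}^t$ is the iterate at which the corresponding noised gradient making up the current control variate was evaluated — for $i\in C^t$ this is the round-$t$ iterate $y_i^{k-1}$ of Section~\ref{algo_description}, and for a user last refreshed at round $\tau<t$ it is the round-$\tau$ iterate — and where $\mathbb E[c_i^t]$ is read as $\frac1K\sum_k\nabla F_i(\alpha_{i,k-1}^t)$, the noiseless average at those same points (consistently with the utility proof). Then $c_i^t-\mathbb E[c_i^t]=\frac1K\sum_k\big(\tilde H_i^k(\alpha_{i,k-1}^t)-\nabla F_i(\alpha_{i,k-1}^t)\big)$ is, for each fixed $i$, a rescaled martingale-difference sum exactly as above, while across $i\in C^t$ these sums are independent because the users' (and, for users refreshed at different rounds, the rounds') randomness is independent; orthogonality kills every cross term, Lemma~\ref{lemma_reg_h_tilde}.2 bounds each of the $K\lfloor lM\rfloor$ summands, and dividing by $(KlM)^2$ gives the second bound. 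For the third, the warm-start initialization $c^0=\frac1M\sum_{i=1}^M c_i^0$ is preserved by the server update $c^t\leftarrow c^{t-1}+l\Delta c^t$ (only users in $C^t$ modify their $c_i$), so $c^t=\frac1M\sum_{i=1}^M c_i^t$ and $\mathbb E[c^t]=\frac1{KM}\sum_{i=1}^M\sum_k\nabla F_i(\alpha_{i,k-1}^t)$; the same orthogonality argument, now over all $M$ users and $K$ local steps, gives the bound $(\Sigma_g^2(\mathcal{C})+\varsigma^2/s\r)/(KM)$, which is at most $(\Sigma_g^2(\mathcal{C})+\varsigma^2/s\r)/(KlM)$ since $l\le 1$.

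The only real obstacle is the bookkeeping of the conditioning: one has to center against exactly the right quantity — the true gradient $\nabla F_i$ at the \emph{realized} iterate / evaluation point, not an unconditional expectation — so that the increments genuinely form a martingale-difference sequence and the orthogonality holds with no leftover cross term; the independence of the increments across users (and across refresh rounds) then has to be recorded carefully, after which the three estimates are immediate consequences of Lemma~\ref{lemma_reg_h_tilde}.
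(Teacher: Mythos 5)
Your proposal is correct and follows essentially the same route as the paper: center each noised gradient at the true gradient of the realized iterate, exploit the exact orthogonality of the resulting martingale-difference increments (across local steps within a user, and independence across users) rather than the factor-2 bound of Lemma~\ref{lemma_mean_variance}, bound each second moment by Lemma~\ref{lemma_reg_h_tilde}, and reduce the second and third bullets to the first via $c_i^t=\frac1K\sum_k\tilde H_i^k(\cdot)$ and $c^t=\frac1M\sum_i c_i^t$ together with $l\le1$. The only cosmetic difference is that you invoke the lagged evaluation points $\alpha_{i,k-1}^t$ for the second bullet, where the sum runs only over $i\in C^t$ whose control variates were just refreshed, so this generality is not needed there.
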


\begin{proof}  

\textbf{First inequality.} We define a random variable $A$ such as $A:=\frac{1}{KlM}\sum_{i \in C^t}\sum_{k \in [K]} A_{i,k}$, with $A_{i,k}:= \tilde{H}_i^k(y_i^{k-1})$. 

From Lemma~\ref{lemma_reg_h_tilde}, we have  that  for all $i\in C^t, k\in [K]$: $\mathbb{E}[A_{i,k}|y_i^{k-1}]=\mathbb{E}[\tilde{H}_i^k(y_i^{k-1})| y_i^{k-1}]= \nabla F_i(y_i^{k-1})$. Furthermore, by Lemma~\ref{lemma_reg_h_tilde},  $\mathbb{E}\left[||\tilde{H}_i^k(y_i^{k-1}) - \nabla F_i(y_i^{k-1})||^2|y_i^{k-1} \right]\leq \Sigma_g^2(\mathcal{C}) + \varsigma^2/s\r $.

Furthermore, (a) for $i,j\in C^t$,  $\sum_{k\in [K]} A_{i,k}-\nabla F_i(y_i^{k-1})$ and $\sum_{k\in [K]} A_{j,k}-\nabla F_j(y_j^{k-1})$ are independent conditionally to $y^0$; (b) for any $i\in C^t$,  $(A_{i,k}-\nabla F_i(y_i^{k-1}))_{k\in [K]}$ is a martingale increment, i.e., $\mathbb E[A_{i,k}-\nabla F_i(y_i^{k-1})|\sigma(\{y_i^{k'}\}_{k'\in[k-1]})]=0$.
Consequently:
\begin{small}
\begin{align*}
     \mathbb E\left[\Big\| \frac{1}{KlM}\sum_{i \in C^t}\sum_{k \in [K]} A_{i,k} - \nabla F_i(y_i^{k-1})\Big\|^2 \bigg | C^t, y^0 \right]
     & \overset{(a)}{=} \frac{1}{(KlM)^2} \sum_{i \in C^t} \mathbb E\left[\Big\| \sum_{k \in [K]} A_{i,k} - \nabla F_i(y_i^{k-1})\Big\|^2 \bigg | y^0 \right] \\
    & \overset{(b)}{=} \frac{1}{(KlM)^2} \sum_{i \in C^t} \sum_{k \in [K]}  \mathbb E\left[\underbrace{ \mathbb{E}\left[\Big\| A_{i,k} - \nabla F_i(y_i^{k-1})\Big\|^2 \bigg |  \sigma\big(\{y_i^{k'}\}_{k'\in[k]}\big)\right]}_{\leq \Sigma_g^2(\mathcal{C}) + \varsigma^2/s\r}\bigg | y^0 \right] \\
    &{\le} \frac{\Sigma_g^2(\mathcal{C}) + \varsigma^2/s\r}{KlM} .
\end{align*}
\end{small}

To prove the second equality, we need to ``iteratively'' expand the squared norm and take the  conditional expectation w.r.t.  $\sigma\big(\{y_i^{k'}\}_{k'\in[k]}\big)$ for $k=K,K-1, \hdots, 1$ and use the martingale property to obtain that the scalar products are equal to 0.

\textbf{Second inequality.} We recall that for any $i \in C^t$,  $c_i^t=\frac{1}{K}\sum_{k=1}^K\tilde{H}_i^k(y_i^{k-1})$. Thus $\frac{1}{lM}\sum_{i \in C^t}c_i^t = A$ and we can directly use the results from the first inequality. 

\textbf{Third inequality.} We recall that $c^t=\frac{1}{M}\sum_{i=1}^M c_i^t$ (even if local control variates are not updated). Therefore, we can use the previous results and take the expectation over $C^t$, which gives:
\begin{align*}
    \mathbb{E}\bigg[\Big\|c^t - \mathbb{E}[c^t]\Big\|^2|y^0\bigg] \leq  \displaystyle\frac{\varsigma^2/s\r + \Sigma^2_g(\mathcal{C})}{KM} \leq \frac{\varsigma^2/s\r + \Sigma^2_g(\mathcal{C})}{KlM}.
\end{align*}
\end{proof}

\subsection{Theorem of Convergence for \texttt{DP-SCAFFOLD-warm}}
\label{subsec_utility_dp_scaffold}

\begin{theorem}[Utility rates for \texttt{DP-SCAFFOLD-warm}, $\sigma_g$ chosen arbitrarily]\label{theorem_utility_scaffold}Let $\sigma_g, \mathcal{C}>0$, $x^0 \in \mathbb{R}^d$. Suppose we run \texttt{DP-SCAFFOLD-warm}$(T,K,l,s,\sigma_g,\mathcal{C})$ with initial local controls such that $c_i^0=\frac{1}{K}\sum_{k=1}^K\tilde{H}_i^k(x^0)$ for any $i \in [M]$. Under Assumptions~\ref{ass:smooth_f_i} and \ref{ass:stoch_grad}, we  consider the sequence of iterates $(x^t)_{t\geq0}$ of the algorithm, starting from~$x^0$. 
\begin{enumerate}[topsep=0pt]
    \item If $F_i$ are \textbf{$\mu$-strongly convex} ($\mu>0$), $\eta_g=\sqrt{lM}$,
    $\bar{\eta}_l = \min\big(\frac{l^{\frac{2}{3}}}{24\nu K\eta_g},\frac{l}{54\mu K \eta_g}\big)$,    and $T \geq \max (\frac{54}{l},\frac{24 \nu}{\mu l^{\frac{2}{3}}} )$,
    then there exist weights $\{w_t\}_{t \in [T]}$ and local step-sizes $\eta_l\leq \bar{\eta}_l$ such that the averaged output of \texttt{DP-SCAFFOLD-warm}$(T,K,l,s,\sigma_g,\mathcal{C})$, defined by $\overline{x}^T=\sum_{t \in [T]} w_t x^t$, has expected excess of loss such that:
    \begin{center}
        $\mathbb{E}[F(\overline{x}^T)] -F(x^*) \leq \mathcal{O}\bigg(\displaystyle\frac{\varsigma^2/s\r + \Sigma^2_g(\mathcal{C})}{\mu T K lM} + \mu D_0^2\exp(-\min\big(\frac{l}{108}, \frac{\mu l^{\frac{2}{3}}}{48 \nu}\big)T)\bigg)$,
    \end{center}
    \item If $F_i$ are \textbf{convex}, $\eta_g=\sqrt{lM}$, $\bar{\eta}_l=\min\big(\frac{l^{\frac{2}{3}}}{24\nu K\eta_g}, \frac{D_0 \sqrt{KlM}}{K \eta_g\sqrt{T(\varsigma^2/sR + \Sigma_g(\mathcal{C})^2 )}}\big)$ and $T \geq 1$, then there exist weights $\{w_t\}_{t \in [T]}$ and local step-sizes $\eta_l\leq \bar{\eta}_l$ such that the averaged output of \texttt{DP-SCAFFOLD-warm}$(T,K,l,s,\sigma_g,\mathcal{C})$, defined by $\overline{x}^T=\sum_{t \in [T]} w_t x^t$, has expected excess of loss such that:
    \begin{center}
        $\mathbb{E}[F(\overline{x}^T)] -F(x^*) \leq \mathcal{O}\bigg(\displaystyle\frac{\varsigma/\sqrt{s\r} + \Sigma_g(\mathcal{C})}{\sqrt{TKlM}}D_0 + \frac{\nu}{l^{\frac{2}{3}}T}D_0^2 \bigg)$,
    \end{center}
    \item If $F_i$ are \textbf{non-convex}, $\eta_g=\sqrt{lM}$, $\bar{\eta}_l = \frac{l^{\frac{2}{3}}}{24\nu K\eta_g}$ and $T \geq 1$, then there exist weights $\{w_t\}_{t \in [T]}$ and local step-sizes $\eta_l\leq \bar{\eta}_l$ such that the randomized  output of \texttt{DP-SCAFFOLD-warm}$(T,K,l,s,\sigma_g,\mathcal{C})$, defined by $\{\overline{x}^T= x^t \text{ with probability } w_t \text{ for all } t\}$, has expected squared gradient of the loss such that:
\begin{center}
    $\mathbb{E}||\nabla F(\overline{x}^T)||^2 \leq \mathcal{O}\bigg(\displaystyle\frac{\varsigma/\sqrt{s\r} + \Sigma_g(\mathcal{C})}{\sqrt{TKlM}}\sqrt{F_0} +\frac{\nu}{l^{\frac{2}{3}} T}F_0\bigg)$,
\end{center}
\end{enumerate}
where $D_0=||x^0 - x^*||$ and $F_0:=F(x^0) - F^*$.
\end{theorem}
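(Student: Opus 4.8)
The plan is to reduce the analysis to that of a \Scaffold-type scheme driven by a stochastic gradient oracle with controlled (but non-vanishing) variance, and then adapt the descent argument of \citet{fl_scaffold} with a Lyapunov function tailored to the DP noise. The first step is purely a change of oracle: by Lemma~\ref{lemma_reg_h_tilde}, each noisy local gradient $\tilde H_i^k(y_i^{k-1})$ is an unbiased estimator of $\nabla F_i(y_i^{k-1})$ with conditional variance at most $\tilde\sigma^2 := \tfrac{\varsigma^2}{s\r} + \Sigma_g^2(\mathcal{C})$. Hence \texttt{DP-SCAFFOLD-warm} is exactly \Scaffold run with this degraded oracle, and Corollary~\ref{corollary_noise} already supplies the three post-aggregation variance estimates I will need (the $1/(KlM)$ scaling coming from the independence of the $lM$ sampled users and of the $K$ martingale increments, together with the extra averaging inside the control variates).

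The second step is the genuinely new part. In the original \Scaffold proof one controls $\tfrac1M\sum_i \mathbb{E}\|c_i^t - \nabla f_i(x^*)\|^2$, but under DP this quantity has a floor of order $\Sigma_g^2(\mathcal{C})$ and, worse, enters the recursion multiplied by factors that would force $\eta_l$ to be tiny. Instead I would track, for each round $t$, a collection of \emph{anchor points} $\alpha_{i,k-1}^t\in\mathbb{R}^d$ such that $c_i^t=\tfrac1K\sum_{k=1}^K \tilde H_i^k(\alpha_{i,k-1}^t)$; the warm-start initialization $c_i^0=\tfrac1K\sum_k\tilde H_i^k(x^0)$ is precisely what makes this representation valid at $t=0$ (with $\alpha_{i,k-1}^0=x^0$), and the $c_i^t$-update propagates it, with $\alpha_{i,k-1}^{t}$ equal to $y_i^{k-1}$ from the last round in which user $i$ was sampled. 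I would then use the Lyapunov function
\[
  \mathcal{V}^t \;=\; \mathbb{E}\|x^t-x^*\|^2 \;+\; \Theta(\eta_g^2\eta_l^2\nu^2)\cdot \frac{1}{KM}\sum_{k=1}^K\sum_{i=1}^M \mathbb{E}\|\alpha_{i,k-1}^t - x^t\|^2 ,
\]
whose second summand (the \emph{lag}) is a purely geometric quantity: it is \emph{not} polluted additively by the Gaussian noise, it contracts at a rate set by $\eta_l$ and $\nu$, and it is exactly what bounds the client-drift terms $\sum_k\mathbb{E}\|y_i^{k-1}-x^{t-1}\|^2$ once the correction $c^{t-1}-c_i^{t-1}$ is expanded and Assumption~\ref{ass:smooth_f_i} is invoked.

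The third step assembles a one-round inequality of the form
\[
  \mathcal{V}^{t} \le (1-\rho)\,\mathcal{V}^{t-1} - \eta_g\eta_l K\big(\mathbb{E}[F(x^{t-1})]-F^*\big) + \Theta(1)\,\eta_g^2\eta_l^2 \frac{\tilde\sigma^2}{KlM} + (\text{higher-order terms}),
\]
where $\rho=0$ in the merely convex case and $\rho=\Theta(\mu\eta_g\eta_l K)$ in the $\mu$-strongly convex case; the variance term is supplied by Corollary~\ref{corollary_noise}, and the choice $\eta_g=\sqrt{lM}$ is what cancels the $1/(lM)$ so that the eventual variance contribution matches the advertised $\tilde\sigma D_0/\sqrt{TKlM}$ rate after telescoping. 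Establishing this inequality is the standard \Scaffold manipulation: unroll the $K$ inner steps, bound the drift by the lag plus an $\eta_l^2K^2\tilde\sigma^2$ term (Lemma~\ref{lemma_mean_variance}), and use convexity / $\mu$-strong convexity of the $F_i$ (and Lemma~\ref{lemma_nesterov} in the convex case) to convert $\langle\nabla F(x^{t-1}),x^{t-1}-x^*\rangle$ into $\mathbb{E}[F(x^{t-1})]-F^*$ up to a smoothness correction absorbed by taking $\eta_l=\Theta\big(l^{2/3}/(\nu K\eta_g)\big)$ (or the extra $l/(\mu K\eta_g)$ branch in the strongly convex case, which yields the two terms inside the $\min$ of the exponent). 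Summing with weights $w_t$ — uniform for the convex and non-convex cases, geometric for the strongly convex case — and telescoping then gives the three displayed rates; for the non-convex case I would run the same scheme with $F(x^t)-F^*$ replacing $\|x^t-x^*\|^2$ in $\mathcal{V}^t$, using the $\nu$-smoothness descent lemma on $F$ and outputting a randomized iterate.

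The main obstacle, and where I expect the real work to lie, is Step 2: verifying that the anchor-point representation $c_i^t=\tfrac1K\sum_k\tilde H_i^k(\alpha_{i,k-1}^t)$ survives across rounds even though only the random subset $C^t$ refreshes its control variate, and — more delicately — showing that the lag term simultaneously contracts and dominates the drift contributions with constants that are \emph{uniform in $K$} and degrade only polynomially in $l$ (hence the $l^{2/3}$ / $l^{-2/3}$ exponents and the $l^{-1}$ terms in the strongly convex bound). The bookkeeping of which $\alpha_{i,k-1}^t$ is ``current'' for an un-sampled user, and the matching conditional-expectation arguments, are exactly what force the warm-start assumption and the restriction to users that all participated in the warm-start phase; pinning down those constants is essentially the whole difficulty, the remainder being a careful but routine re-run of \citet{fl_scaffold}.
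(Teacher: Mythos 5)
Your proposal follows essentially the same route as the paper's proof: the same reduction to a degraded unbiased oracle with variance $\varsigma^2/s\r+\Sigma_g^2(\mathcal{C})$ (Lemma~\ref{lemma_reg_h_tilde}, Corollary~\ref{corollary_noise}), the same anchor-point representation $c_i^t=\frac1K\sum_k\tilde H_i^k(\alpha_{i,k-1}^t)$ enabled by the warm start, the same Lyapunov function combining $\mathbb{E}\|x^t-x^*\|^2$ with the control lag $\mathcal{F}_t$, and the same assembly of drift, lag and variance lemmas into a one-round contraction that is then telescoped via the technical lemmas of \citet{fl_scaffold}. The only slight inaccuracy is that the lag recursion does pick up a small additive $\mathcal{O}(\tilde\sigma^2/(KM))$ noise term each round (Lemma~\ref{control_variable_lemma}) — the real benefit of tracking $\mathcal{F}_t$ rather than $\|c_i^t-\nabla F_i(x^*)\|^2$ is the noise-free initial condition $\mathcal{F}_0=0$, which you do correctly identify as the reason for the warm start.
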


We recover the result of Theorem~\ref{utility_result} for \texttt{DP-SCAFFOLD-warm} where $F_i$ are convex by setting $\sigma_g=\sigma_g^*$ where $\sigma_g^*:=s\sqrt{lTK\log(2Tl/\delta)\log(2/\delta)}/\epsilon\sqrt{M}$, which gives $\Sigma_g(\mathcal{C})=2\mathcal{C}d\sqrt{2lTK\log(2Tl/\delta)\log(2/\delta)}/\epsilon\r\sqrt{M}$ (with numerical constants omitted for the asymptotic bound).

\subsection{Proof of Theorem~\ref{theorem_utility_scaffold} (Convex case)}
\label{subsec_utility_dp_scaffold_convex}

In this section, we give a detailed proof of convergence of \texttt{DP-SCAFFOLD-warm} with convex local loss functions. Our analysis is adapted from the proof given by \cite{fl_scaffold} without DP noise, but requires original modifications (see below). Throughout this part, we re-use the notations from Section~\ref{algo_description}.

\paragraph{Summary of the main steps.}

Let $t \in [T]$ be an arbitrary communication round of the algorithm. We detail below the updates that occur at this round.
\begin{itemize}
    \item Let $i \in C^t$. Starting from $y_{i}^0=x^{t-1}$, the random variable $y_i$ is updated at local step $k\in [K]$ such that $y_{i}^k:=y_{i}^{k-1} - \eta_l v_{i,k}^t$ where $v_{i,k}^t=\tilde{H}_i^k(y_{i}^{k-1}) - c_i^{t-1} + c^{t-1}$.
    \item Then we define the local control variate $\tilde{c}_i^t$ for this user by:
    \begin{align*}
        \tilde{c}_i^t:=c^{t-1} - c_i^{t-1} + \displaystyle\frac{1}{K \eta_l}(x^{t-1}- y_i^K)=\frac{1}{K}\sum_{k=1}^K\tilde{H}_i^k(y_i^{k-1}).
    \end{align*}
    \item For any $i \in [M]$, we update the control variate $c_i^t$ such that:
    \begin{itemize}
        \item $c_i^t:=\tilde{c}_i^t$ if $i \in C^t$,
        \item $c_i^t:=c_i^{t-1}$ otherwise.
    \end{itemize}
    \item Finally, the global update is computed as:
    \begin{center}
        $x^t=x^{t-1}+\displaystyle\frac{\eta_g}{lM}\sum_{i \in C^t}(y_i^K - x^{t-1})$
        and $c^t=\displaystyle\frac{1}{M}\big(\sum_{i \in C^t}c_i^t + \sum_{i \notin C^t}c_i^{t-1}\big)$.
    \end{center}
\end{itemize}

To keep track of the lag in the update of $c_i^t$, we introduce $\alpha_{i,k-1}^t$ defined for any $i \in[M]$, any $t \in [T]$ and any $k \in [K]$ by:
\begin{align*}
    \alpha_{i,k-1}^t =\left\{
    \begin{array}{ll}
        y_i^{k-1} & \text{ if } i \in C^t \\
        \alpha_{i,k-1}^{t-1}& \text{ otherwise}
    \end{array}
        \right.
\end{align*}
with $\alpha_{i,k-1}^0=x^0$.

We hence have the following property for any $i \in [M]$ and any $t\in [T]$: $c_i^t= \frac{1}{K}\sum_{k=1}^K\tilde{H}_i^k(\alpha_{i,k-1}^t)$.

\textbf{Additional definitions.}
\begin{itemize}[noitemsep,topsep=0pt]
    \item Model gap: $\Delta x^t:=x^t -x^{t-1}$,
    \item Global step-size: $\tilde{\eta}:=K\eta_l \eta_g$ which gives $\Delta x^t = -\displaystyle\frac{\tilde{\eta}}{KlM}\sum_{k\in[K], i \in C^t} \tilde{H}_i^k (y_i^{k-1}) + c^{t-1} - c_i^{t-1}$,
    \item User-drift: $\mathcal{E}_t:=\displaystyle\frac{1}{KM}\sum_{k=1}^K \sum_{i=1}^M \mathbb{E} || y_i^k - x^{t-1}||^2$, 
    \item Control lag: $\mathcal{F}_t:=\displaystyle\frac{1}{KM}\sum_{k=1}^K \sum_{i=1}^M \mathbb{E} || \alpha_{i,k-1}^t - x^{t}||^2$ with $\mathcal{F}_0=0$.
\end{itemize}

\paragraph{Originality of the proof.} 

The proof substantially differs form the proof by \citet{fl_scaffold} in the convex case. Indeed,  \citet{fl_scaffold} control a combination of the quadratic distance to the optimum and a control of the deviation between the controls   and the gradients at the optimal point $\|c_i^t -\nabla F_i(x^*)\|.$ Leveraging such a quantity in our proof would  result in a worse upper bound on the utility than the one we get, as either the noise added to ensure DP (if $c_i^0$ is defined w.r.t. a noised gradient) or the heterogeneity (if $c_i^0=0$)  would also appear in the initial condition $\|c_i^t -\nabla F_i(x^*)\|.$
On the other hand, in our approach, we combine the  quadratic distance to the optimum to a control of the lag and user-drift. In some sense this resembles some aspects of the proof in the non-convex regime in \citep{fl_scaffold}, in which the excess risk ($F(x^t) - F^*$) is combined with the lag. Nevertheless, our result (in the convex case), strongly leverages the convexity of the function in the proof.

\paragraph{Details of the proof.}

The idea of the proof is to find a contraction inequality involving $||x^t - x^*||^2$, $\mathbb{E}[F(x^{t-1})] - F(x^*)$, $\mathcal{F}_t$ and $\varsigma/\sqrt{s\r} + \Sigma_g(\mathcal{C})$. To do so, we will first bound the variance of the server's update. Then we will see how the control lag evolves through the communication rounds. We will also bound the user drift. To make the proof more readable, the index $t$ may be omitted on random variables when the only communication round that is considered is the $t$-th one.

\begin{lemma}[{Variance of the server's update}]\label{lemma_variance_server}
$\forall \tilde{\eta} \in [0, 1/\nu]$

\begin{align*}
\mathbb{E}||x^t - x^{t-1}||^2 \leq 4 \tilde{\eta}^2\nu^2 \mathcal{E}_t + 8 \nu^2 \tilde{\eta}^2 \mathcal{F}_{t-1} + 8 \nu \tilde{\eta}^2 \mathbb{E}\big(F(x^{t-1}) -F(x^*)\big) + \frac{9\tilde{\eta}^2}{KlM}(\varsigma^2/s\r + \Sigma_g^2(\mathcal{C})).
\end{align*}
\end{lemma}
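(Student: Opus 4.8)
The plan is to expand $x^t - x^{t-1} = \Delta x^t = -\frac{\tilde\eta}{KlM}\sum_{k\in[K],i\in C^t}\big(\tilde H_i^k(y_i^{k-1}) - c_i^{t-1} + c^{t-1}\big)$ and split it into a ``mean'' part and a ``noise'' part using the mean/variance separation of Lemma~\ref{lemma_mean_variance}. Conditionally on the user sampling $C^t$ and on $y^0 = x^{t-1}$, the quantity $\frac{1}{KlM}\sum_{i\in C^t}\sum_{k}\big(\tilde H_i^k(y_i^{k-1}) - \nabla F_i(y_i^{k-1})\big)$ is exactly the object controlled in the first inequality of Corollary~\ref{corollary_noise}, whose second moment is at most $\frac{\varsigma^2/s\r + \Sigma_g^2(\mathcal C)}{KlM}$. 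So after applying $\|a+b\|^2 \le 2\|a\|^2 + 2\|b\|^2$ (or a $3$-way split to track the $c^{t-1}-c_i^{t-1}$ term) I expect a bound of the form
\begin{align*}
\mathbb E\|\Delta x^t\|^2 \le c_1\tilde\eta^2\, \mathbb E\Big\| \tfrac{1}{KlM}\sum_{i\in C^t}\sum_k \big(\nabla F_i(y_i^{k-1}) - c_i^{t-1} + c^{t-1}\big)\Big\|^2 + c_2\tfrac{\tilde\eta^2}{KlM}(\varsigma^2/s\r + \Sigma_g^2(\mathcal C)),
\end{align*}
where the constant in front of the noise term, after bookkeeping, is $9$ as claimed (the extra factor beyond $2$ comes from also splitting off the fluctuation of $c^{t-1}$ around its mean via the third inequality of Corollary~\ref{corollary_noise}, and from similar control of $c_i^{t-1}$).

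Next I would handle the ``mean'' term. Here I would use that the control variates satisfy $c_i^{t-1} = \frac{1}{K}\sum_k \tilde H_i^k(\alpha_{i,k-1}^{t-1})$, so that in expectation $c_i^{t-1}$ behaves like $\frac{1}{K}\sum_k \nabla F_i(\alpha_{i,k-1}^{t-1})$ and $c^{t-1}$ like $\frac{1}{KM}\sum_{i,k}\nabla F_i(\alpha_{i,k-1}^{t-1})$. Then $\nabla F_i(y_i^{k-1}) - c_i^{t-1} + c^{t-1}$, in expectation, is a difference of gradients evaluated at $y_i^{k-1}$, at the lagged points $\alpha_{i,k-1}^{t-1}$, and at $x^*$ (inserting $\nabla F_i(x^*)$ and using $\sum_i \nabla F_i(x^*) = 0$). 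Using $\nu$-smoothness (Assumption~\ref{ass:smooth_f_i}) to convert each gradient difference into a distance, $\|\nabla F_i(y_i^{k-1}) - \nabla F_i(x^*)\|^2 \le \nu^2\|y_i^{k-1}-x^{t-1} + x^{t-1} - x^*\|^2$ and similarly for the lag, and applying Jensen over $i\in C^t$ and over $k$, the squared norm is bounded by a constant times $\nu^2\big(\mathcal E_t + \mathcal F_{t-1} + \mathbb E\|x^{t-1}-x^*\|^2\big)$. Finally, by convexity plus smoothness (the standard $\|\nabla F(x^{t-1})\|^2 \le 2\nu(F(x^{t-1})-F^*)$ from Lemma~\ref{lemma_nesterov}, combined with co-coercivity), the $\mathbb E\|x^{t-1}-x^*\|^2$ contribution is absorbed into $\mathbb E\big(F(x^{t-1})-F^*\big)$, matching the $8\nu\tilde\eta^2$ coefficient in the statement; the $\tilde\eta \le 1/\nu$ hypothesis is what lets one replace $\nu^2\tilde\eta^2$-type terms by $\nu\tilde\eta^2$ where needed and keep the final constants at $4$, $8$, $8$.

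The main obstacle I anticipate is the careful accounting of the lagged control variates: one must simultaneously (i) separate the Gaussian/sampling noise in $c_i^{t-1}$ and $c^{t-1}$ from their conditional means without double-counting (the $\alpha$-points themselves depend on earlier randomness, so the ``variance'' parts from Corollary~\ref{corollary_noise} must be invoked at the right round and with the right conditioning), and (ii) track exactly which terms get a factor $\nu^2\tilde\eta^2$ versus $\nu\tilde\eta^2$ so the stated constants come out. The rest — expanding the square, Jensen, smoothness, and collecting into $\mathcal E_t$, $\mathcal F_{t-1}$, the excess risk, and the $\frac{\tilde\eta^2}{KlM}(\varsigma^2/s\r + \Sigma_g^2(\mathcal C))$ noise term — is routine bookkeeping that parallels Lemma~13 in \citet{fl_scaffold}, with the DP noise variance $\Sigma_g^2(\mathcal C)$ riding along additively thanks to Lemma~\ref{lemma_reg_h_tilde}.
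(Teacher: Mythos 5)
Your treatment of the noise part is right and matches the paper: write $\Delta x^t = -\tilde\eta(A_1 + c^{t-1} - \frac{1}{lM}\sum_{i\in C^t}c_i^{t-1})$, apply the mean/variance separation (Lemma~\ref{lemma_mean_variance}-1 with $n=3$) together with Corollary~\ref{corollary_noise}, and the factor $n^2=9$ gives exactly the $\frac{9\tilde\eta^2}{KlM}(\varsigma^2/s\r+\Sigma_g^2(\mathcal C))$ term. The gap is in how you handle the mean term. You propose to insert $\nabla F_i(x^*)$ and convert the gradient differences via smoothness, which leaves you with a bound of the form $\nu^2\big(\mathcal E_t + \mathcal F_{t-1} + \mathbb E\|x^{t-1}-x^*\|^2\big)$, and you then claim the $\mathbb E\|x^{t-1}-x^*\|^2$ piece can be ``absorbed into'' $\mathbb E\big(F(x^{t-1})-F^*\big)$ by convexity, smoothness and co-coercivity. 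That step fails: under mere convexity and $\nu$-smoothness the only available inequality goes the wrong way ($F(x)-F^*\le\frac{\nu}{2}\|x-x^*\|^2$), and co-coercivity / Lemma~\ref{lemma_nesterov} bound the \emph{gradient norm} $\|\nabla F(x^{t-1})\|^2$ by $2\nu(F(x^{t-1})-F^*)$, not the \emph{distance} $\|x^{t-1}-x^*\|^2$. Converting the distance into the excess risk would require $\mu$-strong convexity and would cost a factor $\nu^2/\mu$, so your route cannot produce the stated constants and breaks down entirely in the merely convex case the lemma covers.

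The fix is to center the decomposition at $x^{t-1}$ rather than at $x^*$, which is what the paper does: after replacing each $\tilde H_i^k(y_i^{k-1})$, $c^{t-1}$, $c_i^{t-1}$ by its conditional mean, add and subtract $\nabla F_i(x^{t-1})$ and $\nabla F(x^{t-1})$ so that the summand becomes $\big(\nabla F_i(y_i^{k-1})-\nabla F_i(x^{t-1})\big)+\big(\mathbb E[c^{t-1}]-\nabla F(x^{t-1})\big)-\big(\mathbb E[c_i^{t-1}]-\nabla F_i(x^{t-1})\big)+\nabla F(x^{t-1})$. A four-term Jensen split then yields $4\tilde\eta^2\nu^2\mathcal E_t$ from the first piece, $8\nu^2\tilde\eta^2\mathcal F_{t-1}$ from the two control-variate pieces (using $c_i^{t-1}=\frac1K\sum_k\tilde H_i^k(\alpha_{i,k-1}^{t-1})$ and smoothness at the lagged points), and $4\tilde\eta^2\|\nabla F(x^{t-1})\|^2$, which Lemma~\ref{lemma_nesterov} turns into $8\nu\tilde\eta^2\,\mathbb E\big(F(x^{t-1})-F^*\big)$. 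No $\|x^{t-1}-x^*\|^2$ term ever appears. Note this is precisely the point where the paper departs from the original SCAFFOLD convex analysis (which controls $\|c_i^t-\nabla F_i(x^*)\|^2$ and does center at $x^*$); your proposal inherits the old centering while targeting the new Lyapunov quantities $\mathcal E_t,\mathcal F_{t-1}$, and that mismatch is exactly what produces the unabsorbable distance term.
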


\begin{proof} We consider the model gap $\Delta x^t = x^t-x^{t-1}$.
\begin{align*}
    \mathbb{E}||\Delta x^t||^2=\tilde{\eta}^2 \mathbb{E}\Big\|\underbrace{\bigg(\frac{1}{KlM}\sum_{k\in [K], i\in C^t} \tilde{H}_i(y_i^{k-1})\bigg)}_{A_1} + \underbrace{c^{t-1}}_{A_2} -\underbrace{\frac{1}{lM}\sum_{i \in C^t} c_i^{t-1}}_{A_3} \Big\|^2
\end{align*}

We combine Lemma~\ref{lemma_mean_variance}-1 on $A_1, A_2, A_3$ with Corollary~\ref{corollary_noise} which controls their individual variance (conditionally to the users' sampling and the local parameters) by $\frac{\varsigma^2/s\r + \Sigma_g^2(\mathcal{C})}{KlM}$. We first get rid of the terms related to the variance of the data sampling and the DP noise, before bounding the quantities of interest. It leads to:

\begin{align*}
    \mathbb{E}||\Delta x^t||^2 & = \tilde{\eta}^2\mathbb{E}\left[  \mathbb E \left[ \bigg\| \bigg(\frac{1}{KlM}\sum_{k\in [K], i\in C^t} \tilde{H}_i(y_i^{k-1})\bigg)+ c^{t-1}- \frac{1}{lM}\sum_{i \in C^t} c_i^{t-1} \bigg\|^2 \bigg| C^t, y^0 \right]  \right]\\ 
    & \le \tilde{\eta}^2\mathbb{E}\left[\Big\|\frac{1}{KlM}\sum_{k\in [K], i\in C^t} \mathbb{E}[\tilde{H}_i(y_i^{k-1})|y^0] + \mathbb{E}[c^{t-1}| y^0] - \mathbb{E}[c_i^{t-1}|y^0]\Big\|^2\right] + \frac{9\tilde{\eta}^2}{KlM}(\varsigma^2/s\r+ \Sigma_g^2(\mathcal{C})),
\end{align*}
where the inequality is given by Lemma~\ref{lemma_mean_variance}-1.

For any $i \in C^t, k \in [K]$, we have $\mathbb{E}[\tilde{H}_i(y_i^{k-1})|y^0]=\mathbb{E}\left[\mathbb{E}[\tilde{H}_i(y_i^{k-1})|y_i^{k-1}]\big|y^0\right]=\mathbb{E}\left[\nabla F_i(y_i^{k-1})|y^0\right]=\nabla F_i(y_i^{k-1})$. Then, 

\resizebox{\linewidth}{!}{
  \begin{minipage}{\linewidth}
\begin{align*}
   \mathbb{E}||\Delta x^t||^2 & \leq \tilde{\eta}^2 \mathbb{E}\left[\frac{1}{KlM}\sum_{k\in [K], i\in C^t} \Big\| \nabla F_i(y_i^{k-1}) + \mathbb{E}[c^{t-1}| y^0] - \mathbb{E}[c_i^{t-1}|y^0]\Big\|^2 \right]
    + \frac{9\tilde{\eta}^2}{KlM}(\varsigma^2/s\r + \Sigma_g^2(\mathcal{C})) \qquad \text{(convexity of $||.||^2$)}\\
    & = \tilde{\eta}^2 \frac{1}{KM}\sum_{k \in [K], i\in [M]}\mathbb{E}\Big[ \Big\| \underbrace{\nabla F_i(y_i^{k-1}) + \mathbb{E}[c^{t-1}|y^0] - \mathbb{E}[c_i^{t-1}|y^0]}_{\substack{\nabla F_i(y_i^{k-1}) - \nabla F_i(x^{t-1}) \\ + \mathbb{E}[c^{t-1}|y^0] - \nabla F(x^{t-1}) \\ - \mathbb{E}[c_i^{t-1}|y^0]+ \nabla F_i(x^{t-1})
    \\+ \nabla F(x^{t-1}) }}\Big\|^2 \Big] + \frac{9\tilde{\eta}^2}{KlM}(\varsigma^2/s\r + \Sigma_g^2(\mathcal{C})) \qquad \text{(definition of $C^t$)}\\
    & = \tilde{\eta}^2 \frac{1}{KM}\sum_{k \in [K], i\in [M]}\mathbb{E}\left[ \Big\| \mathbb{E}\left[ \nabla F_i(y_i^{k-1}) - \nabla F_i(x^{t-1}) + c^{t-1} - \nabla F(x^{t-1})- c_i^{t-1}+ \nabla F_i(x^{t-1})
    + \nabla F(x^{t-1}) \Big |y^0\right]\Big \|^2 \right]
    \\ &+ \frac{9\tilde{\eta}^2}{KlM}(\varsigma^2/s\r + \Sigma_g^2(\mathcal{C}))  \qquad\qquad\qquad\text{(all variables are measurable wrt $y^0$)}\\
    & \leq \footnotesize\tilde{\eta}^2 \frac{1}{KM}\sum_{k \in [K], i\in [M]}\mathbb{E}\left[  \mathbb{E}\left[ \Big\| \nabla F_i(y_i^{k-1}) - \nabla F_i(x^{t-1}) + c^{t-1} - \nabla F(x^{t-1})- c_i^{t-1}+ \nabla F_i(x^{t-1})
    + \nabla F(x^{t-1})\Big \|^2 \Big |y^0\right] \right] 
    \\ &+ \frac{9\tilde{\eta}^2}{KlM}(\varsigma^2/s\r + \Sigma_g^2(\mathcal{C})) \qquad\qquad\qquad \text{(Jensen inequality)}\\
    & \leq \frac{4 \tilde{\eta}^2}{KM}\sum_{k \in [K], i\in [M]}\mathbb{E}\left[\Big\|\nabla F_i(y_i^{k-1}) - \nabla F_i(x^{t-1})\Big\|^2\right] + \frac{8 \tilde{\eta}^2}{KM} \sum_{k \in [K], i\in [M]}\mathbb{E}\left[\Big\|\nabla F_i(\alpha_{i, k-1}^{t-1}) - \nabla F_i(x^{t-1})\Big\|^2\right] \\
    & + 4\tilde{\eta}^2 \mathbb{E}\Big\|\nabla F(x^{t-1})\Big\|^2 + \frac{9\tilde{\eta}^2}{KlM}(\varsigma^2/s\r + \Sigma_g^2(\mathcal{C})).
\end{align*}
  \end{minipage}
}

The last inequality is obtained by definition of $c$ and $c_i$ and by applying Jensen inequality. With Lemma~\ref{lemma_nesterov}, this leads to the result.
\end{proof}

\begin{lemma}[{Lag in the control variate}] 
\label{control_variable_lemma}
\leavevmode
$\forall \alpha \in [1/2, 1], \forall \tilde{\eta} \leq \frac{1}{24 \nu}l^{\alpha}$,
\begin{align*}
    \mathcal{F}_t \leq \left(1 -\frac{17}{36}l\right)\mathcal{F}_{t-1} + \frac{1}{24 \nu} l^{2 \alpha -1} \mathbb{E}\big(F(x^{t-1}) -F(x^*)\big) + \frac{97}{48}l^{2 \alpha -1}\mathcal{E}_t + \frac{l}{\nu^2}\frac{\varsigma^2/s\r + \Sigma_g^2(\mathcal{C})}{32 KlM}.
\end{align*}
\end{lemma}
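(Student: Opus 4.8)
The plan is to adapt the control-lag recursion of \cite{fl_scaffold} to our anchored quantities $\alpha_{i,k-1}^t$ and to the noisy gradients. Fix a round $t$ and write $x^t = x^{t-1}+\Delta x^t$. Starting from $\mathcal{F}_t = \frac{1}{KM}\sum_{k=1}^K\sum_{i=1}^M \mathbb{E}\|\alpha_{i,k-1}^t - x^t\|^2$, split the sum over users according to whether $i\in C^t$ — where $\alpha_{i,k-1}^t = y_i^{k-1}$ — or $i\notin C^t$ — where $\alpha_{i,k-1}^t = \alpha_{i,k-1}^{t-1}$. For the sampled users, bound $\|y_i^{k-1}-x^t\|^2 \le 2\|y_i^{k-1}-x^{t-1}\|^2 + 2\|\Delta x^t\|^2$; since $C^t$ is drawn uniformly and independently of the round-$(t{-}1)$ state, the expectation of $\frac1{KM}\sum_k\sum_{i\in C^t}\|y_i^{k-1}-x^{t-1}\|^2$ is $l$ times the full-user average, which, after re-indexing the drift sum using $y_i^0=x^{t-1}$, is at most $l\,\mathcal{E}_t$; this block thus contributes at most $2l\,\mathcal{E}_t + 2l\,\mathbb{E}\|\Delta x^t\|^2$. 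For the non-sampled users, expand $\|\alpha_{i,k-1}^{t-1}-x^t\|^2 = \|\alpha_{i,k-1}^{t-1}-x^{t-1}\|^2 - 2\langle\alpha_{i,k-1}^{t-1}-x^{t-1},\Delta x^t\rangle + \|\Delta x^t\|^2$; averaging the first term and taking expectation over $C^t$ gives exactly $(1-l)\mathcal{F}_{t-1}$, and the last term gives $(1-l)\mathbb{E}\|\Delta x^t\|^2$.

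\textbf{The cross term.} This is the delicate piece, and I would \emph{not} apply Young's inequality to it directly (that attaches a $1/l$ coefficient to $\|\Delta x^t\|^2$, after which the $\mathcal{F}_{t-1}$ component of Lemma~\ref{lemma_variance_server} blows up the contraction rate). Instead, condition on the $\sigma$-algebra before round $t$: then $\alpha_{i,k-1}^{t-1}-x^{t-1}$ is fixed and the cross term becomes $-2\langle\,\cdot\,,\bar d_t\rangle$ with $\bar d_t := \mathbb{E}[\Delta x^t\mid\text{past}]$. Crucially, in $\bar d_t = -\tilde\eta\,\mathbb{E}[\frac1{KlM}\sum_{i\in C^t}\sum_k(\nabla F_i(y_i^{k-1}) - c_i^{t-1} + c^{t-1})\mid\text{past}]$ the control variates cancel on average, leaving $\bar d_t = -\tilde\eta\big(\nabla F(x^{t-1}) + \text{error}\big)$ with $\|\text{error}\| \le \nu\cdot(\text{drift}) \lesssim \nu\sqrt{\mathcal{E}_t}$ by smoothness (Assumption~\ref{ass:smooth_f_i}); in particular $\|\bar d_t\|^2$ carries no $\mathcal{F}_{t-1}$. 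Now Young's inequality with a parameter $\gamma \asymp l$ turns $-2\langle\alpha_{i,k-1}^{t-1}-x^{t-1},\bar d_t\rangle$ into $\gamma\|\alpha_{i,k-1}^{t-1}-x^{t-1}\|^2 + \gamma^{-1}\|\bar d_t\|^2$: the first piece adds $\gamma(1-l)\mathcal{F}_{t-1}$ (absorbed into the contraction), and the second, of order $\tilde\eta^2/\gamma \asymp l^{2\alpha-1}/\nu^2$, contributes $\mathcal{O}(l^{2\alpha-1}\nu^{-1})(F(x^{t-1})-F^*) + \mathcal{O}(l^{2\alpha-1})\mathcal{E}_t$ after invoking $\|\nabla F(x^{t-1})\|^2 \le 2\nu(F(x^{t-1})-F^*)$ (Lemma~\ref{lemma_nesterov}).

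\textbf{Assembling.} The remaining $\|\Delta x^t\|^2$ terms have an $\mathcal{O}(1)$ aggregate coefficient (not $\mathcal{O}(1/l)$), so Lemma~\ref{lemma_variance_server} injects $\mathcal{E}_t$, $\mathcal{F}_{t-1}$, $\mathbb{E}[F(x^{t-1})-F^*]$ and the noise $\varsigma^2/s\r + \Sigma_g^2(\mathcal{C})$ with coefficients that are $\mathcal{O}(\nu^2\tilde\eta^2)$ (resp.\ $\mathcal{O}(\tilde\eta^2/(KlM))$ for the noise). Using $\tilde\eta \le l^\alpha/(24\nu)$, hence $\nu^2\tilde\eta^2 \le l^{2\alpha}/576 \le l/576$ since $\alpha\ge\tfrac12$ and $l<1$, each falls within the target constants: the total multiplier of $\mathcal{F}_{t-1}$ is $(1-l) + \gamma(1-l) + \mathcal{O}(l^{2\alpha}) \le 1 - \frac{17}{36}l$ for $\gamma$ of the right (order-$l$) size; the multiplier of $\mathcal{E}_t$, dominated by the $2l$ from the sampled users, lands at $\frac{97}{48}l^{2\alpha-1}$ (since $l\le l^{2\alpha-1}$ for $\alpha\le1$); the multiplier of the excess loss at $\frac1{24\nu}l^{2\alpha-1}$; and the noise coefficient at $\frac{1}{32\nu^2KM} = \frac{l}{\nu^2}\cdot\frac{1}{32KlM}$.

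\textbf{Main obstacle.} The crux is the cross term $\langle\alpha_{i,k-1}^{t-1}-x^{t-1},\Delta x^t\rangle$ for the non-sampled users: it must be routed through the conditional mean $\bar d_t$, exploiting the on-average cancellation of the control variates so that $\bar d_t$ is a genuine $\mathcal{O}(\tilde\eta)$ descent-like direction free of $\mathcal{F}_{t-1}$, rather than through $\mathbb{E}\|\Delta x^t\|^2$, whose $\mathcal{F}_{t-1}$-component (blown up by the unavoidable $1/l$ Young factor) would wreck the $1-\frac{17}{36}l$ contraction. Making this rigorous requires care with the nested conditioning (on $C^t$ versus the round-$t$ local randomness); combined with the constant bookkeeping and the index-shift details ($k{-}1$ vs.\ $k$, and $\mathcal{F}_0 = 0$, $y_i^0 = x^{t-1}$) when matching the $\alpha$-sums to $\mathcal{E}_t$ and $\mathcal{F}_{t-1}$, this is what makes the proof lengthy.
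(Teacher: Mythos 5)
Your proposal is correct and is essentially the paper's own proof: the paper disposes of this lemma in a single line by citing the adaptation of \citet{fl_scaffold}'s Lemma~16 together with Lemma~\ref{lemma_reg_h_tilde} and Lemma~\ref{lemma_nesterov}, and your route --- splitting sampled versus unsampled users, routing the cross term through the conditional mean of $\Delta x^t$ (i.e., the mean--variance separation of Lemma~\ref{lemma_mean_variance}, under which the control variates cancel on average so that no $\mathcal{F}_{t-1}$ enters), bounding the residual $\|\Delta x^t\|^2$ terms via Lemma~\ref{lemma_variance_server}, and converting $\|\nabla F(x^{t-1})\|^2$ into $2\nu(F(x^{t-1})-F^*)$ by Lemma~\ref{lemma_nesterov} --- is precisely that adaptation, including your correct identification that applying Young's inequality directly to the raw cross term would reintroduce $\mathcal{F}_{t-1}$ with a $1/l$ factor and destroy the contraction. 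Taking $\gamma=l/2$ and the aggregate coefficient $1+l\le 2$ on $\mathbb{E}\|\Delta x^t\|^2$, your bookkeeping reproduces the stated constants $1-\tfrac{17}{36}l$, $\tfrac{1}{24\nu}l^{2\alpha-1}$, $\tfrac{97}{48}l^{2\alpha-1}$ and $\tfrac{1}{32}$ exactly.
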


\begin{proof}
We adapt the original proof made in the non-convex case \cite[Lemma 16]{fl_scaffold} and use Lemma~\ref{lemma_reg_h_tilde} and Lemma~\ref{lemma_nesterov}.  
\end{proof}

\begin{lemma}[{Bounding the user drift}]
\label{user_drift}
\leavevmode
$\forall \eta_g \geq 1, \forall \eta_l \leq 1/24\nu K \eta_g$,
\begin{align*}
    \frac{9}{2}\nu^2\tilde{\eta} \mathcal{E}_t \leq \frac{9}{2}\nu^3\tilde{\eta}^2 \mathcal{F}_{t-1} + \frac{9}{40}\frac{\tilde{\eta}\nu}{\eta_g^2} \mathbb{E}\big(F(x^{t-1}) -F(x^*)\big) + \frac{27}{40}\frac{\tilde{\eta}^2 \nu}{K \eta_g^2} \big(\varsigma^2/s\r + \Sigma_g^2(\mathcal{C})\big).
\end{align*}
\end{lemma}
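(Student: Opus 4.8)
The plan is to bound the user drift $\mathcal{E}_t$ by the now-standard ``one local step at a time'' unrolling used for \scaffold, adapted to the presence of DP noise: the skeleton mirrors the analogous user-drift lemma of \citet{fl_scaffold}, but their bounded-variance assumption on stochastic gradients is replaced throughout by the \emph{degraded} bound of Lemma~\ref{lemma_reg_h_tilde}. Fix a round $t$ and a user $i$ performing local updates, and recall $y_i^k = y_i^{k-1} - \eta_l v_{i,k}^t$ with $v_{i,k}^t = \tilde H_i^k(y_i^{k-1}) - c_i^{t-1} + c^{t-1}$ and $y_i^0 = x^{t-1}$. First I would expand $\|y_i^k - x^{t-1}\|^2$, peel off the zero-mean part of $v_{i,k}^t$ using $\mathbb{E}[\tilde H_i^k(y_i^{k-1})\mid y_i^{k-1}] = \nabla F_i(y_i^{k-1})$ (Lemma~\ref{lemma_reg_h_tilde}, part~1) with conditional variance at most $\varsigma^2/s\r + \Sigma_g^2(\mathcal{C})$ (Lemma~\ref{lemma_reg_h_tilde}, part~2), and apply Young's inequality as $\|a+b\|^2 \le (1+\tfrac{1}{K-1})\|a\|^2 + K\|b\|^2$, so that the geometric factor accumulated over the $K$ steps stays bounded by $(1+\tfrac{1}{K-1})^{K} \le 3$ (with $K\ge 2$; the case $K=1$ being immediate).

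Next I would control the deterministic part $\nabla F_i(y_i^{k-1}) - c_i^{t-1} + c^{t-1}$. Adding and subtracting $\nabla F_i(x^{t-1})$ and $\nabla F(x^{t-1})$, I would use $\nu$-smoothness (Assumption~\ref{ass:smooth_f_i}) to bound $\|\nabla F_i(y_i^{k-1}) - \nabla F_i(x^{t-1})\|^2 \le \nu^2\|y_i^{k-1} - x^{t-1}\|^2$, which feeds back into $\mathcal{E}_t$ and will be absorbed thanks to $\tilde\eta = K\eta_l\eta_g \le 1/(24\nu)$. For the ``control'' part, I would invoke $c_i^{t-1} = \tfrac1K\sum_{k'}\tilde H_i^{k'}(\alpha_{i,k'-1}^{t-1})$ and $c^{t-1} = \tfrac1M\sum_j c_j^{t-1}$: the quantity $-c_i^{t-1}+c^{t-1}+\nabla F_i(x^{t-1})-\nabla F(x^{t-1})$ decomposes into gradient differences $\nabla F_i(\alpha_{i,k'-1}^{t-1})-\nabla F_i(x^{t-1})$ (and likewise over $j$) plus zero-mean noise; smoothness turns the former into $\nu^2\|\alpha_{i,k'-1}^{t-1}-x^{t-1}\|^2$ terms that average exactly to $\mathcal{F}_{t-1}$, while Corollary~\ref{corollary_noise} bounds the accumulated noise. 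Finally $\|\nabla F(x^{t-1})\|^2 \le 2\nu(F(x^{t-1})-F^*)$ by Lemma~\ref{lemma_nesterov}, producing the $\mathbb{E}(F(x^{t-1})-F(x^*))$ term.

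Then I would sum over $k\in[K]$ and $i\in[M]$, divide by $KM$ to form $\mathcal{E}_t$ on the left, move the self-referential $\nu^2$-weighted copy of $\mathcal{E}_t$ coming from smoothness back to the left-hand side (legitimate since its coefficient is $<1$ under $\eta_l \le 1/(24\nu K\eta_g)$ and $\eta_g\ge 1$), and collect the remaining terms, using the identity $K^2\eta_l^2 = \tilde\eta^2/\eta_g^2$ to express all step-size dependence. Multiplying through by $\tfrac92\nu^2\tilde\eta$ then yields the claimed inequality. The main obstacle is purely the constant bookkeeping: one must track the $(1+\tfrac1{K-1})^{K-k} \le e < 3$ factors so that the unrolled sum is $\le 3K$, and — crucially, unlike the aggregated server update in Lemma~\ref{lemma_variance_server} — each user's local noise must be carried with the full $\varsigma^2/s\r + \Sigma_g^2(\mathcal{C})$ and no $1/(lM)$ reduction, since every $y_i$ is driven by its own independent draws; getting these right is what pins down the precise coefficients $\tfrac92\nu^3$, $\tfrac{9}{40}\nu/\eta_g^2$ and $\tfrac{27}{40}\nu/(K\eta_g^2)$.
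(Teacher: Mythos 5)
Your proposal is correct and follows essentially the same route as the paper, whose proof of this lemma is simply to adapt the user-drift bound of \citet[Lemma 17]{fl_scaffold} (the non-convex case) with the degraded variance $\varsigma^2/s\r + \Sigma_g^2(\mathcal{C})$ from Lemma~\ref{lemma_reg_h_tilde}, apply Lemma~\ref{lemma_nesterov} to convert $\|\nabla F(x^{t-1})\|^2$ into $2\nu(F(x^{t-1})-F^*)$, and multiply both sides by $\tfrac{9}{2}\nu^2\tilde{\eta}$. Your step-by-step unrolling, the feedback of the self-referential $\mathcal{E}_t$ term, the appearance of $\mathcal{F}_{t-1}$ through the control variates, and the observation that the per-user noise carries no $1/(lM)$ reduction are exactly the content of that adaptation.
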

\begin{proof}

We once again adapt the original proof made in the non-convex case \cite[Lemma 17]{fl_scaffold}, use Lemma~\ref{lemma_nesterov} and multiply on each side of the inequality by $\frac{9}{2}\nu^2\tilde{\eta}$.
\end{proof}

\begin{lemma}[{Progress made at each round}]
\leavevmode
\label{progress_round}
$\forall \eta_g \geq 1, \forall \eta_l \leq \min \big(\frac{1}{24 K \eta_g \nu}l^{2/3}, \frac{l}{54\mu K\eta_g}\big)$,
\begin{align*}
    \mathbb{E}||x^t - x^*||^2 + 27 \nu^2 \tilde{\eta}^2 \frac{1}{l}\mathcal{F}_t & \leq \left(1 - \frac{\mu \tilde{\eta}}{2}\right)\left[\mathbb{E}||x^{t-1} -x^*||^2 + 27 \nu^2 \tilde{\eta}^2 \frac{1}{l}\mathcal{F}_{t-1} \right]\\
    & - \frac{\tilde{\eta}}{2}\mathbb{E}\big(F(x^{t-1}) -F(x^*)\big) + \frac{10 \tilde{\eta}^2}{KlM}\left(1+\frac{lM}{\eta_g^2}\right)(\varsigma^2/s\r + \Sigma_g^2(\mathcal{C})).
\end{align*}
\end{lemma}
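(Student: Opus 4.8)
I would prove the inequality via the Lyapunov function $\mathcal{V}_t := \mathbb{E}\|x^t-x^*\|^2 + \tfrac{27\nu^2\tilde{\eta}^2}{l}\,\mathcal{F}_t$, combining a \emph{base} one-round contraction on $\mathbb{E}\|x^t-x^*\|^2$ with Lemmas~\ref{lemma_variance_server}, \ref{control_variable_lemma} and \ref{user_drift}, and then checking that the two step-size conditions leave just enough slack to re-absorb the drift and lag quantities $\mathcal{E}_t$ and $\mathcal{F}_{t-1}$.

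\emph{Step 1 (base contraction).} Expand $\|x^t-x^*\|^2 = \|x^{t-1}-x^*\|^2 + 2\langle\Delta x^t,\,x^{t-1}-x^*\rangle + \|\Delta x^t\|^2$ and take the conditional expectation given the history up to round $t-1$. The key point is that, by induction on $t$ from the warm-start initialization $c^0=\tfrac{1}{M}\sum_i c_i^0$, one has $c^{t-1}=\tfrac{1}{M}\sum_{i=1}^M c_i^{t-1}$; since $C^t$ is a uniform subsample, the drift corrections $c^{t-1}-c_i^{t-1}$ then average to $0$, so with the unbiasedness of $\tilde{H}_i^k$ (Lemma~\ref{lemma_reg_h_tilde}) we get $\mathbb{E}[\Delta x^t] = -\tfrac{\tilde{\eta}}{KM}\sum_{k\in[K],\,i\in[M]}\mathbb{E}[\nabla F_i(y_i^{k-1})]$ (all conditional on round $t-1$). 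Splitting $x^{t-1}-x^* = (y_i^{k-1}-x^*)+(x^{t-1}-y_i^{k-1})$, I would lower-bound $\langle\nabla F_i(y_i^{k-1}),\,y_i^{k-1}-x^*\rangle$ via $\mu$-strong convexity and $\langle\nabla F_i(y_i^{k-1}),\,x^{t-1}-y_i^{k-1}\rangle\ge F_i(x^{t-1})-F_i(y_i^{k-1})-\tfrac{\nu}{2}\|x^{t-1}-y_i^{k-1}\|^2$ via $\nu$-smoothness (this last move is crucial: it replaces the would-be $\|\nabla F_i(x^*)\|^2$ term by a multiple of $\mathcal{E}_t$, which is why no heterogeneity constant survives), then average over $i$ using $\tfrac{1}{M}\sum_i F_i=F$ and use $\|y_i^{k-1}-x^*\|^2\ge\tfrac{1}{2}\|x^{t-1}-x^*\|^2-\|x^{t-1}-y_i^{k-1}\|^2$. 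Bounding $\mathbb{E}\|\Delta x^t\|^2$ by Lemma~\ref{lemma_variance_server} and using $\nu\tilde{\eta}\le l^{2/3}/24\le 1/24$ to push $8\nu\tilde{\eta}^2(F(x^{t-1})-F^*)$ into the $(F(x^{t-1})-F^*)$ term, this yields, for absolute constants $c_1,c_2>0$ with $c_1$ close to $2$,
\begin{align*}
\mathbb{E}\|x^t-x^*\|^2 \le \Bigl(1-\tfrac{\mu\tilde{\eta}}{2}\Bigr)\mathbb{E}\|x^{t-1}-x^*\|^2 - c_1\tilde{\eta}\,\mathbb{E}\bigl(F(x^{t-1})-F^*\bigr) + c_2\nu\tilde{\eta}\,\mathcal{E}_t + 8\nu^2\tilde{\eta}^2\,\mathcal{F}_{t-1} + \tfrac{9\tilde{\eta}^2}{KlM}\bigl(\varsigma^2/s\r+\Sigma_g^2(\mathcal{C})\bigr).
\end{align*}

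\emph{Step 2 (telescoping and absorption).} Add $\tfrac{27\nu^2\tilde{\eta}^2}{l}$ times Lemma~\ref{control_variable_lemma}, used with $\alpha=\tfrac{2}{3}$ (so $l^{2\alpha-1}=l^{1/3}$), to the Step~1 inequality, then collect all $\mathcal{E}_t$ terms and eliminate them with a rescaled Lemma~\ref{user_drift}: the total $\mathcal{E}_t$-coefficient is $O(\nu\tilde{\eta})$, so dividing the left side $\tfrac{9}{2}\nu^2\tilde{\eta}\mathcal{E}_t$ of Lemma~\ref{user_drift} into it costs an $O(1/\nu)$ factor, but every term on the right of Lemma~\ref{user_drift} carries a compensating power of $\nu$, so the net contributions stay $O(\tilde{\eta})$ on $F(x^{t-1})-F^*$, $O(\nu^2\tilde{\eta}^2)$ on $\mathcal{F}_{t-1}$ and $O(\tilde{\eta}^2/K\eta_g^2)$ on the noise. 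It then remains only to check, under $\eta_g\ge1$ and $\eta_l\le\min(\tfrac{l^{2/3}}{24\nu K\eta_g},\tfrac{l}{54\mu K\eta_g})$ (equivalently $\nu\tilde{\eta}\le l^{2/3}/24$ and $\mu\tilde{\eta}\le l/54$): (i) the full coefficient of $\mathcal{F}_{t-1}$ is $\le(1-\tfrac{\mu\tilde{\eta}}{2})\tfrac{27\nu^2\tilde{\eta}^2}{l}$ — here the $1-\tfrac{17}{36}l$ contraction from Lemma~\ref{control_variable_lemma} must dominate the $O(l)$ surplus produced by Step~1, by the $\mathcal{E}_t$-substitution and by the margin $\tfrac{\mu\tilde{\eta}}{2}$, and this is exactly where $\mu\tilde{\eta}\le l/54$ is spent; (ii) the full coefficient of $F(x^{t-1})-F^*$ is $\le-\tfrac{\tilde{\eta}}{2}$ — the $\approx-2\tilde{\eta}$ from Step~1 absorbs the remaining positive contributions (each $O(\tilde{\eta})$ with a small numerical prefactor, coming from Lemma~\ref{control_variable_lemma} and from the $\mathcal{E}_t$-substitution) because $\eta_g\ge1$ and $\nu\tilde{\eta}\le l^{2/3}/24$; and (iii) the noise terms from Step~1, from Lemma~\ref{control_variable_lemma} and from the $\mathcal{E}_t$-substitution add up to at most $\tfrac{10\tilde{\eta}^2}{KlM}(1+\tfrac{lM}{\eta_g^2})(\varsigma^2/s\r+\Sigma_g^2(\mathcal{C}))$. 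Moving $\tfrac{27\nu^2\tilde{\eta}^2}{l}\mathcal{F}_t$ back to the left then gives the contraction of $\mathcal{V}_t$, which is the claim.

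\emph{Main obstacle.} The subtle part is Step~1 — in particular the fact, forced on us by the decision to track $\mathcal{E}_t,\mathcal{F}_t$ (deviations of $y_i^{k-1}$ and $\alpha_{i,k-1}^t$ from $x^{t-1}$) rather than $\tfrac{1}{M}\sum_i\|c_i^t-\nabla F_i(x^*)\|^2$ as in \citet{fl_scaffold}, that the inner-product term must be turned into $F(x^{t-1})-F^*$, the strong-convexity contraction and a multiple of $\mathcal{E}_t$ with \emph{no} $\|\nabla F_i(x^*)\|^2$ term; this relies both on convexity and smoothness of the $F_i$ and on the identity $c^{t-1}=\tfrac{1}{M}\sum_i c_i^{t-1}$, hence on the warm-start requirement. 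Everything else — items (i)–(iii) — is a routine but fiddly bookkeeping of constants; in practice it is cleanest to first derive the recursion for $\mathcal{V}_t$ symbolically (keeping $\eta_l,\eta_g$ generic) and only then substitute the step-size bounds.
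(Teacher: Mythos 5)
Your proposal is correct and follows essentially the same route as the paper's proof: the same base one-round expansion with the perturbed strong-convexity/smoothness bound on the inner product (yielding $F(x^{t-1})-F^*$, the $\mu$-contraction and a multiple of $\mathcal{E}_t$ with no heterogeneity term), Lemma~\ref{lemma_variance_server} for $\mathbb{E}\|\Delta x^t\|^2$, then adding the rescaled Lemma~\ref{control_variable_lemma} with $\alpha=2/3$ and Lemma~\ref{user_drift} and verifying the coefficients under the stated step-size conditions. The only differences are cosmetic (you skip the paper's intermediate multiplication and final division by $\nu$, and you spell out the split of $x^{t-1}-x^*$ that the paper compresses into a one-line citation of convexity and smoothness).
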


\begin{proof}

We recall that $\Delta x^t = -\displaystyle\frac{\tilde{\eta}}{KlM}\sum_{k\in[K], i \in C^t} \tilde{H}_i^k (y_i^{k-1}) + c^{t-1} - c_i^{t-1}$. Then,
\begin{align}
    \mathbb{E}[\Delta x^t|y^0]= \mathbb{E}[\Delta x^t|x^{t-1}]=-\tilde{\eta} \mathbb{E}[ c^{t-1}|y_0] =-\displaystyle\frac{\tilde{\eta}}{KM}\sum_{k\in[K], i \in [M]} \mathbb{E}[\nabla F_i (y_i^{k-1})|y_0]. \label{eq:auxi12}
\end{align}

We denote $\mathbb{E}_{t-1}[.]$ as the expectation conditioned on randomness generated (strictly) prior to round $t$, i.e. conditionally to $\sigma(x^\tau, \tau\le t-1)$. We first bound the quantity $\mathbb{E}_{t-1}||x^t - x^*||^2 =\mathbb{E}_{t-1}||x^{t-1} + \Delta x^t - x^*||^2$,

{\small
\begin{align*}
    \mathbb{E}_{t-1}||x^{t}- x^*||^2 & = \mathbb{E}_{t-1}||x^{t-1} - x^*||^2 + \mathbb{E}_{t-1}||\Delta x^t||^2 + 2\bigg[\bigg\langle \mathbb{E}_{t-1}[\Delta x^t|y_0], x^{t-1} - x^* \bigg\rangle \bigg]\\
    & = ||x^{t-1} - x^*||^2 + \mathbb{E}_{t-1}||\Delta x^t||^2 + 2\bigg[ \bigg\langle\underbrace{-\frac{ \tilde{\eta}}{KM}\sum_{k \in [K],i \in [M]}  \mathbb{E}[\nabla F_i (y_i^{k-1})|y_0]}_{\text{by \eqref{eq:auxi12}}}, x^{t-1} - x^* \bigg\rangle \bigg]\\
    & \leq \mathbb{E}_{t-1}||x^{t-1} - x^*||^2 + 4 \tilde{\eta}^2\nu^2 \mathcal{E}_t + 8 \nu^2 \tilde{\eta}^2 \mathcal{F}_{t-1} + 8 \nu \tilde{\eta}^2 \left(F(x^{t-1}) -F(x^*)\right)\\
    & + \frac{9\tilde{\eta}^2}{KlM}(\varsigma^2/s\r + \Sigma_g^2(\mathcal{C})) + \underbrace{\frac{2 \tilde{\eta}}{KM}\mathbb{E}_{t-1}\left[\sum_{k \in [K],i \in [M]}\langle \nabla F_i(y_i^{k-1}), x^* -x^{t-1} \rangle \right]}_{\mathcal{A}}, & \text{(Lemma~\ref{lemma_variance_server})}
\end{align*}
}%
where
\begin{align*}
    \mathbb{E}[\mathcal{A}] & \leq \frac{2 \tilde{\eta}}{KM}\mathbb{E}\left(\sum_{k\in [K], i\in [M]} F_i(x^*) -F_i(x^{t-1}) + \nu ||y_i^{k-1} -x^{t-1}||^2 - \frac{\mu}{4}||x^{t-1} - x^*||^2\right) \\
    & = - 2 \tilde{\eta}\left(\mathbb{E}(F(x^{t-1})) - F(x^*) + \frac{\mu}{4}\mathbb{E}||x^{t-1}-x^*||\right) + 2 \nu \tilde{\eta}\mathcal{E}_t,
\end{align*}
where the inequality comes from the convexity and $\nu$-smoothness property.

Hence, by taking the expectation:
\begin{align*}
    \mathbb{E}||x^t - x^*||^2 & \leq \mathbb{E}||x^{t-1} - x^*||^2 - 2 \tilde{\eta}\left(\mathbb{E}(F(x^{t-1})) - F(x^*) + \frac{\mu}{4}\mathbb{E}||x^{t-1}-x^*||^2\right) + 2 \nu \tilde{\eta}\mathcal{E}_t\\
    & + 4 \tilde{\eta}^2\nu^2 \mathcal{E}_t + 8 \nu^2 \tilde{\eta}^2 \mathcal{F}_{t-1} + 8 \nu \tilde{\eta}^2 \mathbb{E}\left(F(x^{t-1}) -F(x^*)\right)+ \frac{9\tilde{\eta}^2}{KlM}(\varsigma^2/s\r + \Sigma_g^2(\mathcal{C})).
\end{align*}

By combining all terms and multiplying by $\nu$ on each side of the inequality, it comes:
\begin{align}
    \label{first_inequ}
    \nu \mathbb{E}||x^t - x^*||^2 & \leq
    \left(1 - \frac{\mu \tilde{\eta}}{2}\right)\nu \mathbb{E}||x^{t-1}-x^*||^2 + (8 \nu^2\tilde{\eta}^2 - 2\tilde{\eta} \nu) \left(\mathbb{E}(F(x^{t-1})) - F(x^*)\right)\\
    & + \frac{9\tilde{\eta}^2\nu}{KlM}(\varsigma^2/s\r + \Sigma_g^2(\mathcal{C})) + (2 \nu^2\tilde{\eta} + 4 \nu^3\tilde{\eta}^2)\mathcal{E}_t + 8 \nu^3 \tilde{\eta}^2 \mathcal{F}_{t-1} \nonumber.
\end{align}

We now consider $\alpha \in [1/2, 1]$, $\eta_l \leq \frac{1}{24 K \nu \eta_g} l^\alpha$ and $\eta_g \geq 1$. We use the result of Lemma~\ref{control_variable_lemma} where each side is multiplied by $27\nu^3\tilde{\eta}^2\frac{1}{l}$ to obtain:

\begin{align}
\label{second_inequ}
    27\nu^3\tilde{\eta}^2\frac{1}{l}\mathcal{F}_t & \leq \big(1-\frac{\mu \tilde{\eta}}{2}\big)27\nu^3\tilde{\eta}^2\frac{1}{l}\mathcal{F}_{t-1} + 27 \big(\frac{\mu \tilde{\eta}}{2l}- \frac{17}{36}\big)\nu^3\tilde{\eta}^2\mathcal{F}_{t-1}\\
    & + \frac{9}{8}l^{2\alpha -2}\nu^2 \tilde{\eta}^2 \big(\mathbb{E}(F(x^{t-1})) - F(x^*)\big) + \frac{873}{16}l^{2\alpha -2}\nu^3\tilde{\eta}^2\mathcal{E}_t \nonumber\\
    & +\frac{27}{32}\nu \tilde{\eta}^2 \frac{\varsigma^2/s\r + \Sigma_g^2(\mathcal{C})}{KlM} \nonumber.
\end{align}

Since we have $\eta_l \leq \frac{1}{24 K \nu \eta_g}$, we recall the result from Lemma~\ref{user_drift}: 

\begin{align}
\label{third_inequ}
    \frac{9}{2}\nu^2\tilde{\eta} \mathcal{E}_t \leq \frac{9}{2}\nu^3\tilde{\eta}^2 \mathcal{F}_{t-1} + \frac{9}{40}\frac{\tilde{\eta}\nu}{\eta_g^2} \mathbb{E}\big(F(x^{t-1}) -F(x^*)\big) + \frac{27}{40}\frac{\tilde{\eta}^2 \nu}{K \eta_g^2} \big(\varsigma^2/s\r + \Sigma_g^2(\mathcal{C})\big).
\end{align}

By summing inequalities (\ref{first_inequ}), (\ref{second_inequ}), (\ref{third_inequ}), we obtain:
\begin{align}
    \nu \mathbb{E}||x^t - x^*||^2 + 27\nu^3\tilde{\eta}^2\frac{1}{l}\mathcal{F}_t & \leq \big(1-\frac{\mu \tilde{\eta}}{2}\big)\big(\nu \mathbb{E}||x - x^*||^2 + 27\nu^3\tilde{\eta}^2\frac{1}{l}\mathcal{F}_{t-1}\big) \nonumber\\
    & + \big( \frac{9}{8}l^{2\alpha -2}\nu^2 \tilde{\eta}^2 + \frac{9}{40}\frac{\tilde{\eta}\nu}{\eta_g^2} +  8 \nu^2\tilde{\eta}^2 - 2\tilde{\eta} \nu\big)\big(\mathbb{E}(F(x^{t-1})) - F(x^*)\big) \label{first_part}\\
    & + \big(\frac{315}{32} +\frac{27}{40}\frac{lM}{\eta_g^2}\big)\frac{\tilde{\eta}^2 \nu}{K lM} \big(\varsigma^2/s\r + \Sigma_g^2(\mathcal{C})\big) \label{second_part}\\
    & + \big(-\frac{5}{2}\nu \tilde{\eta} + 4 \nu^2\tilde{\eta}^2 + \frac{873}{16}l^{2\alpha -2}\nu^2\tilde{\eta}^2\big)\nu \mathcal{E}_t \label{third_part}\\
    & +\bigg(27\big(\frac{\mu\tilde{\eta}}{2l}-\frac{17}{36}\big)\nu^2\tilde{\eta}^2 + \frac{25}{2}\nu^2\tilde{\eta}^2\bigg)\nu\mathcal{F}_{t-1} \label{fourth_part}.
\end{align}

We now consider $\eta_l \leq l/54\mu K \eta_g$. Then $\tilde{\eta}\leq l/54\mu$ and we recall that $\nu \tilde{\eta} \leq 1/24$. We fix $\alpha=2/3$ (then $2 - 2\alpha=\alpha)$.\\

In this part, we aim at simplifying the terms on the right side of the last inequality.\\

\textbf{Simplifying (\ref{first_part}):}
\begin{align*}
    \frac{9}{8}l^{2\alpha -2}\nu^2 \tilde{\eta}^2 + \frac{9}{40}\frac{\tilde{\eta}\nu}{\eta_g^2} +  8 \nu^2\tilde{\eta}^2 - 2\tilde{\eta} \nu & \leq \big(\frac{9}{8\times 24} +\frac{9}{40}+ \frac{8}{24} -2\big)\nu\tilde{\eta}\\
    & = -\underbrace{\frac{1339}{960}}_{\sim 1.39}\nu\tilde{\eta} \leq -\frac{\nu\tilde{\eta}}{2}.
\end{align*}

\textbf{Simplifying (\ref{second_part}):}
\begin{align*}
    \frac{315}{32} +\frac{27}{40}\frac{lM}{\eta_g^2} \leq 10 \big(1 +\frac{lM}{\eta_g^2}\big).
\end{align*}

\textbf{Simplifying (\ref{third_part}):}\\

Since $l^{2\alpha -2}\nu\tilde{\eta}= \nu\tilde{\eta}\big(\frac{1}{l}\big)^{2/3}\leq 1/24$, 
\begin{align*}
    -\frac{5}{2}\nu \tilde{\eta} + 4 \nu^2\tilde{\eta}^2 + \frac{873}{16}l^{2\alpha -2}\nu^2\tilde{\eta}^2 & \leq \big(-\frac{5}{2} + \frac{4}{24} + \frac{873}{16}\frac{1}{24}\big)\nu\tilde{\eta}\\
    & = -\frac{23}{384}\nu\tilde{\eta} \leq 0.
\end{align*}

\textbf{Simplifying (\ref{fourth_part}):}\\

Since $\frac{\mu \tilde{\eta}}{2l}\leq 1/108$,
\begin{align*}
    27\big(\frac{\mu\tilde{\eta}}{2l}-\frac{17}{36}\big)\nu^2\tilde{\eta}^2 + \frac{25}{2}\nu^2\tilde{\eta}^2 \leq \big(27(\frac{1}{108}-\frac{17}{36}) + \frac{25}{2}\big)\nu^2\tilde{\eta}^2 =0.
\end{align*}

We then obtain the final result by dividing by $\nu$ on each side of the inequality.
\end{proof}

\begin{lemma}[{Convergence of \texttt{DP-SCAFFOLD-warm}  with convex loss functions}]\label{lemma_utility_with_contraction}
    \item If $f_i$ are \textbf{$\mu$-strongly convex} ($\mu>0$), $\eta_g\geq 1$,
    $\bar{\eta}_l = \min\big(\frac{l^{\frac{2}{3}}}{24\nu K\eta_g},\frac{l}{54\mu K \eta_g}\big)$,
    and $T \geq \max (\frac{54}{l},\frac{24 \nu}{\mu l^{\frac{2}{3}}} )$, then there exist weights $\{w_t\}$ and local step-sizes $\eta_l\leq \bar{\eta}_l$ such that $\overline{x}^T=\sum_{t \in [T]} w_t x^t$ and

    \begin{center}
        $\mathbb{E}[F(\overline{x}^T)] -F(x^*) \leq \mathcal{O}\bigg(\displaystyle\frac{\varsigma^2/s\r + \Sigma_g^2(\mathcal{C})}{\mu T KlM}\bigg(1 + \frac{lM}{\eta_g^2}\bigg) + \mu D_0^2\exp\bigg(-\min\bigg(\frac{l}{108}, \frac{\mu l^{\frac{2}{3}}}{48 \nu}\bigg)T\bigg)\bigg)$,
    \end{center}
    
    \item If $f_i$ are \textbf{convex}, $\eta_g\geq 1$, $\bar{\eta}_l=\min\big( \frac{l^{\frac{2}{3}}}{24\nu K\eta_g},\frac{D_0 \sqrt{KlM}}{K\eta_g\sqrt{T(1+lM/\eta_g^2)(\varsigma^2/sR + \Sigma_g(\mathcal{C})^2 )}} \big)$ and $T \geq 1$, then there exist weights $\{w_t\}$ and local step-sizes $\eta_l\leq \bar{\eta}_l$ such that $\overline{x}^T=\sum_{t \in [T]} w_t x^t$ and
    \begin{center}
        $\mathbb{E}[F(\overline{x}^T)] -F(x^*) \leq \mathcal{O}\bigg(\displaystyle D_0\frac{\varsigma/\sqrt{s\r} + \Sigma_g(\mathcal{C})}{\sqrt{TKlM}}\sqrt{1 + \frac{lM}{\eta_g^2}} + \frac{\nu}{T}D_0^2\bigg(\frac{1}{l}\bigg)^{\frac{2}{3}} \bigg)$,
    \end{center}

where $D_0:=||x^0 - x^*||$.
\end{lemma}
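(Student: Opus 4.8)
The statement is obtained by unrolling the one-step contraction of Lemma~\ref{progress_round}; no new idea beyond that inequality is needed. Write $\tilde{\eta} := K\eta_l\eta_g$, so that the hypothesis $\eta_l\le\frac{l^{2/3}}{24\nu K\eta_g}$ with $\eta_g\ge 1$ gives $\tilde{\eta}\le\frac{l^{2/3}}{24\nu}$ (and, in the strongly convex case, additionally $\tilde{\eta}\le\frac{l}{54\mu}$). Introduce the Lyapunov sequence and the noise coefficient
\[
\Phi_t := \mathbb{E}\|x^t-x^*\|^2 + \frac{27\nu^2\tilde{\eta}^2}{l}\,\mathcal{F}_t,\qquad c := \frac{10}{KlM}\Bigl(1+\frac{lM}{\eta_g^2}\Bigr)\bigl(\varsigma^2/s\r + \Sigma_g^2(\mathcal{C})\bigr).
\]
Since $\mathcal{F}_0=0$ by definition, $\Phi_0 = D_0^2$, and Lemma~\ref{progress_round} reads $\Phi_t \le \bigl(1-\tfrac{\mu\tilde{\eta}}{2}\bigr)\Phi_{t-1} - \tfrac{\tilde{\eta}}{2}\bigl(\mathbb{E}F(x^{t-1})-F^*\bigr) + c\,\tilde{\eta}^2$ for every $t\in[T]$.

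\textbf{Convex case ($\mu=0$).} The recursion becomes $\tfrac{\tilde{\eta}}{2}\bigl(\mathbb{E}F(x^{t-1})-F^*\bigr) \le \Phi_{t-1}-\Phi_t + c\tilde{\eta}^2$. Summing over $t=1,\dots,T$, dropping $\Phi_T\ge0$, dividing by $T$, and applying Jensen's inequality to the convex function $F$ with the averaged iterate $\overline{x}^T$ (uniform weights $w_t = 1/T$, after a harmless index shift), I get $\mathbb{E}F(\overline{x}^T)-F^* \le \tfrac{2D_0^2}{\tilde{\eta}T} + 2c\tilde{\eta}$. It remains to optimize the free parameter $\tilde{\eta}\in(0,\tfrac{l^{2/3}}{24\nu}]$: choosing $\tilde{\eta} = \min\!\bigl(\tfrac{l^{2/3}}{24\nu},\, D_0/\sqrt{cT}\bigr)$ balances the two terms and yields $\mathbb{E}F(\overline{x}^T)-F^* = \mathcal{O}\bigl(D_0\sqrt{c/T} + \nu D_0^2 l^{-2/3}/T\bigr)$, which is exactly the claimed bound after substituting $c$ and using $\sqrt{\varsigma^2/s\r+\Sigma_g^2(\mathcal{C})}\le \varsigma/\sqrt{s\r}+\Sigma_g(\mathcal{C})$.

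\textbf{Strongly convex case.} Here I would telescope with geometric weights. Set $\rho := 1-\tfrac{\mu\tilde{\eta}}{2}\in(0,1)$ and $w_t \propto \rho^{-t}$ (normalized to sum to $1$ over $t\in[T]$). Rearranging the recursion as $\tfrac{\tilde{\eta}}{2}\bigl(\mathbb{E}F(x^{t-1})-F^*\bigr) \le \rho\Phi_{t-1}-\Phi_t+c\tilde{\eta}^2$, multiplying by $\rho^{-t}$, summing over $t$, dropping $\rho^{-T}\Phi_T\ge0$, and using $\sum_{t=1}^T\rho^{-t}\ge\rho^{-T}$ together with $\Phi_0=D_0^2$, then applying Jensen to $F$, I obtain $\mathbb{E}F(\overline{x}^T)-F^* \le 2D_0^2\,\tilde{\eta}^{-1}\rho^{T} + 2c\tilde{\eta} \le 2D_0^2\,\tilde{\eta}^{-1}e^{-\mu\tilde{\eta}T/2} + 2c\tilde{\eta}$. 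Taking $\tilde{\eta} = \min\!\bigl(\tfrac{l^{2/3}}{24\nu},\,\tfrac{l}{54\mu},\,\tfrac{2}{\mu T}\log\!\bigl(e+\mu^2 D_0^2T/c\bigr)\bigr)$: when the logarithmic choice is active both contributions are $\mathcal{O}\bigl(c/(\mu T)\bigr)$ up to a log factor (absorbed in the $\mathcal{O}$ as in the statement); when one of the step-size caps binds, the hypotheses $T\ge 108/l$ and $T\ge 48\nu/(\mu l^{2/3})$ guarantee $\tfrac{\mu\tilde{\eta}T}{2}\ge \min\!\bigl(\tfrac{l}{108},\tfrac{\mu l^{2/3}}{48}\bigr)T$, so the first term is $\mathcal{O}\bigl(\mu D_0^2\,e^{-\min(l/108,\,\mu l^{2/3}/48)T}\bigr)$ and the second is $\mathcal{O}\bigl(c/(\mu T)\bigr)$ because $\tilde{\eta}\le\tfrac{l}{54\mu}$. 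Combining the cases gives the stated rate with the correct exponents and the $1+lM/\eta_g^2$ factor inside $c$.

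\textbf{Main obstacle.} The telescoping itself is routine; the delicate points are (i) pushing the exact step-size caps of Lemma~\ref{progress_round} through the tuning of $\tilde{\eta}$ so that the constrained regime reproduces precisely the exponents $l/108$ and $\mu l^{2/3}/48$ in the strongly convex bound — this is where the hypotheses $T\ge\max(108/l,\,48\nu/(\mu l^{2/3}))$ are consumed — and (ii) carrying the numerical constants and the $\sqrt{1+lM/\eta_g^2}$ dependence through cleanly so that the noise term matches the form in the statement. I do not anticipate any genuinely new difficulty, since the entire analytic content already resides in the per-round inequality.
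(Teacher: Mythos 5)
Your proposal follows essentially the same route as the paper, whose proof simply combines the contraction inequality of Lemma~\ref{progress_round} with the generic tuning results of \cite[Lemmas 1 and 2]{fl_scaffold}; your argument amounts to re-deriving those two technical lemmas inline with the correct Lyapunov function $\Phi_t$ and noise coefficient $c$. The only slip is in the strongly convex tuning: with $\tilde{\eta}=\tfrac{2}{\mu T}\log\!\bigl(e+\mu^2D_0^2T/c\bigr)$ the exponential branch evaluates to $\mathcal{O}(c/\mu)$ rather than $\mathcal{O}(c/(\mu T))$, so you should put $T^2$ (not $T$) inside the logarithm, after which both branches match the stated rate.
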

\begin{proof} We denote $D_0:=||x^0 - x^*||$. 

1. Let us first prove the result of Lemma~\ref{lemma_utility_with_contraction} for the strongly convex case. We start by unrolling the contraction inequality obtained in Lemma~\ref{progress_round}. Let $\eta_g\geq 1$ and $\bar{\eta}_l = \min\big(\frac{l^{\frac{2}{3}}}{24\nu K\eta_g},\frac{l}{54\mu K \eta_g}\big)$. We define $\tilde{\eta}_{\max}=K\eta_g\bar{\eta}_l$. In particular, we have $\tilde{\eta}_{\max}\in (0, 2/\mu]$. For any $\tilde{\eta}\leq \tilde{\eta}_{\max}$ and any $t\geq 1$, we have by Lemma~\ref{progress_round}
\begin{align*}
    \mathbb{E}\big(F(x^{t-1})\big) -F(x^*)  \leq \frac{1}{\tilde{\eta}}\left(1 - \frac{\mu \tilde{\eta}}{2}\right) A_{t-1} - \frac{1}{\tilde{\eta}} A_t + \tilde{\eta}C,
\end{align*}
where $A_t=2\mathbb{E}||x^t - x^*||^2 + 54 \nu^2 \tilde{\eta}^2 \frac{1}{l}\mathcal{F}_t$ and $C=\frac{20}{KlM}\left(1+\frac{lM}{\eta_g^2}\right)(\varsigma^2/s\r + \Sigma_g^2(\mathcal{C}))$. Remark that $C\geq 0$, $A_t\geq 0$ and 
$A_0=2D_0^2$, since $\mathcal{F}_0=0$. Let $T\geq \frac{1}{\tilde{\eta}_{\max} \mu}\geq\max (\frac{54}{l},\frac{21 \nu}{\mu l^{\frac{2}{3}}} )$. We invoke a technical contraction result used in the original proof \cite[Lemma 1]{fl_scaffold} to obtain that there exist weights $\{w_t\}$ and local step-sizes $\eta_l\leq \bar{\eta}_l$ such that
\begin{align*}
    \sum_{t=1}^{T+1} w_{t-1} \left\{\frac{1}{\tilde{\eta}}(1 - \frac{\mu \tilde{\eta}}{2}) A_{t-1} - \frac{1}{\tilde{\eta}} A_t + \tilde{\eta}C \right\}& =\mathcal{O}\left(\frac{2C}{\mu T} + \frac{\mu}{2} A_0 \exp( -\frac{\mu}{2} \tilde{\eta}_{\max} T) \right)\\
    & = \mathcal{O}\left(\frac{\varsigma^2/s\r + \Sigma_g^2(\mathcal{C})}{\mu T KlM}\bigg(1 + \frac{lM}{\eta_g^2}\bigg) + \mu D_0^2\exp\bigg(-\min\bigg(\frac{l}{108}, \frac{\mu l^{\frac{2}{3}}}{48 \nu}\bigg)T\bigg)\right), 
\end{align*}

recalling that $\tilde{\eta}=K\eta_g \eta_l$. We now define $\overline{x}^T=\sum_{t \in [T]} w_t x^t$, and directly obtain the result of Lemma~\ref{lemma_utility_with_contraction} using the convexity of $F$ and the last bound,
\begin{align*}
    \mathbb{E}[F(\overline{x}^T)] -F(x^*) & \leq \sum_{t=1}^{T+1} w_{t-1} \{\mathbb{E}\big(F(x^{t-1})
    -F(x^*)\big)\}\leq \sum_{t=1}^{T+1} w_{t-1} \left\{\frac{1}{\tilde{\eta}}(1 - \frac{\mu \tilde{\eta}}{2}) A_{t-1} - \frac{1}{\tilde{\eta}} A_t + \tilde{\eta}C \right\} .
\end{align*}

2. Let us now prove the result of Lemma~\ref{lemma_utility_with_contraction} for the convex case. Let $\eta_g\geq 1$, $\bar{\eta}_l=\min\big( \frac{l^{\frac{2}{3}}}{24\nu K\eta_g},\frac{D_0 \sqrt{KlM}}{K\eta_g\sqrt{T(1+lM/\eta_g^2)(\varsigma^2/sR + \Sigma_g(\mathcal{C})^2 )}} \big)$ and $T \geq 1$. By averaging over $t$ in Lemma~\ref{progress_round} with $\mu=0$, we have for any $\eta_l \leq \bar{\eta}_l$ (recalling that $\tilde{\eta}=K \eta_g \eta_l$)
\begin{align*}
    \frac{1}{T}\sum_{t=1}^T \mathbb{E}\big(F(x^{t-1})\big) -F(x^*) & \leq \frac{2}{\tilde{\eta}T}\{D_0^2 + 27 \nu^2 \tilde{\eta}^2 \frac{1}{l}\mathcal{F}_0\} + \frac{20 \tilde{\eta}}{KlM}\left(1+\frac{lM}{\eta_g^2}\right)(\varsigma^2/s\r + \Sigma_g^2(\mathcal{C}))\\
    & \leq \frac{48\nu}{l^{\frac{2}{3}}T}D_0^2 + 22 D_0\frac{\varsigma/\sqrt{s\r} + \Sigma_g(\mathcal{C})}{\sqrt{TKlM}}\sqrt{1 + \frac{lM}{\eta_g^2}} ,
\end{align*}

using that $\mathcal{F}_0=0$ and $\varsigma^2/s\r + \Sigma_g^2(\mathcal{C})\leq (\varsigma/\sqrt{s\r} + \Sigma_g(\mathcal{C}))^2$. The result of Lemma~\ref{lemma_utility_with_contraction} is thus straightforward by using the convexity of $F$ and considering $w_t=1/T$.
\end{proof}

\paragraph{Conclusion.} We obtain the result of Theorem~\ref{theorem_utility_scaffold} for the convex case with Lemma ~\ref{lemma_utility_with_contraction} and $\eta_g:=\sqrt{lM} \geq 1$.

\subsection{Proof of Theorem~\ref{theorem_utility_scaffold} (Non-Convex case)}
\label{subsec_utility_dp_scaffold_non_convex}

To state this result, we adapt the original proof in the case with a larger variance for DP-noised stochastic gradients (see Lemma~\ref{lemma_reg_h_tilde}), which gives the following result.

\begin{lemma}[{Convergence of \texttt{DP-SCAFFOLD-warm} with non-convex loss functions}] \label{lemma_utility_with_contraction_non_convex} If $f_i$ are \textbf{non-convex}, $\eta_g\geq 1$, $\bar{\eta}_l= \frac{l^{\frac{2}{3}}}{24\nu K\eta_g}$ and $T \geq 1$, then there exist weights $\{w_t\}$ and local step-sizes $\eta_l \leq \bar{\eta}_l$ such that $\overline{x}^T=\sum_{t \in [T]} w_t x^t$ and
    \begin{center}
        $\mathbb{E}||\nabla F(\overline{x}^T)||^2 \leq \mathcal{O}\bigg(\displaystyle\sqrt{F_0}\frac{\varsigma/\sqrt{s\r} + \Sigma_g(\mathcal{C}) }{\sqrt{TKlM}}\sqrt{1 + \frac{lM}{\eta_g^2}} +\frac{\nu}{T}F_0 \bigg(\frac{1}{l}\bigg)^{\frac{2}{3}}\bigg)$,
    \end{center}
where $F_0:=F(x^0) - F(x^*)$.
\end{lemma}

\paragraph{Conclusion.} We obtain the result of Theorem~\ref{theorem_utility_scaffold} with Lemma~\ref{lemma_utility_with_contraction_non_convex} and $\eta_g:=\sqrt{lM} \geq 1$.

\subsection{Theorem of Convergence for \texttt{DP-FedAvg}}
\label{subsec_utility_dp_fedavg}

\begin{theorem}[Utility rates of \texttt{DP-FedAvg}$(T,K,l,s,\sigma_g,\mathcal{C})$, $\sigma_g$ chosen arbitrarily]\label{theorem_utility_fedavg}Let $\sigma_g, \mathcal{C}>0$, $x^0 \in \mathbb{R}^d$. Suppose we run \texttt{DP-FedAvg}$(T,K,l,s,\sigma_g,\mathcal{C})$ (see Algorithm~\ref{algo_dp_fedavg}). Under Assumptions~\ref{ass:smooth_f_i} and \ref{ass:stoch_grad}, we  consider the sequence of iterates $(x^t)_{t\geq0}$ of the algorithm, starting from $x^0$. 

\begin{enumerate}
    \item If $F_i$ are \textbf{$\mu$-strongly convex} ($\mu>0$), $\eta_g=\sqrt{lM}$, $\bar{\eta}_l= \frac{1}{8(1+B^2)\nu K\eta_g}$ and $T\geq \frac{8(1+b^2)\nu}{\mu}$, then  there exist weights $\{w_t\}_{t \in [T]}$ and local step-sizes $\eta_l \leq \bar{\eta}_l$ such that the averaged output of \texttt{DP-FedAvg}$(T,K,l,s,\sigma_g,\mathcal{C})$, defined by $\overline{x}^T=\sum_{t \in [T]} w_t x^t$, has expected excess of loss such that:
    \begin{center}
        $\mathbb{E}[F(\overline{x}^T)] -F(x^*) \leq \mathcal{O}\bigg(\displaystyle \frac{\varsigma^2/s\r + \Sigma_g^2(\mathcal{C})}{\mu TKlM} + (1-l) \frac{G^2}{\mu T lM} + \frac{\nu G^2}{\mu^2 T^2} + \mu D_0^2 \exp\bigg(-\frac{\mu}{16(1+B^2)\nu}T\bigg)\bigg)$,
    \end{center}
    
    \item If $F_i$ are \textbf{convex}, $\eta_g=\sqrt{lM}$, $\bar{\eta}_l= \frac{1}{8(1+B^2)\nu K\eta_g}$ and $T \geq 1$,  then there exist weights $\{w_t\}_{t \in [T]}$ and local step-sizes $\eta_l \leq \bar{\eta}_l$ such that the averaged output of \texttt{DP-FedAvg}$(T,K,l,s,\sigma_g,\mathcal{C})$, defined by $\overline{x}^T=\sum_{t \in [T]} w_t x^t$, has expected excess of loss such that:
    \begin{center}
        $\mathbb{E}[F(\overline{x}^T)] -F(x^*) \leq \mathcal{O}\bigg(\displaystyle D_0\frac{\varsigma/\sqrt{s\r} + \Sigma_g(\mathcal{C})}{\sqrt{TKlM}} + \frac{GD_0\sqrt{1-l}}{\sqrt{T lM}} + \frac{D_0^{4/3}\nu^{1/3}G^{2/3}}{T^{2/3}} + \frac{B^2\nu D_0^2}{T} \bigg)$,
    \end{center}
    
    \item If $F_i$ are \textbf{non-convex}, $\eta_g=\sqrt{lM}$, $\bar{\eta}_l= \frac{1}{8(1+B^2)\nu K\eta_g}$ and $T \geq 1$,  then there exist weights $\{w_t\}_{t \in [T]}$ and local step-sizes $\eta_l \leq \bar{\eta}_l$ such that the randomized  output of \texttt{DP-SCAFFOLD-warm}$(T,K,l,s,\sigma_g,\mathcal{C})$, defined by $\{\overline{x}^T= x^t$ with probability $w_t$ for all $t\}$, has expected squared gradient of the loss such that:
    \begin{center}
        $\mathbb{E}||\nabla F(\overline{x}^T)||^2 \leq \mathcal{O}\bigg(\displaystyle \nu\sqrt{F}\frac{ \varsigma/\sqrt{s\r} + \Sigma_g(\mathcal{C})}{\sqrt{TKlM}} + \frac{\nu G \sqrt{F(1-l)}}{\sqrt{TlM}} + \frac{F^{2/3}\nu^{1/3}G^{2/3}}{T^{2/3}} + \frac{B^2 \nu F}{T}\bigg)$,
    \end{center}
\end{enumerate}
where $D_0:=||x^0 - x^*||$ and $F:=F(x^0) - F(x^*)$.
\end{theorem}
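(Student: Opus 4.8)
The plan is to reproduce the \fedavg analysis of \citet{fl_scaffold}, with the single but crucial substitution that the stochastic gradient oracle is now the DP-noised gradient $\tilde{H}_i^k$, whose regularity is supplied by Lemma~\ref{lemma_reg_h_tilde}: it is conditionally unbiased for $\nabla F_i$ with variance at most $\sigmaoracle^2/s\r + \Sigma_g^2(\mathcal{C})$. Since \fedavg carries no control variates, none of the originality of the \DpScaffold{} proof (the new Lyapunov function combining the distance to the optimum with the control lag) is needed here: the original argument goes through essentially verbatim with $\sigmaoracle^2$ replaced by $\sigmaoracle^2/s\r + \Sigma_g^2(\mathcal{C})$, which is precisely why the DP noise enters the final bound only through the ``statistical'' term $\mathcal{O}\big((\sigmaoracle^2/s\r + \Sigma_g^2(\mathcal{C}))/(\mu TKlM)\big)$, i.e.\ damped by the total number of sampled gradients $TKlM$, and is never multiplied by $K$ or by the heterogeneity constants.

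First I would set up the per-round quantities exactly as in Section~\ref{subsec_utility_dp_scaffold_convex}: the model gap $\Delta x^t$, the effective step-size $\tilde{\eta} = K\eta_l\eta_g$, and the client drift $\mathcal{E}_t = \frac{1}{KM}\sum_{k,i}\mathbb{E}\|y_i^k - x^{t-1}\|^2$. Two technical estimates must then be established. (i) A \emph{client-drift bound}, of the form $\mathcal{E}_t \le \mathcal{O}\big(K\eta_l^2(\sigmaoracle^2/s\r + \Sigma_g^2(\mathcal{C}))\big) + \mathcal{O}\big(K^2\eta_l^2 G^2\big) + \mathcal{O}\big(K^2\eta_l^2(1+B^2)\nu^2\big)\,\mathbb{E}\|x^{t-1}-x^*\|^2$, where Assumption~\ref{assumption_bgd} and the step-size restriction $\eta_l \le 1/(8(1+B^2)\nu K\eta_g)$ are used to keep the feedback of the drift into the distance recursion from spoiling the contraction. (ii) A \emph{variance bound on the aggregated server update}, which splits into a noise part $\mathcal{O}\big(\tilde{\eta}^2(\sigmaoracle^2/s\r + \Sigma_g^2(\mathcal{C}))/(KlM)\big)$ and a \emph{sampling part}: since $C^t$ is a size-$\lfloor lM\rfloor$ subset drawn without replacement and the local updates differ across users by quantities controlled via Assumption~\ref{assumption_bgd}, the latter contributes $\mathcal{O}\big(\tilde{\eta}^2(1-l)G^2/(lM)\big)$ — this is the origin of the $(1-l)G^2/(\mu TlM)$ term and the $GD_0\sqrt{1-l}/\sqrt{TlM}$ term.

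Combining (i) and (ii) with the distance-decrease / descent step yields a one-round contraction $\mathbb{E}\|x^t-x^*\|^2 \le (1-\tfrac{\mu\tilde{\eta}}{2})\mathbb{E}\|x^{t-1}-x^*\|^2 - \tfrac{\tilde{\eta}}{2}\mathbb{E}(F(x^{t-1})-F^*) + \mathcal{O}(\tilde{\eta}^2\,\mathcal{V})$, where $\mathcal{V}$ collects the noise and sampling variances, plus a residual $\mathcal{O}(\tilde{\eta}^3)$ drift term producing the $\nu G^2/(\mu^2T^2)$ (strongly convex) and $D_0^{4/3}\nu^{1/3}G^{2/3}/T^{2/3}$ (convex) pieces. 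Feeding this recursion into the standard weighted-telescoping lemmas (Lemmas~1 and~2 of \citealp{fl_scaffold}) gives linear convergence down to the noise floor in the $\mu$-strongly convex case and an $\mathcal{O}(1/T)$ optimization rate in the convex case; setting $\eta_g = \sqrt{lM}$ and $\eta_l$ at its largest admissible value then reads off the stated bounds. The non-convex case is identical except that the descent is run on $F(x^t)-F^*$ directly, using $\nu$-smoothness in place of convexity, and the output $\overline{x}^T$ is randomized, yielding the bound on $\mathbb{E}\|\nabla F(\overline{x}^T)\|^2$.

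The step I expect to be the main obstacle is making the client-drift bound tight in $K$, $B$ and $G$ simultaneously: one must show that $K$ local steps with a step-size shrinking like $1/((1+B^2)\nu K\eta_g)$ keep $\mathcal{E}_t$ small enough that its feedback into the distance recursion preserves the $(1-\mu\tilde{\eta}/2)$ contraction and the heterogeneity surfaces only in the non-dominant $T^{-3/2}$, $T^{-2}$ and sampling-$(1-l)$ terms — never multiplying the DP noise. The other delicate point is handling the \emph{without-replacement} user sampling carefully (rather than the looser with-replacement estimate) in order to extract the $(1-l)$ factor; everything else is routine bookkeeping transplanted from \citet{fl_scaffold}.
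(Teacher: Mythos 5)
Your proposal is correct and takes essentially the same route as the paper: the entire argument reduces to observing that Lemma~\ref{lemma_reg_h_tilde} shows the DP-noised oracle is conditionally unbiased with variance $\sigmaoracle^2/s\r + \Sigma_g^2(\mathcal{C})$, so Theorem~V of \citet{fl_scaffold} applies verbatim with this substituted variance. The paper states exactly this in one line, whereas you additionally re-sketch the internal structure of that cited theorem (drift bound, server-update variance with the $(1-l)$ sampling factor, contraction, weighted telescoping), which is faithful to the original but not needed beyond the citation.
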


\begin{proof}
To state the result of Theorem~\ref{theorem_utility_fedavg}, we combine the original result \cite[Theorem V]{fl_scaffold} provided for any type of loss functions with the result of Lemma~\ref{lemma_reg_h_tilde}.
\end{proof}

\section{ADDITIONAL EXPERIMENTS DETAILS AND RESULTS}
\label{app:exp_all}

In this section, we give additional details on our experimental setup (Section~\ref{app:setting_algo}) and synthetic data generation process (Section~\ref{app:data_gen}), and provide additional results (Section~\ref{appendix_experiment}). All results are summarized in Table~\ref{summary_experiments}.

\begin{table*}[h!]
\caption{Summary of the experiments of the paper.}
\label{summary_experiments}
\begin{center}
\resizebox{\linewidth}{!}{
\begin{tabular}{llll}
\toprule
\textbf{Dataset} & \textbf{Model} & \textbf{Reference} & \textbf{Take-away message}\\
\hline
Synthetic & LogReg & Figs~\ref{logistic_heterogene_K50_K100},\ref{logistic_10_20_epochs_accuracy}-\ref{logistic_10_20_epochs_data}& Superiority of \texttt{DP-SCAFFOLD} over \texttt{DP-FedAvg} \\
FEMNIST & LogReg & Figs~\ref{femnist_and_mnist_heterogene_K50} (first row),\ref{femnist_10_20_epochs_accuracy}-\ref{femnist_10_20_epochs_diss} & Superiority of \texttt{DP-SCAFFOLD} over \texttt{DP-FedAvg} \\
MNIST & DNN & Fig~\ref{femnist_and_mnist_heterogene_K50} (second row) & Superiority of \texttt{DP-SCAFFOLD} over \texttt{DP-FedAvg}\\
\hline
Synthetic & LogReg & Tables~\ref{table_experiments_varying_k_sigma_gaussian_5_5},\ref{table_experiments_varying_k_sigma_gaussian_0_0} & Tradeoffs between $K$, $T$  and $\sigma_g$ under fixed $\epsilon$ \\
\hline
Synthetic, FEMNIST & LogReg & Fig~\ref{trade_off_user_ratio} (first, second rows) & Tradeoffs between $T$ and $l$ under fixed $\epsilon$\\
MNIST & DNN & Fig~\ref{trade_off_user_ratio} (third row) & Tradeoffs between $T$ and $l$ under fixed $\epsilon$\\
\hline
Synthetic, FEMNIST & LogReg & Fig~\ref{trade_off_sample_ratio} (first, second rows) & Tradeoffs between $T$ and $s$ under fixed $\epsilon$\\
MNIST & DNN & Fig~\ref{trade_off_sample_ratio} (third row) & Tradeoffs between $T$ and $s$ under fixed $\epsilon$\\
\bottomrule
\end{tabular}
}
\end{center}
\end{table*}

\subsection{Algorithms Setup}
\label{app:setting_algo}

\paragraph{Hyperparameter tuning.} 

We tuned the step-size hyperparameter $\eta_0$ for each dataset, each algorithm and each version (with or without DP) over a grid of 10 values with the lowest level of heterogeneity (5-fold cross validation conducted on the training set). We then kept the same $\eta_0$ for experiments with higher heterogeneity.

\paragraph{Clipping heuristic.} 

Setting a good clipping threshold $\mathcal{C}$ while preserving accuracy can be difficult  \citep{dp_fedavg_user_level}. Indeed, if $\mathcal{C}$ is too small, the clipped gradients may become biased, thereby affecting the convergence rate. On the other hand, if $\mathcal{C}$ is too large, we have to add more noise to stochastic gradients to ensure differential privacy (since the variance of the Gaussian noise is proportional to $\mathcal{C}^2$). In practice, we follow the strategy proposed by \citet{dp_deep_l}, which consists in setting $\mathcal{C}$ as the median of the norms of the unclipped gradients over each stage of local training. Throughout the iterations, $\mathcal{C}$ will then decrease. However, we are aware that locally setting $\mathcal{C}$ may leak information to the server about the magnitude of stochastic gradients. We here consider this leak as minor and neglect its impact on privacy guarantees. Adaptive clipping \citep{dp_adaptive_clipping} could be used to mitigate these concerns.

\paragraph{Deep neural network.} To prove the advantage of \texttt{DP-SCAFFOLD} with non-convex objectives, we perform experiments on MNIST data with a deep neural network. Its architecture is inspired by the network used by \cite{dp_deep_l} for \texttt{DP-SGD}. We use a feedforward neural network with ReLU units and softmax of 10
classes (corresponding to the 10 digits of MNIST) with cross-entropy loss. Our network combines a 60-dimensional Principal Component Analysis (PCA) projection layer and a hidden layer with 200 hidden units. Since the error bound for DP-FL algorithms grows linearly with the dimension of the parameters for non-convex objectives (see Theorems~\ref{theorem_utility_scaffold},\ref{theorem_utility_fedavg}), the PCA layer is actually necessary to prevent the curse of dimensionality due to the addition of noise for privacy. Note that neural networks with more layers would also suffer from the curse of dimensionality in the DP-FL context. Using a batch size of 500, we can reach a test accuracy higher than 98\% with this architecture in 100 epochs under the centralized setting. This result is consistent with what can
be achieved with a vanilla neural network \citep{lecun1998gradient}. In our framework, the PCA procedure is applied as preprocessing to all the samples \textit{without differential privacy}. To avoid privacy leakage at this step, it would need to include a private mechanism, whose privacy loss should be added to that of the training phase \citep[see the discussion in][Section 4]{dp_deep_l}.

\subsection{Synthetic Data Generation}
\label{app:data_gen}


Each ground-truth model for user $i$ consists in weights $W_i \in \mathbb{R}^{d'\times 10}$ and bias $b_i \in \mathbb{R}^{10}$, which are sampled from the following distributions: $ W_i | u_i \sim \mathcal{N}_{d' \times 10}(u_i,\text{Id})$ and $b_i |u_i' \sim \mathcal{N}_{10}(u_i', \text{Id})$ where $u_i \sim \mathcal{N}_{d' \times 10}(0, \alpha \text{Id})$ and $u_i' \sim \mathcal{N}_{10}(0, \alpha \text{Id})$. The data matrix $X_i$ of user $i$ is sampled according to $X_i |v_i \sim \mathcal{N}_{d'}(v_i, \Sigma)$ where $\Sigma$ is the covariance matrix defined by its diagonal $\Sigma_{j,j}=j^{-1.2}$ and $v_i |B_i \sim \mathcal{N}_{d'}(B_i,\text{Id})$ where $B_i \sim \mathcal{N}_{d'}(0,\nu\text{Id})$.
The labels are obtained by independently changing the labels given by the ground truth model with probability $0.05$.


\subsection{Additional Experimental Results}
\label{appendix_experiment}

We provide below more results on the experiments described in Section~\ref{sec:experiments}, including additional metrics and more extensive choices of heterogeneity levels. We also present additional experiments with higher privacy, including a study on the effect of sampling parameters $l$ and $s$ (and the trade-off with $T$) on privacy and convergence.
\vspace{-.5em}
\paragraph{Metrics.} 

To measure the convergence and performance of the algorithms at any communication round $t\in [T]$, we consider the following metrics:
\begin{itemize}[topsep=0pt,itemsep=1pt,leftmargin=*,noitemsep,wide]
    \item \textit{Accuracy(t)}: the average test accuracy of the model over all users,
    \item \textit{Train Loss(t)}$=\log_{10} (F(x^t)-F(x^*))$: the log-gap between the objective function evaluated at parameter $x^t$ and its minimum,
    \item \textit{Train Gradient Dissimilarity(t)}$=\frac{1}{M}\sum_{i=1}^M ||\nabla F_i(x^t)||^2 - ||\nabla F(x^t)||^2$, and similarly the \textit{Train Gradient Log-Dissimilarity(t)}$=\log\big(\frac{1}{M}\sum_{i=1}^M ||\nabla F_i(x^t)||^2\big) - \log\big(||\nabla F(x^t)||^2\big)$, which measure how the local gradients differ from the global gradient (i.e., the average across users) when evaluated at $x^t$, and hence quantify the \textit{user-drift} over the rounds of communication. 
\end{itemize}
\vspace{-.5em}

\subsubsection{Results with other metrics and different heterogeneity levels}
We provide below some additional results which complement those provided in Section~\ref{sec:experiments}. 

\begin{itemize}
    \item \textbf{Synthetic data.} We plot in Fig.~\ref{logistic_10_20_epochs_accuracy} the evolution of the accuracy over the rounds, which is consistent with the evolution of the train loss in Fig.~\ref{logistic_heterogene_K50_K100}. While the variance of the accuracy for \texttt{DP-FedAvg} grows with the heterogeneity, the results of \texttt{DP-SCAFFOLD-warm} are not affected. We can observe an average difference of $10\%$ in the accuracy for these two algorithms over the various heterogeneity settings. We provide in Fig.~\ref{logistic_10_20_epochs_diss} the evolution of the gradient dissimilarity for the same settings as in Fig.~\ref{logistic_heterogene_K50_K100} and Fig.~\ref{logistic_10_20_epochs_accuracy}, which once again shows a better convergence of \texttt{DP-SCAFFOLD-warm} compared to \texttt{DP-FedAvg} for the same privacy level. We also provide the evolution of the train loss when varying a single heterogeneity parameter: either $\alpha$ (which controls \textit{model} heterogeneity across users) in Fig.~\ref{logistic_10_20_epochs_model} or $\beta$ (which controls \textit{data} heterogeneity across users) in Fig.~\ref{logistic_10_20_epochs_data}. In both of these settings, \texttt{DP-SCAFFOLD-warm} performs consistently better.
    \item \textbf{FEMNIST data.} In Fig.~\ref{femnist_10_20_epochs_accuracy}, we put in perspective the accuracy observed with $K=50$ (see Fig.~\ref{femnist_and_mnist_heterogene_K50}, first row) with the one observed with $K=100$. We also show the evolution of the gradient dissimilarity in Fig.~\ref{femnist_10_20_epochs_diss}. These results on real data again show the superior performance of \texttt{DP-SCAFFOLD-warm}, consistently with our observations on synthetic data. 
\end{itemize}

\begin{figure*}[h!]
    \centering
    \includegraphics[width=\linewidth]{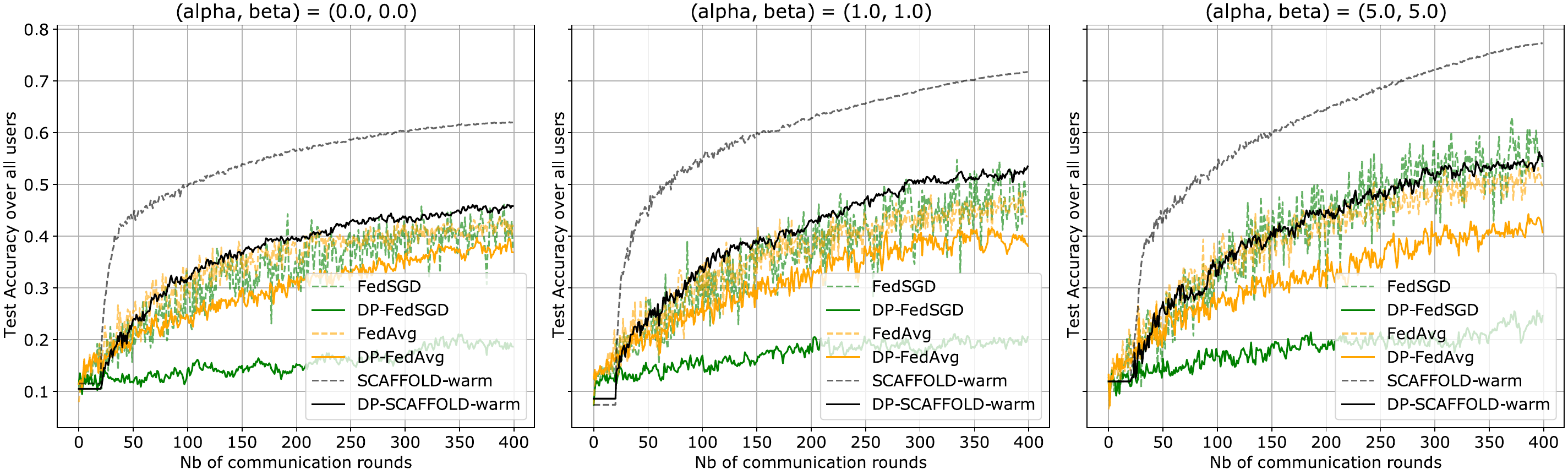} \\
\vspace{0.6em}
    \includegraphics[width=\linewidth]{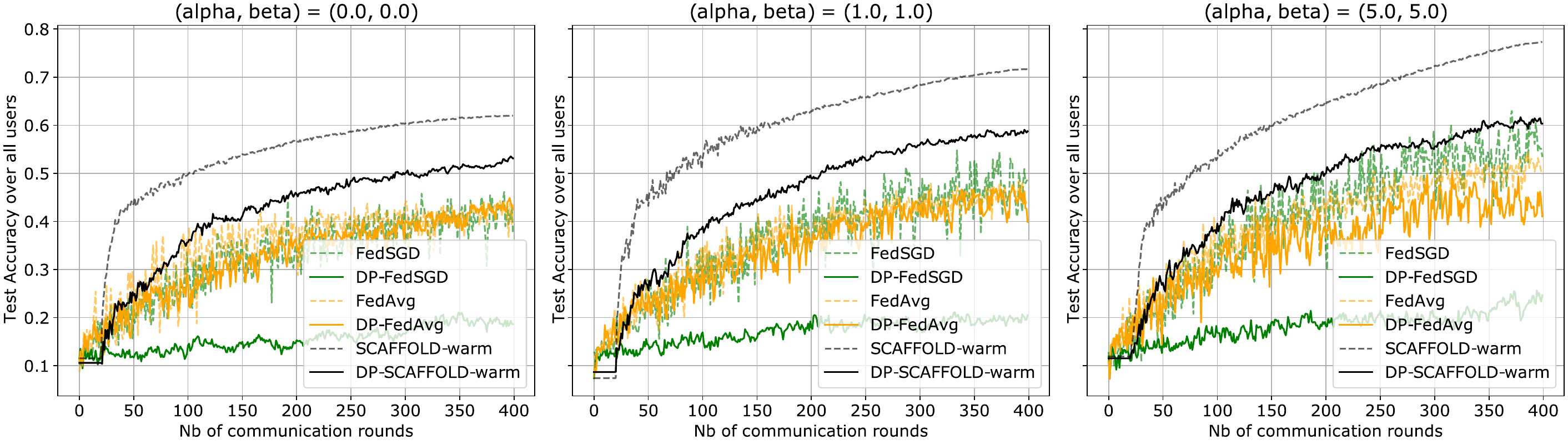}
    \caption{Test Accuracy On Synthetic Data ($\epsilon=13$). First Row: $K=50$; Second Row: $K=100$.}
    \label{logistic_10_20_epochs_accuracy}
\end{figure*}

\begin{figure*}[h!]
    \centering
    \includegraphics[width=\linewidth]{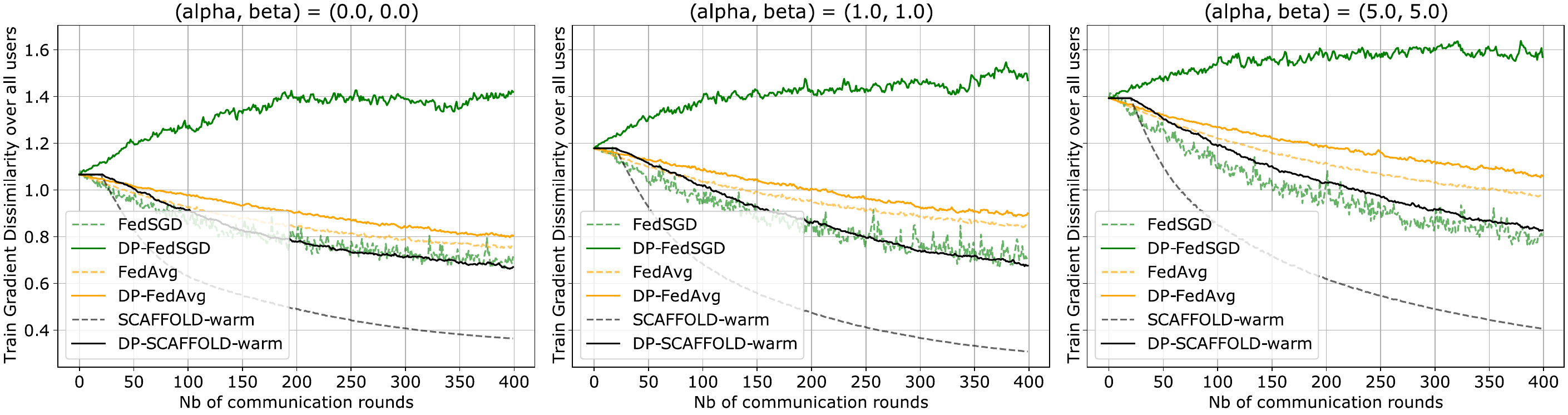} \\
\vspace{0.6em}
    \includegraphics[width=\linewidth]{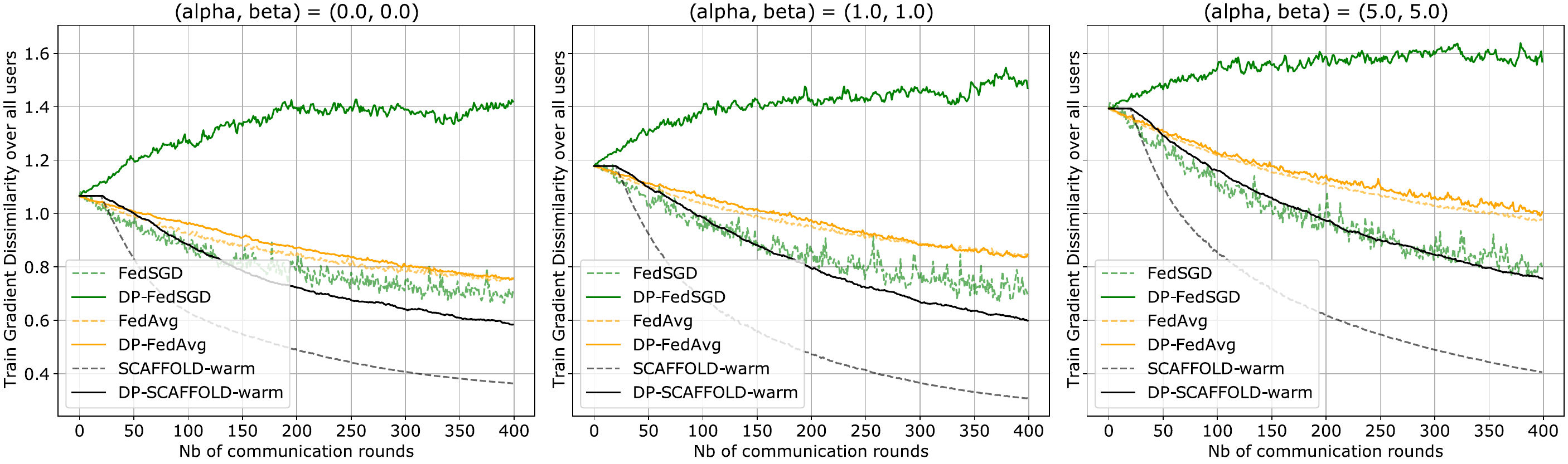}
    \caption{Train Gradient Dissimilarity On Synthetic Data ($\epsilon=13$). First Row: $K=50$; Second Row: $K=100$.}
    \vspace{-2em}
    \label{logistic_10_20_epochs_diss}
\end{figure*}

\begin{figure*}[h!]
    \centering
    \includegraphics[width=\linewidth]{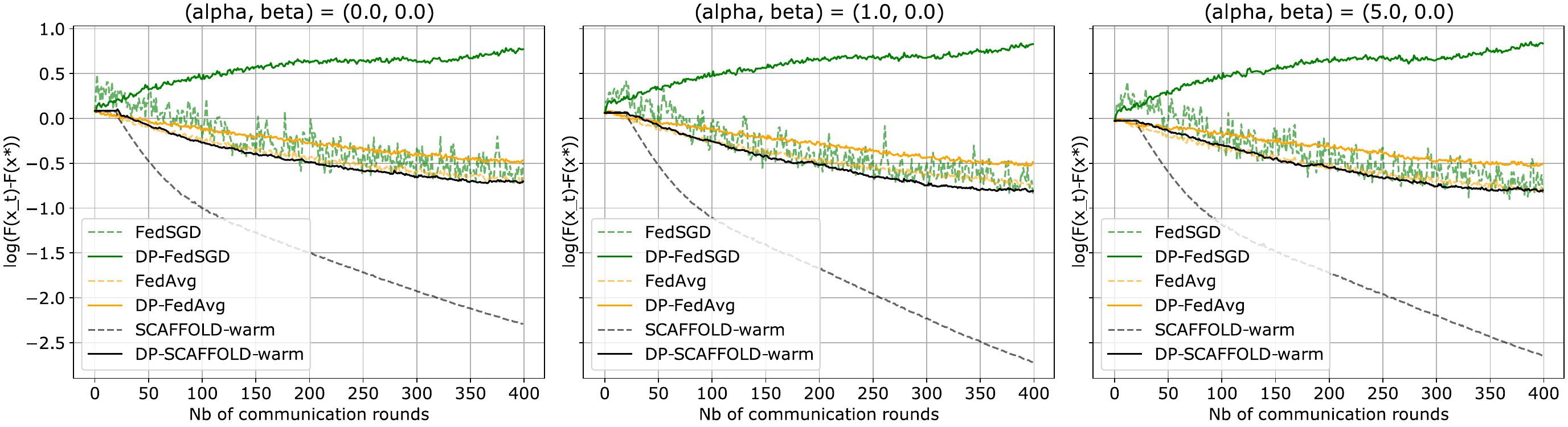} \\
\vspace{0.6em}
    \includegraphics[width=\linewidth]{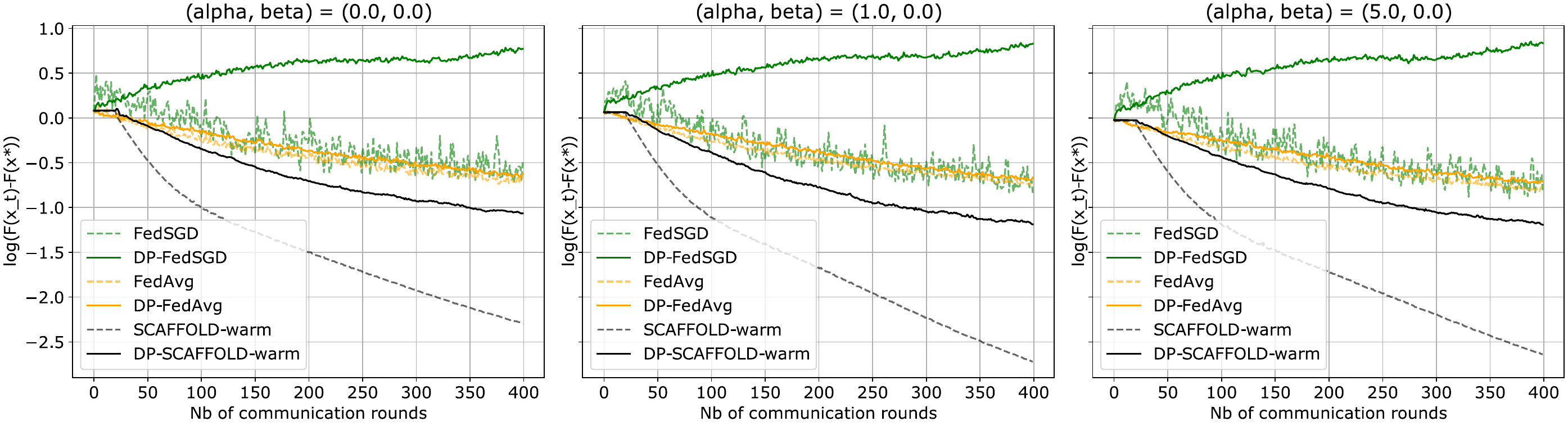}
    \caption{Model Heterogeneity (Varying $\alpha$): Train Loss On Synthetic Data ($\epsilon=13$). First Row: $K=50$; Second Row: $K=100$.}
    \label{logistic_10_20_epochs_model}
\end{figure*}

\begin{figure*}[h!]
    \centering
    \includegraphics[width=\linewidth]{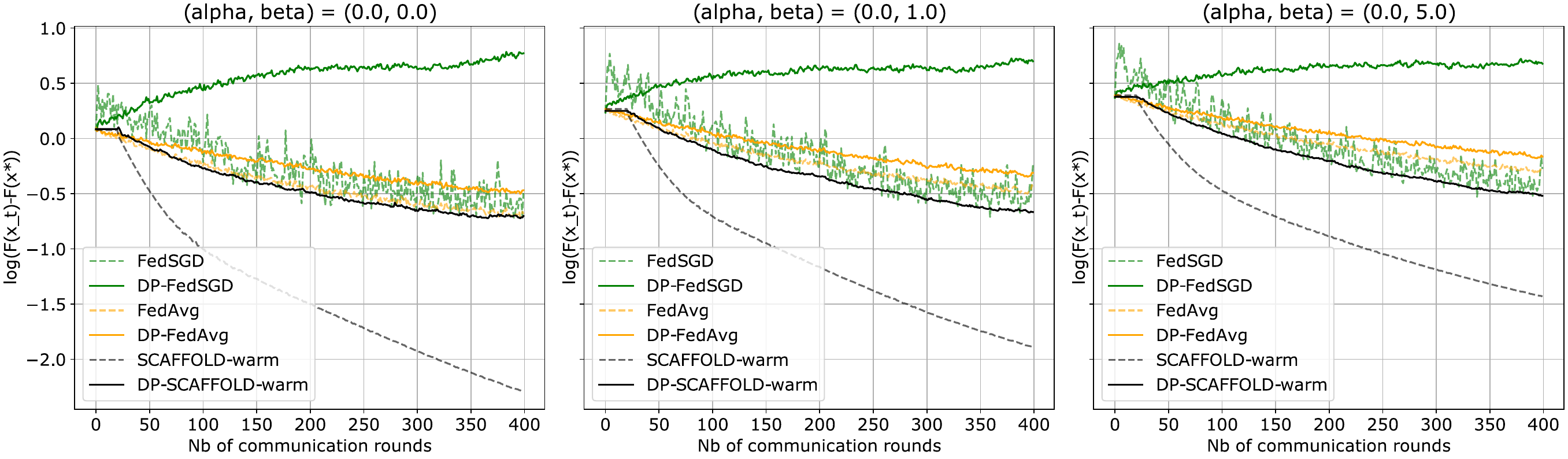} \\
\vspace{0.6em}
    \includegraphics[width=\linewidth]{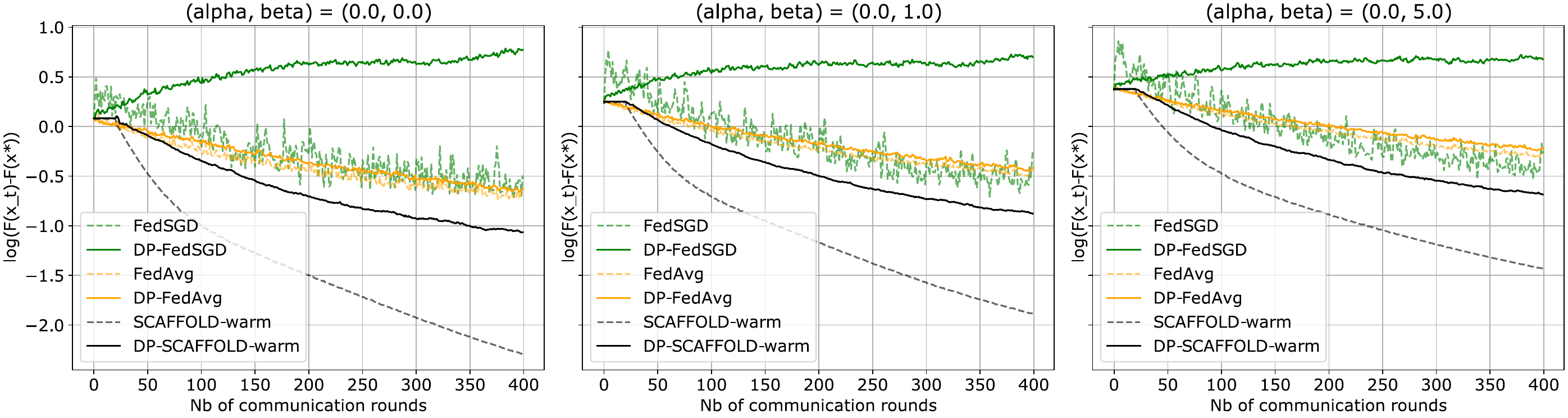}
    \caption{Data Heterogeneity Varying $\beta$): Train Loss On Synthetic Data ($\epsilon=13$). First Row: $K=50$; Second Row: $K=100$.}
    \vspace{-1em}
    \label{logistic_10_20_epochs_data}
\end{figure*}
\clearpage

\begin{figure*}[h!]
    \centering
    \includegraphics[width=\linewidth]{figures_main_aistats/femnist_accuracy_k_10.pdf} \\
\vspace{0.6em}
    \includegraphics[width=\linewidth]{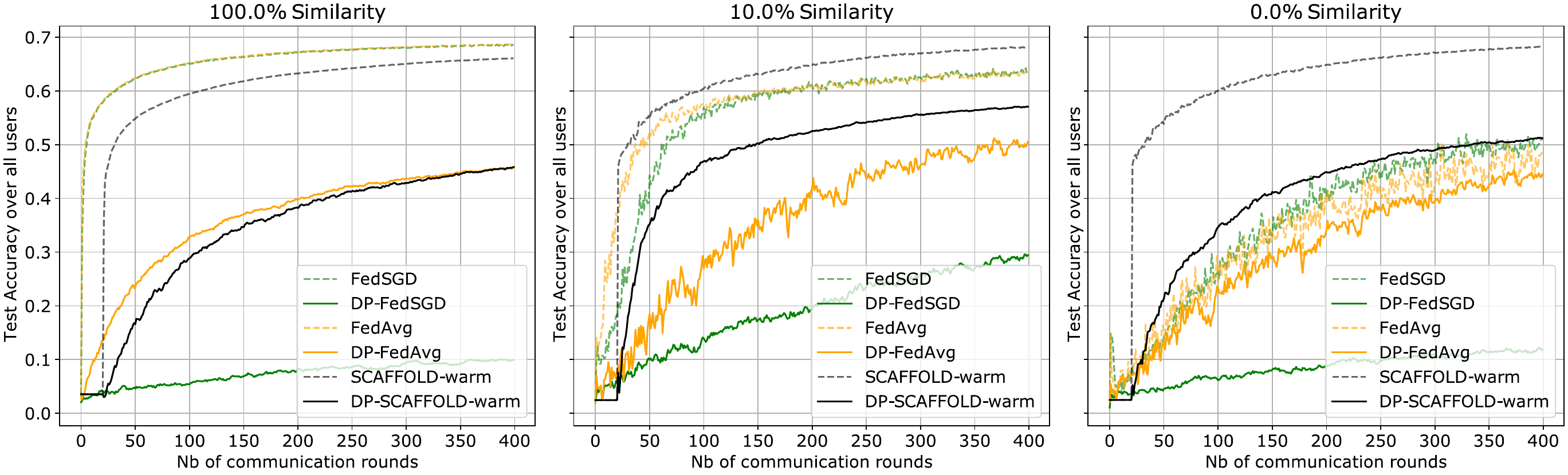}
    \vspace{-1.5em}
    \caption{Test Accuracy On FEMNIST Data (LogReg) with $\epsilon=11.5$. First Row: $K=50$; Second Row: $K=100$.}
    \label{femnist_10_20_epochs_accuracy}
\end{figure*}

\begin{figure*}[h!]
    \centering
    \includegraphics[width=\linewidth]{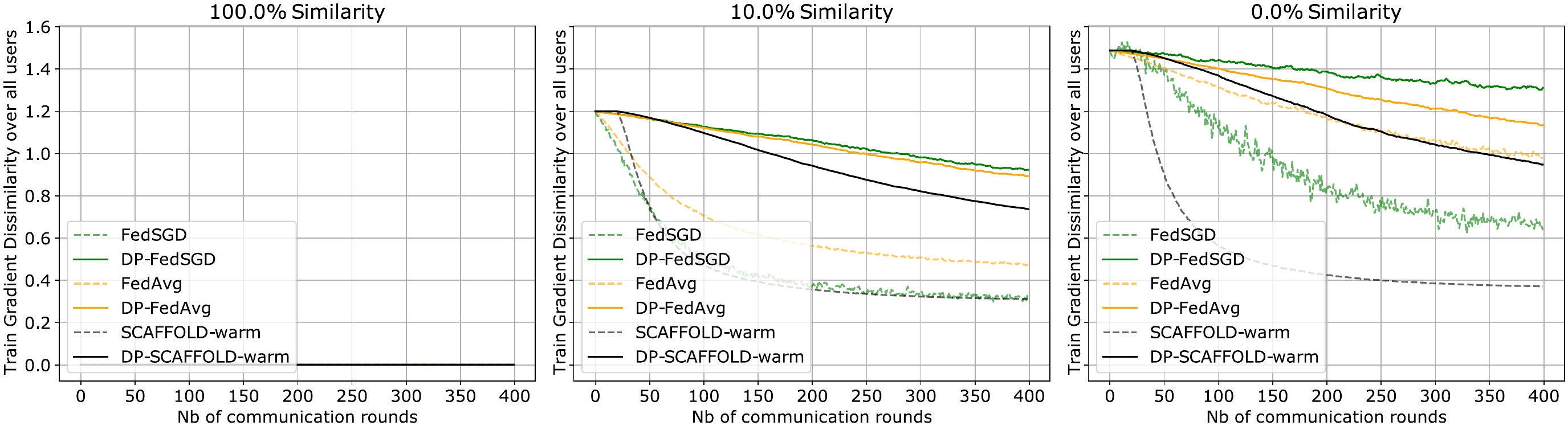} \\
\vspace{0.6em}
    \includegraphics[width=\linewidth]{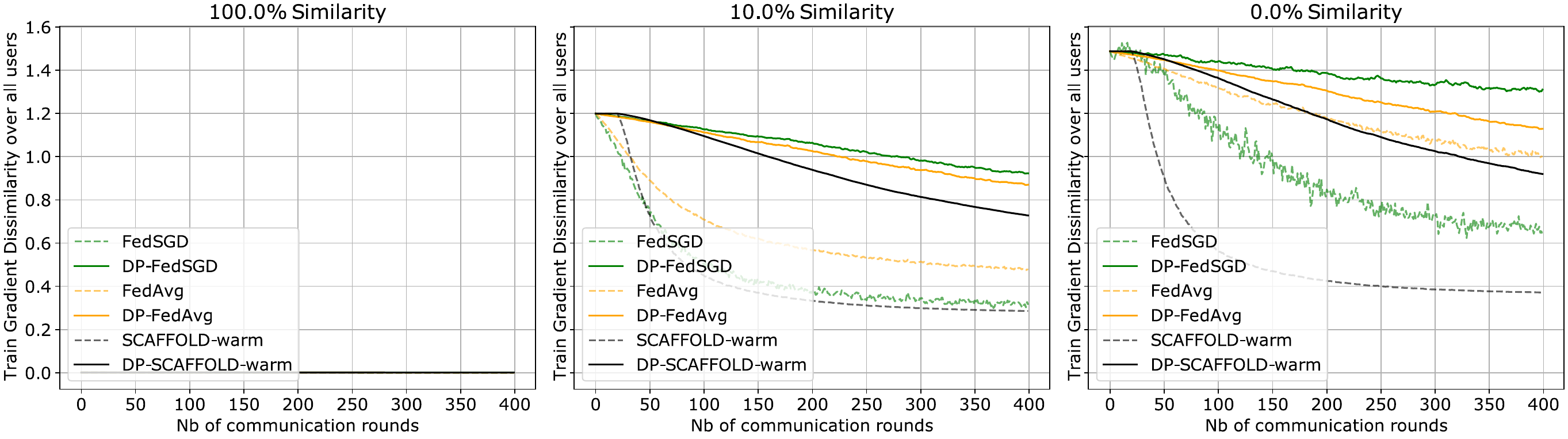}
    \caption{Train Gradient Dissimilarity On FEMNIST Data (LogReg) with $\epsilon=11.5$. First Row: $K=50$; Second Row: $K=100$.}
    \label{femnist_10_20_epochs_diss}
\end{figure*}

\newpage

\subsubsection{Additional results on the trade-offs between $K$, $T$ and $\sigma_g$}

In Section~\ref{sec:experiments} of the main text, we presented these trade-offs for \texttt{DP-SCAFFOLD} under a high level of heterogeneity (see Table~\ref{table_experiments_varying_k_sigma_gaussian_5_5}). We provide below the results of these trade-offs for \texttt{DP-SCAFFOLD} with a lower level of heterogeneity $(\alpha, \beta)=(0,0)$ (see Table~\ref{table_experiments_varying_k_sigma_gaussian_0_0}).
We consider again synthetic data with $\epsilon=3$ towards any third party.
To report the test accuracy which is obtained \textit{at the end of the iterations}, we proceed in two steps: (i) we compute the average test accuracy over the last 10\% of the iterations for each random run, (ii) we calculate the mean and the standard deviation over the 3 runs and report them in the tables. We highlight in bold for each row (i.e., for each value of $\sigma_g$)  in Tables~\ref{table_experiments_varying_k_sigma_gaussian_5_5},\ref{table_experiments_varying_k_sigma_gaussian_0_0} the best accuracy score obtained over all values of $K$.

\begin{table*}[t]
\caption{Test Accuracy ($\%$) For \texttt{DP-SCAFFOLD} On Synthetic Data $(\epsilon=3,l=0.05, s=0.2,(\alpha, \beta)=(0,0))$.}
\label{table_experiments_varying_k_sigma_gaussian_0_0}
\begin{center}
\resizebox{\linewidth}{!}{
\begin{tabular}{cclclclclcl}
\toprule
\textbf{$\sigma_g$} &\multicolumn{2}{c}{$K=1$} &\multicolumn{2}{c}{$K=5$}&\multicolumn{2}{c}{$K=10$} & \multicolumn{2}{c}{$K=20$} & \multicolumn{2}{c}{$K=40$} \\
\cmidrule(l){1-1}\cmidrule(l){2-3}\cmidrule(l){4-5}\cmidrule(l){6-7}\cmidrule(l){8-9}\cmidrule(l){10-11}
$10$ &  $31.29_{\pm 0.50}$ & $T=542$
     &$\mathbf{44.37}_{\pm 0.15}$ & $T=488$
     & $44.17_{\pm 0.56}$ & $T=428$
     & $41.71_{\pm 0.36}$ & $T=324$
     & $25.92_{\pm 0.90}$ & $T=72$\\
$20$ &  $28.31_{\pm 1.15}$ & $T=545$
     & $41.97_{\pm 0.77}$ & $T=502$
     & $\mathbf{43.27}_{\pm 0.90}$ & $T=451$
     & $41.13_{\pm 0.55}$ & $T=352$
     & $28.20_{\pm 2.23}$ & $T=83$\\
$40$ & $21.07_{\pm 0.41}$ & $T=546$
     & $33.01_{\pm 1.20}$ & $T=505$
     & $\mathbf{35.96}_{\pm 0.84}$ & $T=457$
     & $32.82_{\pm 1.11}$ & $T=360$
     & $23.49_{\pm 1.70}$ & $T=86$\\
$80$ & $17.09_{\pm 1.31}$ & $T=546$
     & $21.84_{\pm 0.96}$ & $T=506$
     & $\mathbf{25.92}_{\pm 0.59}$ & $T=458$
     & $24.02_{\pm 1.27}$ & $T=362$
     & $17.95_{\pm 1.14}$ & $T=87$\\
$160$ & $15.24_{\pm 1.63}$ & $T=546$
     & $15.37_{\pm 0.28}$ & $T=506$
     & $\mathbf{20.09}_{\pm 1.51}$ & $T=458$
     & $18.09_{\pm 1.86}$ & $T=362$
     & $15.38_{\pm 0.87}$ & $T=87$\\
\bottomrule
\end{tabular}
}
\end{center}
\end{table*}

We observe the same trends as the ones described for Table~\ref{table_experiments_varying_k_sigma_gaussian_5_5} in the main text. Indeed, our results clearly show a trade-off between $T$ and $K$ for \texttt{DP-SCAFFOLD} under a fixed privacy budget. If $K$ is set too low or too large, the performance of the algorithm is sub-optimal either because $T$ has to be chosen too low or because control variates are inefficient under few local updates. 

Moreover, we observe that setting $\sigma_g$ to a high value does not necessarily improve the gain in the number of communication rounds. In particular, for high values of $\sigma_g$, the calculation of the privacy bound does not allow to obtain a large increase in $T$ ($T$ does not change between $\sigma_g=40$ and $\sigma_g=160$ for any value of $K$), which thus leads to poor performance. This can be explained by the fact that the upper bound for the subsampled Gaussian mechanism given in Lemma~\ref{lemma_subsampling_rdp} does not converge to $0$ when $\sigma_g$ is very large. Indeed, given a subsampling ratio $q<1$, we can observe that this bound in the asymptotic regime becomes $\frac{1}{\alpha-1}\log(1+2\sum_{j=3}^\alpha q^j \binom{\alpha}{j})$, which is positive. Hence, by increasing the value of $\sigma_g$, we cannot hope to \textit{inconditionally} increase the number of compositions of the mechanism under a fixed privacy budget, if $q$ is taken too large. This artefact thus proves how small subsampling ratios have to be chosen to compute differential privacy in practice. In our experiments, we can clearly notice that this asymptotic regime is reached as soon as $\sigma_g$ is greater than $80$. One would have to consider lower subsampling ratios (for instance $l=10^{-4}, s=10^{-4}$), to obtain different values for $T$ when $\sigma_g=80$ and $\sigma_g=160$. We investigate such trade-offs in the next section.

\clearpage

\subsubsection{Experiments under higher privacy regime and role of sampling parameters}

In Section~\ref{sec:experiments} of the main text, given private parameter $\sigma_g$, we conducted experiments for (i) convex objective on FEMNIST ($\sigma_g=30$) and synthetic data ($\sigma_g=60$) and (ii) non-convex objective on MNIST data ($\sigma_g=30$). Considering the sampling parameters we used ($l=0.2, s=0.2$), this setting allows to reach, towards any third party, $(11.4, \delta)$-DP for FEMNIST data and $(13, \delta)$-DP for synthetic data. In the case of MNIST data, we obtain $(7.2, \delta)$-DP towards any third party for $(K,T)=(50, 100)$. Although these experiments only allow ``low privacy'', stronger DP guarantees can be obtained by simply \textit{decreasing} the subsampling ratios (thus amplifying the privacy). For instance, setting $l'=l/4=0.05$ in case of synthetic data would provide $(4.2, \delta)$-DP for $(K,T)=(50, 400)$. We present below some results on numerical trade-offs under a higher privacy regime.

As observed in the experiments of the main text and consistently with previous work \citep{fl_scaffold}, we observe the superiority of \texttt{SCAFFOLD} over \texttt{FedAvg} and \texttt{FedSGD} under heterogeneous data, but most importantly our results show that this hierarchy is preserved in our DP-FL framework with privacy constraints: this is especially clear with growing heterogeneity and with growing number $K$ of local updates. Besides this, the results provided for logistic regression in the privacy regime numerically demonstrate that \textbf{\texttt{DP-SCAFFOLD} sometimes even outperforms (non-private) \texttt{FedAvg}} despite the local injection of Gaussian noise, see for instance Fig.~\ref{logistic_heterogene_K50_K100}-\ref{trade_off_sample_ratio} and Fig.~\ref{femnist_and_mnist_heterogene_K50} (bottom row), and to a lesser extent Fig.~\ref{trade_off_user_ratio}. Therefore, our results are quite promising with respect to obtaining efficient DP-FL algorithms under heterogeneous data for higher privacy regimes.

\newpage
\paragraph{Trade-offs between $l$ and $T$.} In this section, we compare the robustness of \texttt{DP-FedAvg} and \texttt{DP-SCAFFOLD} w.r.t. user sampling ratio $l$, under a fixed privacy bound $\epsilon$ towards a third party. We fix parameters $s=0.2, K=10$ and report in Figure~\ref{trade_off_user_ratio} the evolution of the test accuracy of these algorithms for $l \in \{0.08, 0.1, 0.12\}$ over the communication rounds. For each value of $l$, the number of communication rounds $T_l$ is determined to be \textit{maximal} w.r.t. to the privacy bound, so that the desired privacy level is achieved for the output after $T_l$ rounds. These values of $T_l$ are represented on Figure~\ref{trade_off_user_ratio} with red vertical lines (note that the higher $l$, the lower $T_l$). We conduct this experiment on the following datasets: synthetic data with $\epsilon=5$ (Fig~\ref{trade_off_user_ratio}, first row), FEMNIST data with LogReg model, $\epsilon=5$ (Fig~\ref{trade_off_user_ratio}, second row) and MNIST data with DNN model, $\epsilon=3$ (Fig~\ref{trade_off_user_ratio}, third row).

Our results first show that \texttt{DP-SCAFFOLD} achieves much better performance than \texttt{DP-FedAvg} in these high privacy regimes. The superiority of \texttt{DP-SCAFFOLD} towards \texttt{DP-FedAvg} is especially strong under high heterogeneity: we notice a gap of 20\% in the accuracy score with synthetic data for $(\alpha, \beta)=(5,5)$ and FEMNIST data for $\gamma=0\%$ with logistic regression model, a gap of 10\% in the accuracy score for MNIST data with $\gamma=0\%$ and DNN. Furthermore, \texttt{DP-SCAFFOLD} is robust to a low value of user sampling parameter $l$. We observe that we obtain the best performance by choosing $l=0.08$ (the lowest value considered), which allows to set $T_l$ to a high value. Note that the evolution of the accuracy is similar for all values of $l$. Therefore, setting a low $l$ provides an effective way to achieve good accuracy with higher privacy. 
\vspace{-0.2em}
\begin{figure*}[h!]
    \centering
    \includegraphics[width=\linewidth,height=4.8cm]{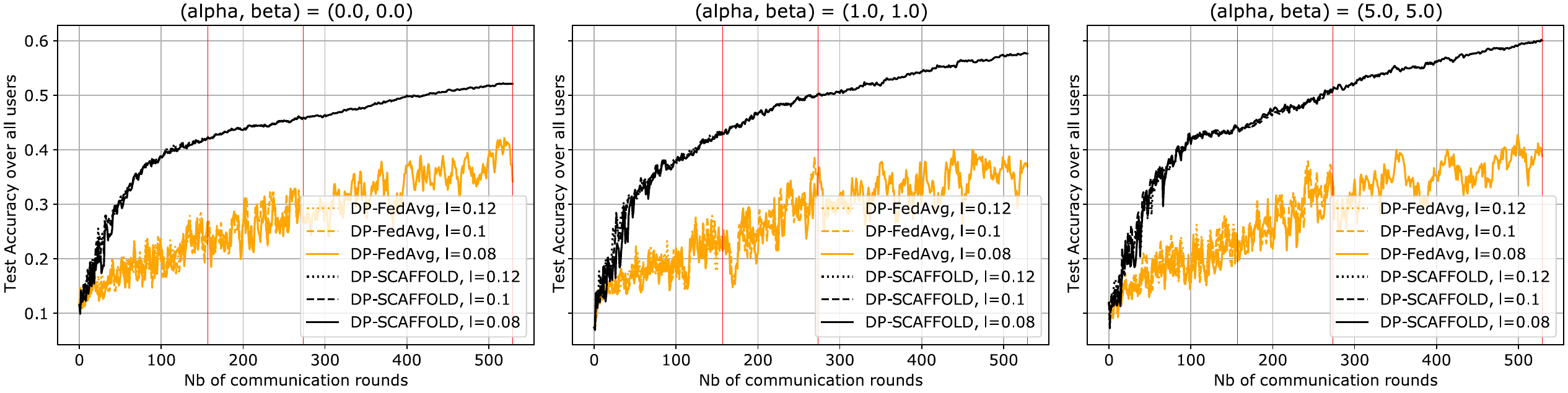} \\
\vspace{0.6em}
    \includegraphics[width=\linewidth, height=4.8cm]{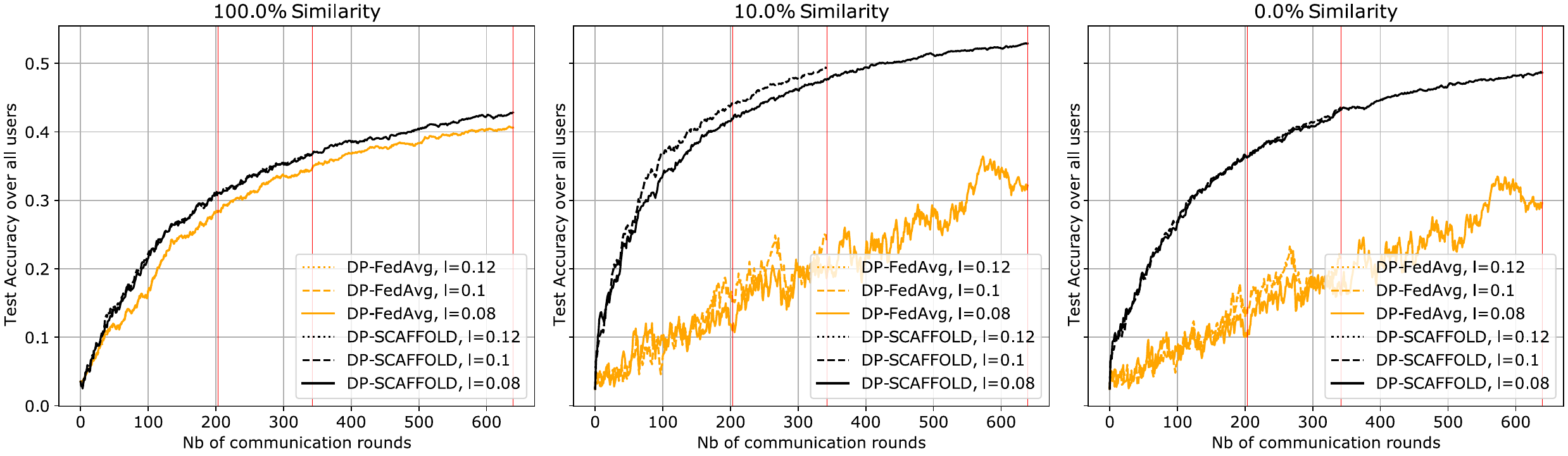}\\
\vspace{0.6em}
    \includegraphics[width=\linewidth, height=4.8cm]{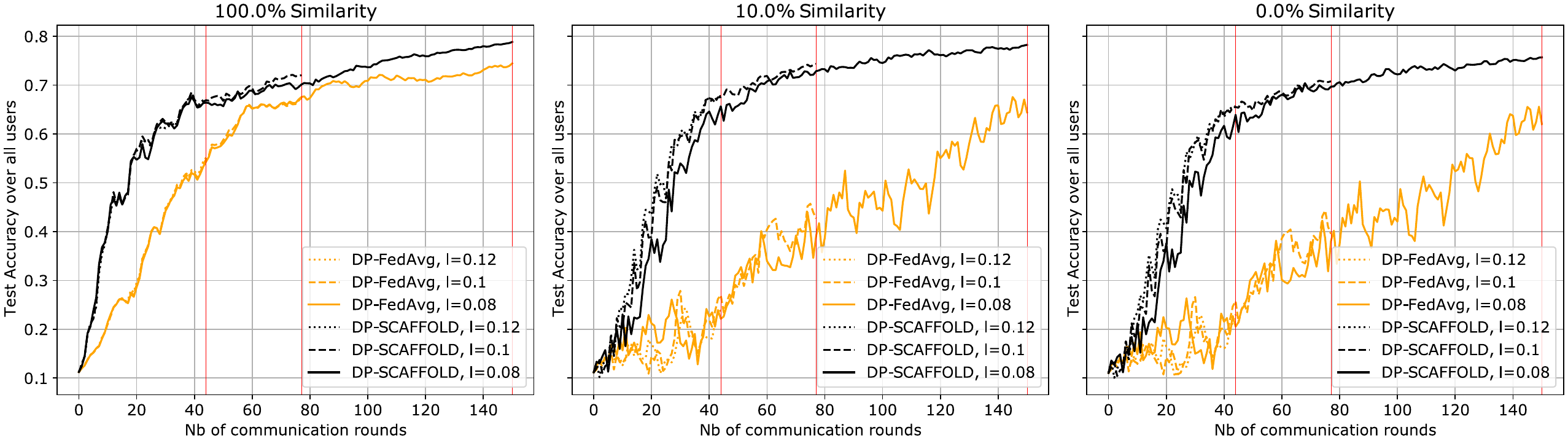}
    \caption{Test Accuracy With Various Values For $l$ Under Fixed Privacy Budget. First Row: Synthetic data ($\epsilon=5$); Second Row: FEMNIST data (LogReg, $\epsilon=5$); Third Row: MNIST (DNN, $\epsilon=3$).}
    \label{trade_off_user_ratio}
\end{figure*}

\clearpage
\paragraph{Trade-offs between $s$ and $T$.} In this section, we compare the behavior of \texttt{DP-FedAvg} and \texttt{DP-SCAFFOLD} w.r.t. data sampling ratio $s$ under a fixed privacy bound $\epsilon$ towards a third party. We fix parameters $l=0.1, K=10$ and report in Figure~\ref{trade_off_sample_ratio} the evolution of the test accuracy of these algorithms for $s \in \{0.05, 0.1, 0.2\}$ over the communication rounds. For each value of $s$, the number of communication rounds $T_s$ is determined to be \textit{maximal} w.r.t. to the privacy bound, so that the desired privacy level is achieved for the output after $T_s$ rounds. These values of $T_s$ are represented on Figure~\ref{trade_off_sample_ratio} with red vertical lines (note that the higher $s$, the lower $T$). We conduct this experiment for the following datasets: synthetic data with $\epsilon=5$ (Fig~\ref{trade_off_sample_ratio}, first row), FEMNIST data with LogReg model, $\epsilon=5$ (Fig~\ref{trade_off_sample_ratio}, second row) and MNIST data with DNN model, $\epsilon=3$ (Fig~\ref{trade_off_sample_ratio}, third row).

Our results confirm that \texttt{DP-SCAFFOLD} leads to better performance than \texttt{DP-FedAvg} with any value of $s\in\{0.2, 0.1, 0.05\}$ under heterogeneity (as expected, we obtain very similar accuracy scores for these two algorithms with $\gamma=100\%$ for FEMNIST and MNIST data). However, this superiority decreases as $s$ decreases. Consider for instance FEMNIST data with 10\% similarity: the gap in accuracy drops from 30\% with $s=0.2$, to less than 20\% with $s=0.1$. Our results seem to show that we obtain better performance with a high value of $s$, although it implies to set $T_s$ to a lower value. This is contrast to the effect of $l$ shown in Fig~\ref{trade_off_user_ratio}.

\begin{figure*}[h!]
    \centering
    \includegraphics[width=\linewidth,height=5cm]{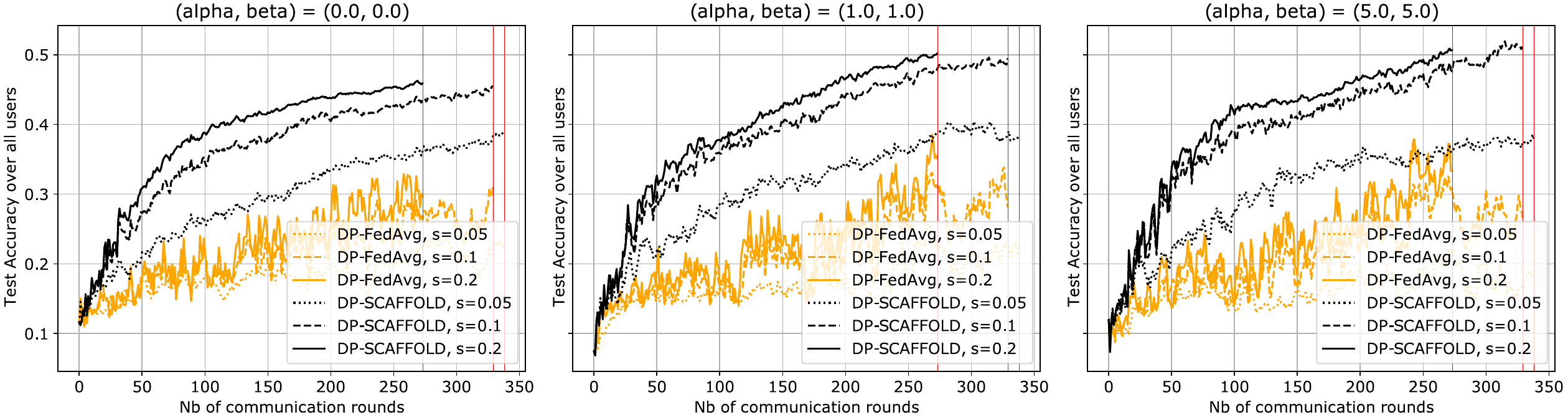} \\
\vspace{0.6em}
    \includegraphics[width=\linewidth,height=5cm]{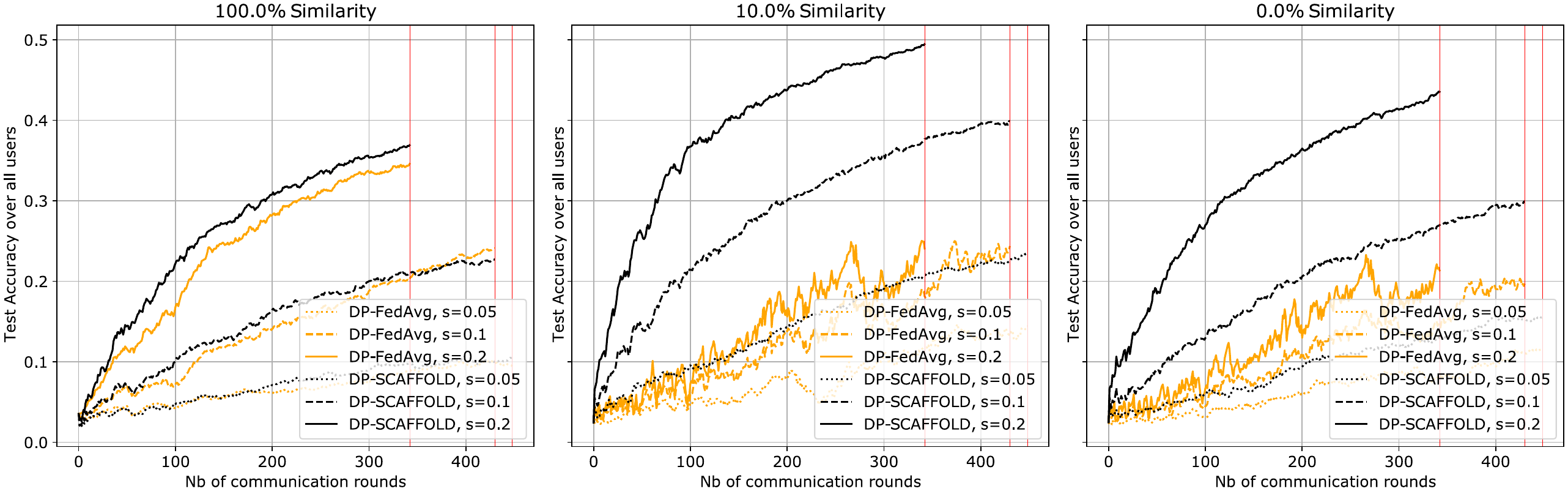}\\
\vspace{0.6em}
    \includegraphics[width=\linewidth,height=5cm]{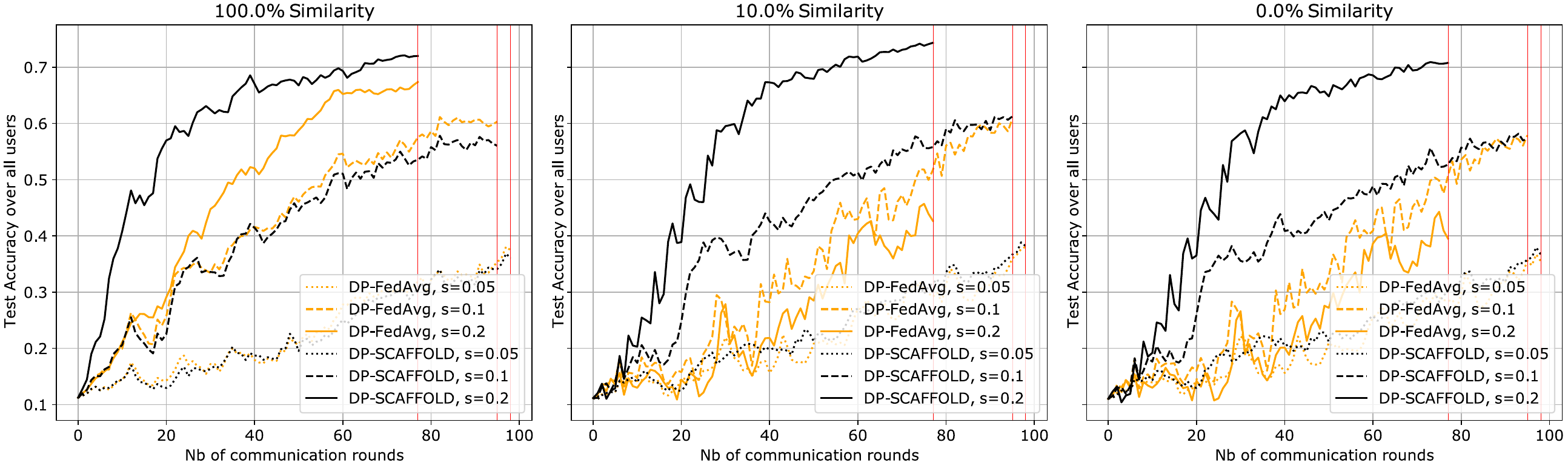}
    \caption{Test Accuracy With Various Values For $s$ Under Fixed Privacy Budget. First Row: Synthetic Data ($\epsilon=5$); Second Row: FEMNIST Data (LogReg, $\epsilon=5$); Third Row: MNIST Data (DNN, $\epsilon=3$).}
    \label{trade_off_sample_ratio}
\end{figure*}

\end{document}